\newtheorem{thm}{Theorem}[section]
\newtheorem{lem}{Lemma}[section]
\newtheorem{cor}{Corollary}[section]
\numberwithin{equation}{section}
\begin{document}
\makeatletter

\begin{center}
\large{\bf Theoretical analysis of Adam using hyperparameters close to one without Lipschitz smoothness}\\
\small{This work was supported by JSPS KAKENHI Grant Number 21K11773.}
\end{center}\vspace{3mm}

\begin{center}
\textsc{Hideaki Iiduka}\\
Department of Computer Science, 
Meiji University,
1-1-1 Higashimita, Tama-ku, Kawasaki-shi, Kanagawa 214-8571 Japan. 
(iiduka@cs.meiji.ac.jp)
\end{center}

\vspace{2mm}

\footnotesize{
\noindent\begin{minipage}{14cm}
{\bf Abstract:}
Convergence and convergence rate analyses of adaptive methods, such as Adaptive Moment Estimation (Adam) and its variants, have been widely studied for nonconvex optimization. The analyses are based on assumptions that the expected or empirical average loss function is Lipschitz smooth (i.e., its gradient is Lipschitz continuous) and the learning rates depend on the Lipschitz constant of the Lipschitz continuous gradient. Meanwhile, numerical evaluations of Adam and its variants have clarified that using small constant learning rates without depending on the Lipschitz constant and hyperparameters ($\beta_1$ and $\beta_2$) close to one is advantageous for training deep neural networks. Since computing the Lipschitz constant is NP-hard, the Lipschitz smoothness condition would be unrealistic. This paper provides theoretical analyses of Adam without assuming the Lipschitz smoothness condition in order to bridge the gap between theory and practice. The main contribution is to show theoretical evidence that Adam using small learning rates and hyperparameters close to one performs well, whereas the previous theoretical results were all for hyperparameters close to zero. Our analysis also leads to the finding that Adam performs well with large batch sizes. Moreover, we show that Adam performs well when it uses diminishing learning rates and hyperparameters close to one.

\end{minipage}
 \\[5mm]

\noindent{\bf Keywords:} {Adam, adaptive method, batch size, hyperparameters, learning rate, nonconvex optimization}\\

\hbox to14cm{\hrulefill}\par


\section{Introduction}
\label{sec:1}
\subsection{Background}
\label{subsec:1.1}
One way to train a deep neural network is to find the model parameters of the network that minimize loss functions called the expected risk and empirical risk by using first-order optimization methods \cite[Section 4]{bottou}. The simplest optimizer is mini-batch stochastic gradient descent (SGD) \cite{robb1951,zinkevich2003,nem2009,gha2012,gha2013}. There are many deep learning optimizers to accelerate SGD, such as momentum methods \cite{polyak1964,nest1983} and adaptive methods, e.g., Adaptive Gradient (AdaGrad) \cite{adagrad}, Root Mean Square Propagation (RMSProp) \cite{rmsprop}, Adaptive Moment Estimation (Adam) \cite{adam}, Yogi \cite{NEURIPS2018_90365351}, Adaptive Mean Square Gradient (AMSGrad) \cite{reddi2018}, Adam with decoupled weight decay (AdamW) \cite{loshchilov2018decoupled}, and AdaBelief (named for adapting stepsizes by the belief in observed gradients) \cite{ada}.

Convergence and convergence rate analyses of deep learning optimizers have been widely studied for convex optimization \cite{zin2010,adam,reddi2018,luo2019,dun2020}. Meanwhile, theoretical investigations on deep learning optimizers for nonconvex optimization are needed so that these optimizers can be practically used for nonconvex optimization in deep learning \cite{kxu2015,ar2017,vas2017}.

Convergence analyses of SGD for nonconvex optimization were presented in \cite{feh2020,chen2020,sca2020,loizou2021} (see \cite{gower2021,loizou2021} for convergence analyses of SGD for two classes of nonconvex optimization problem, quasar-convex and Polyak--Lojasiewicz optimization). For example, Theorem 12 in \cite{sca2020} gave an upper bound of $(1/K) \sum_{k=1}^K \mathbb{E}[\|\nabla f (\bm{\theta}_k)\|^2]$ generated by SGD with a constant learning rate $\alpha = 1/L$ is $\mathcal{O}(1/\sqrt{K}) + C$, where $(\bm{\theta}_k)_{k\in\mathbb{N}}$ is the sequence generated by an optimizer, $L$ is the Lipschitz constant of the Lipschitz continuous gradient of the loss function $f \colon \mathbb{R}^d \to \mathbb{R}$, $K$ denotes the number of steps, and $C > 0$ is a constant. Convergence analyses depending on the batch size were presented in \cite{chen2020}. In particular, Theorem 3.2 in \cite{chen2020} indicates that running SGD with a diminishing learning rate $\alpha_k = 1/k$ and a large batch size for sufficiently many steps leads to convergence to a stationary point of the sum of loss functions.

\begin{table}[htbp]
\caption{Upper bounds of performance measures of optimizers with learning rate $\alpha_k$ and hyperparameters $\beta_1$ and $\beta_2$ for nonconvex optimization ($C, G > 0$, $s \in (0,1/2)$, $L$ denotes the Lipschitz constant of the Lipschitz continuous gradient of the loss function, $K$ denotes the number of steps, $b$ is the batch size, and $C_1$ is a monotone decreasing function. $\beta \approx a$ implies that, if $\beta$ is close to $a$, then the upper bounds are small.)}\label{table:1}
\centering 
\begin{tabular}{lccc}
\toprule
Optimizer & Learning Rate $\alpha_k$ & Parameters $\beta_1$, $\beta_2$ & Upper Bound \\
\midrule
SGD 
& $\displaystyle{\frac{1}{L}}$ 
& --- 
& $\displaystyle{\mathcal{O}\left(\frac{1}{K} \right) + C}$ \\
\cite{sca2020} & & & \\
\midrule
Adam 
& $\displaystyle{\mathcal{O}\left(\frac{1}{L} \right)}$ 
& $\displaystyle{\beta_2 \geq 1 - \mathcal{O}\left(\frac{1}{G^2}\right)}$ 
& $\displaystyle{\mathcal{O}\left(\frac{1}{K} + \frac{1}{b}\right)}$\\
\cite{NEURIPS2018_90365351} & & & \\
\midrule
Yogi 
& $\displaystyle{\mathcal{O}\left(\frac{1}{L} \right)}$ 
& $\displaystyle{\beta_2 \geq 1 - \mathcal{O}\left(\frac{1}{G^2}\right)}$ 
& $\displaystyle{\mathcal{O}\left(\frac{1}{K} + \frac{1}{b}\right)}$\\
\cite{NEURIPS2018_90365351} & & & \\
\midrule
Generic Adam 
& $\displaystyle{\mathcal{O}\left( \frac{1}{\sqrt{k}} \right)}$
& $\beta_1 \approx 0$, & $\displaystyle{\mathcal{O}\left(\frac{\log K}{\sqrt{K}}\right)}$ \\
\cite{cvpr2019} & & $\displaystyle{\beta_2 = 1 - \frac{1}{k} \approx 1}$ 
 & \\
\midrule
AdaFom 
& $\displaystyle{\frac{1}{\sqrt{k}}}$
& $\beta_1 \approx 0$
& $\displaystyle{\mathcal{O}\left(\frac{\log K}{\sqrt{K}}\right)}$ \\
\cite{chen2019} & & & \\
\midrule
AMSGrad 
& $\alpha$
& $0 \approx \beta_1 < \sqrt{\beta_2}$
& $\displaystyle{\mathcal{O}\left(\frac{1}{K^{\frac{1}{2} - s}}\right)}$ \\
\cite{opt2020} & & & \\
\midrule
AdaBelief 
& $\displaystyle{\mathcal{O}\left( \frac{1}{\sqrt{k}} \right)}$
& $\beta_1 \approx 0$, $\beta_2 \approx 0$
& $\displaystyle{\mathcal{O}\left(\frac{\log K}{\sqrt{K}}\right)}$ \\
\cite{ada} & & & \\
\midrule
Padam 
& $\alpha$
& $\beta_1 \approx 0$, $\beta_2 \approx 0$
& $\displaystyle{\mathcal{O}\left(\frac{1}{K^{\frac{1}{2} - s}}\right)}$ \\
\cite{chen2020_1} & & & \\
\midrule
Adaptive methods
& $\alpha$
& $\beta_1 \approx 0$, $\beta_2 \approx 0$
& $\displaystyle{\mathcal{O}\left(\frac{1}{K}\right) + C_1 (\alpha,\beta_1)}$ \\
\cite{iiduka2021} & & & \\
\midrule
Adaptive methods
& $\displaystyle{\frac{1}{\sqrt{k}}}$
& $\beta_1 \approx 0$, $\beta_2 \approx 0$
& $\displaystyle{\mathcal{O}\left(\frac{1}{\sqrt{K}}\right)}$ \\
\cite{iiduka2021} & & & \\
\bottomrule
\end{tabular}
\end{table}

Convergence analyses of adaptive methods for nonconvex optimization were presented in \cite{NEURIPS2018_90365351,cvpr2019,chen2019,opt2020,ada,chen2020_1,iiduka2021}. The previous results are summarized in Table \ref{table:1}. Theorems 1 and 2 in \cite{NEURIPS2018_90365351} indicate that, if $\alpha_k = \mathcal{O}(1/L)$ and a hyperparameter $\beta_2 \geq 1 - \mathcal{O}(1/G^2)$, then the upper bounds of $(1/K) \sum_{k=1}^K \mathbb{E}[\|\nabla f (\bm{\theta}_k)\|^2]$ generated by Adam and Yogi are each $\mathcal{O}(1/K + 1/b)$, where $G$ is the upper bound of the stochastic gradient and $b$ is the batch size. Theorem 2 in \cite{NEURIPS2018_d54e99a6} indicates that computing the Lipschitz constant $L$ is NP-hard. Hence, using a learning rate depending on the Lipschitz constant $L$ would be unrealistic. Convergence analyses of adaptive methods using diminishing learning rates that do not depend on $L$ were presented in \cite{cvpr2019,chen2019,ada,iiduka2021}, while convergence analyses of adaptive methods using constant learning rates that do not depend on $L$ were presented in \cite{opt2020,chen2020_1,iiduka2021}. These studies indicate that, if $K$ is sufficiently large and if $\beta_1$ and $\beta_2$ are close to $0$, then adaptive methods using learning rates that do not depend on  $L$ approximate the stationary points of $f$. For example, Theorem 3 in \cite{opt2020} indicates that AMSGrad using a constant learning rate $\alpha$ satisfies 
\begin{align*}
\frac{1}{K} \sum_{k=1}^K \mathbb{E}\left[ \|\nabla f (\bm{\theta}_k)\|^2 \right]
\leq 
\frac{M_1}{K \alpha} + \frac{M_2 d}{K} + \frac{\alpha M_3 d}{K^{\frac{1}{2}-s}},
\end{align*} 
where $s \in [0,1/2]$ and $M_i$ $(i=1,2,3)$ are positive constants, and $M_2 = \mathcal{O}(G^3/(1-\beta_1))$ and $M_3 = \mathcal{O}(G^2/(1-\beta_2))$ depend on $\beta_1$ and $\beta_2$ and the upper bound $G$. $M_2 $ and $M_3$ are small when $\beta_1$ and $\beta_2$ are small, i.e., $\beta_1, \beta_2 \approx 0$. Hence, using $\beta_1$ and $\beta_2$ close to $0$ is advantageous for AMSGrad.

Meanwhile, numerical evaluations have shown that using $\beta_1$ and $\beta_2$ such as 
\begin{align}\label{one}
\beta_1 \in \{ 0.9, 0.99\} \text{ and } \beta_2 \in \{ 0.99, 0.999 \}
\end{align}
is advantageous for training deep neural networks \cite{adam,reddi2018,NEURIPS2018_90365351,cvpr2019,chen2019,ada,chen2020_1}. The practically useful $\beta_1$ and $\beta_2$ defined by \eqref{one} are each close to $1$, whereas in contrast the theoretical results in Table \ref{table:1} show that using $\beta_1$ and $\beta_2$ close to $0$ makes the upper bounds of the performance measures small. Hence, there is a gap between theory ($\beta_1, \beta_2 \approx 0$; see also Table \ref{table:1}) and practice ($\beta_1, \beta_2 \approx 1$; see also \eqref{one}) for adaptive methods. In \cite{l.2018dont}, it was numerically shown that using an enormous batch size leads to a reduction in the number of parameter updates and model training time. The theoretical results in \cite{NEURIPS2018_90365351} showed that using large batch sizes makes the upper bound of $\mathbb{E}[\|\nabla f (\bm{\theta}_k)\|^2]$ small. Accordingly, the practical results for large batch sizes matches the theoretical ones. 

\subsection{Motivation}
As indicated in Section \ref{subsec:1.1}, it was numerically shown that the performance of an adaptive method strongly depends on the hyperparameters $\beta_1$ and $\beta_2$ being close to $1$. The motivation of this paper is to show {\em theoretically} evidence such that Adam performs well when $\beta_1$ and $\beta_2$ are each set close to $1$. Using the Lipschitz constant of the Lipschitz continuous gradient of the loss function would be unrealistic \cite{NEURIPS2018_d54e99a6}. Hence, it will not be assumed that the loss function $f$ is Lipschitz smooth (i.e., its gradient is Lipschitz continuous). The Lipschitz smoothness condition of $f$ implies that, for all $\bm{x},\bm{y} \in \mathbb{R}^d$, 
\begin{align}\label{smooth}
f(\bm{y}) \leq f(\bm{x}) + \nabla f(\bm{x})^\top (\bm{y} - \bm{x}) + \frac{L}{2}
\|\bm{x} - \bm{y}\|^2.
\end{align}
Almost all of the previous analyses of adaptive methods are based on the descent lemma \eqref{smooth}, and hence, they can use the expectation of the squared norm of the full gradient $\mathbb{E}[\|\nabla f (\bm{\theta}_k)\|^2]$ as the performance measure. Since we do not assume Lipschitz smoothness of the loss function, we cannot use \eqref{smooth}. Accordingly, we must use other performance measures that are different from $\mathbb{E}[\|\nabla f (\bm{\theta}_k)\|^2]$.

\subsubsection{Performance measure}
This paper considers the Adam optimizer \cite{adam}, which is defined for all $k\in \mathbb{N}$ by 
\begin{align}\label{adam}
\bm{\theta}_{k+1} 
:= \bm{\theta}_k - \frac{\alpha_k}{1 - \beta_{1}^{k+1}} 
\mathsf{diag}\left(\hat{v}_{k,i}^{- \frac{1}{2}} \right) {\bm{m}}_k,
\end{align}
where $\alpha_k > 0$ is the learning rate, $\beta_i \in (0,1)$ $(i=1,2)$, $\hat{v}_{k,i} := (1 - \beta_2^{k+1})^{-1} v_{k,i}$, $v_{k,i} := \beta_2 v_{k-1,i} + (1-\beta_2) g_{k,i}^2$, $m_{k,i} := \beta_1 m_{k-1,i} + (1 - \beta_1) g_{k,i}$, $\bm{m}_k := (m_{k,i})_{i=1}^d$, $\bm{g}_k := (g_{k,i})_{i=1}^d$ is the stochastic gradient, and $\mathsf{diag}(x_i)$ is a diagonal matrix with diagonal components $x_1, x_2, \ldots, x_d$ (see also Algorithm \ref{algo:1} for details). We use the following theoretical performance measure to approximate a local minimizer $\bm{\theta}^\star$ of the nonconvex optimization problem of minimizing the loss function $f \colon \mathbb{R}^d \to \mathbb{R}$ over $\mathbb{R}^d$:
\begin{align}\label{pm_1}
\mathbb{E}\left[ \bm{m}_k^\top (\bm{\theta}_k - \bm{\theta}^\star) 
\right] 
\leq \epsilon,
\end{align}
where $\epsilon > 0$ is the precision. The performance measure \eqref{pm_1} is an $\epsilon$-approximation of the sequence $(\bm{\theta}_k)_{k\in\mathbb{N}}$ generated by Adam \eqref{adam} in the sense that the inner product of the vector $\bm{\theta}_k - \bm{\theta}^\star$ and the inverse direction $\bm{m}_k$ of the search direction $-\bm{m}_k$ is less than or equal to $\epsilon$. Thanks to the previous theoretical results shown in Table \ref{table:1}, it is guaranteed that Adam and its variants can find a stationary point $\bm{\theta}^\star$ of $f$, which implies that, for a sufficiently large step size $k$, $0\leq \mathbb{E}[\bm{m}_k^\top (\bm{\theta}_k - \bm{\theta}^\star)]$. Therefore, it is sufficient to check whether or not Adam satisfies \eqref{pm_1}. If $\mathbb{E}[ \bm{m}_k^\top (\bm{\theta}_k - \bm{\theta}^\star)] \approx 0$ for a sufficiently large $k$, then $\mathbb{E}[\|\bm{m}_k\|]$ or $\mathbb{E}[\|\bm{\theta}_k - \bm{\theta}^\star\|]$ will be approximately zero. We also use the mean value of $\mathbb{E}[\bm{m}_k^\top (\bm{\theta}_k - \bm{\theta}^\star)]$ ($k\in \{1,2,\ldots,K\}$), that is, 
\begin{align}\label{pm_2}
\frac{1}{K} \sum_{k=1}^K \mathbb{E}\left[\bm{m}_k^\top (\bm{\theta}_k - \bm{\theta}^\star) \right] \leq \epsilon.
\end{align}
Let $\bm{\theta}^\star \in \mathbb{R}^d$. $\bm{\theta}^\star$ is a stationary point of $f$ if and only if $\nabla f (\bm{\theta}^\star)^\top (\bm{\theta}^\star - \bm{\theta}) \leq 0$ for all $\bm{\theta} \in \mathbb{R}^d$. Hence, we also use the following performance measures: for all $\bm{\theta} \in \mathbb{R}^d$, 
\begin{align}\label{pm_3}
\mathbb{E}\left[
\nabla f (\bm{\theta}_k)^\top (\bm{\theta}_k - \bm{\theta}) 
\right] \leq \epsilon
\end{align}
and 
\begin{align}\label{pm_4}
\frac{1}{K} \sum_{k=1}^K \mathbb{E}\left[
\nabla f (\bm{\theta}_k)^\top (\bm{\theta}_k - \bm{\theta}) 
\right] \leq \epsilon.
\end{align}
The advantage of using \eqref{pm_1}, \eqref{pm_2}, \eqref{pm_3}, and \eqref{pm_4} is that we can evaluate the upper bound of the performance measure for Adam without assuming that the loss function $f$ is Lipschitz smooth.

\subsection{Our results and contribution}
Numerical evaluations presented in \cite{adam,reddi2018,NEURIPS2018_90365351,cvpr2019,chen2019,ada,chen2020_1} showed that Adam and its variants perform well when they use a small constant learning rate $\alpha$ and hyperparameters $\beta_1$ and $\beta_2$ with values close to $1$. Hence, we would like to show theoretical evidence that Adam performs well when we set a small $\alpha$ and $\beta_1$ and $\beta_2$ close to $1$. In particular, we will show that Adam \eqref{adam} using $\alpha > 0$, $\beta_1 \in (0,1)$, and $\beta_2 \in [0,1)$ satisfies 
\begin{align}\label{upper}
\begin{split}
\mathbb{E}\left[
\bm{m}_{k}^\top (\bm{\theta}_k - \bm{\theta}^\star)
\right]
&\leq
\underbrace{\frac{D(\bm{\theta}^\star) M^{\frac{1}{4}}}{v_*^{\frac{1}{4}} \beta_1}\sqrt{ \frac{\sigma^2}{b} + G^2}}_{C_1(\beta_1,b)}
+
\underbrace{\frac{\alpha \sqrt{1-\beta_2^{k+1}}}{2 \sqrt{v_*} \beta_{1} (1-\beta_1^{k+1})} \left( \frac{\sigma^2}{b} + G^2 \right)}_{C_2(\alpha,\beta_1,\beta_2,b,k)}\\
&\quad 
+ \underbrace{\frac{1 - \beta_1}{\beta_1} D(\bm{\theta}^\star) G}_{C_3(\beta_1)}
+ \underbrace{(1-\beta_1) D(\bm{\theta}^\star)
\left(B + \sqrt{ \frac{\sigma^2}{b} + G^2} \right)}_{C_4(\beta_1,b)}
\end{split}
\end{align}
and
\begin{align*}
\mathbb{E}\left[
\nabla f(\bm{\theta}_k)^\top (\bm{\theta}_k - \bm{\theta})
\right]
&\leq
\underbrace{\frac{D(\bm{\theta}) M^{\frac{1}{4}}}{v_*^{\frac{1}{4}} \beta_1}\sqrt{ \frac{\sigma^2}{b} + G^2}}_{C_1(\beta_1,b)}
+
\underbrace{\frac{\alpha \sqrt{1-\beta_2^{k+1}}}{2 \sqrt{v_*} \beta_{1} (1-\beta_1^{k+1})} \left( \frac{\sigma^2}{b} + G^2 \right)}_{C_2(\alpha,\beta_1,\beta_2,b,k)}\\
&\quad +
\underbrace{\frac{1 - \beta_{1}}{\beta_{1}}D(\bm{\theta}) G }_{C_3(\beta_1)}
+ 
\underbrace{\left(\frac{1}{\beta_1} + 2 (1 - \beta_{1}) \right)
D(\bm{\theta}) \left(B + \sqrt{\frac{\sigma^2}{b} + G^2} \right)}_{C_5(\beta_1,b)},
\end{align*} 
where $\bm{\theta}^\star$ is a stationary point of $f$ and $\bm{\theta} \in \mathbb{R}^d$ (see Theorem \ref{c1} for the definitions of the parameters). $C_1, C_3, C_4$, and $C_5$ are monotone decreasing for $\beta_1 \in (0,1)$ and $b > 0$. Hence, it is desirable to set $\beta_1$ close to $1$ such as $\beta_1 = 0.9$ and a large batch size $b$. Although a function $f_k(\beta_1) := 1/(\beta_1(1-\beta_1^{k+1}))$ in $C_2$ is monotone increasing for $\beta_1$ satisfying $\beta_1^{k+1} > 1/(k+2)$, $f_k(\beta_1)$ with a sufficiently large $k$ is monotone decreasing for $\beta_1$. $f_{\beta_1}(k) := 1/(\beta_1(1-\beta_1^{k+1}))$ is monotone decreasing for $k \in \mathbb{N}$. Although $f_{\beta_2}(k) := (1-\beta_2^{k+1})^{1/2}$ is monotone increasing for $k \in \mathbb{N}$, $f_{\beta_2}(k) := (1-\beta_2^{k+1})^{1/2} \leq 1$ is small for all $\beta_2$ and all $k \in\mathbb{N}$. $C_2$ is monotone increasing for $\alpha$ and monotone decreasing for $\beta_2$. Hence, we must set a small $\alpha$ and $\beta_2$ close to $1$, such as $\alpha = 10^{-3}$ and $\beta_2 = 0.99$, to make $C_2$ small. For simplicity, we may evaluate 
\begin{align*}
C_2(\alpha,\beta_1,\beta_2,b,k)
:= \frac{\alpha \sqrt{1-\beta_2^{k+1}}}{2 \sqrt{v_*} \beta_{1} (1-\beta_1^{k+1})} \left( \frac{\sigma^2}{b} + G^2 \right)
\leq 
\frac{\alpha}{2 \sqrt{v_*} \beta_{1} (1-\beta_1)} \left( \frac{\sigma^2}{b} + G^2 \right).
\end{align*}
Since $1/(\beta_1(1-\beta_1))$ is monotone increasing for $\beta_1 \geq 1/2$, we need to set a small learning rate $\alpha$ to make $\alpha/(\beta_1(1-\beta_1))$ small. Therefore, using a small learning rate $\alpha$ and $\beta_1$ and $\beta_2$ close to $1$ is advantageous for Adam (see Theorem \ref{d1} for the results for when Adam uses a diminishing learning rate $\alpha_k$). Moreover, \eqref{upper} implies that 
\begin{align*}
\limsup_{k \to +\infty} \mathbb{E}\left[
\bm{m}_{k}^\top (\bm{\theta}_k - \bm{\theta}^\star)
\right]
&\leq
\underbrace{\frac{D(\bm{\theta}^\star) M^{\frac{1}{4}}}{v_*^{\frac{1}{4}} \beta_1}\sqrt{ \frac{\sigma^2}{b} + G^2}}_{C_1(\beta_1,b)}
+
\underbrace{\frac{\alpha}{2 \sqrt{v_*} \beta_{1}} \left( \frac{\sigma^2}{b} + G^2 \right)}_{\bar{C}_2(\alpha,\beta_1,b)}\\
&\quad 
+ \underbrace{\frac{1 - \beta_1}{\beta_1} D(\bm{\theta}^\star) G}_{C_3(\beta_1)}
+ \underbrace{(1-\beta_1) D(\bm{\theta}^\star)
\left(B + \sqrt{ \frac{\sigma^2}{b} + G^2} \right)}_{C_4(\beta_1,b)},
\end{align*}
which implies that we must set a small learning rate $\alpha$, $\beta_1$ close to $1$, and a large batch size $b$ for $C_1$, $\bar{C}_2$, $C_3$, and $C_4$ to be small (see also Corollary \ref{cor:1}).

A stochastic convex optimization problem exists such that Adam using $\beta_1 < \sqrt{\beta_2}$ (e.g., $\beta_1 = 0.9$ and $\beta_2 = 0.999$) does not converge to the optimal solution \cite[Theorem 3]{reddi2018}, so it is not guaranteed that Adam can solve every nonconvex optimization problem. Reddi et al showed that, if $v_{k,i}$ in \eqref{adam} satisfies 
\begin{align}\label{max_1}
\hat{v}_{k+1,i} \geq \hat{v}_{k,i} \text{ for all } k \in \mathbb{N} 
\text{ and all } i \in [d],
\end{align}
then Adam (AMSGrad) with a diminishing learning rate $\alpha_k$ and $\beta_1$ and $\beta_2$ defined by
\begin{align}\label{dim_1}
\alpha_k = \mathcal{O}\left(\frac{1}{\sqrt{k}} \right)
\text{ and }
\beta_1 < \sqrt{\beta_2}
\end{align}
can solve convex optimization problems \cite[(2), Algorithm 2, Theorem 4]{reddi2018} (see also \cite[Theorems 2.1 and 2.2]{ada} for the convergence analyses of AdaBelief using \eqref{max_1}). Motivated by the results in \cite{reddi2018,ada}, we decided to study Adam under Condition \eqref{max_1}. Our results are summarized in Table \ref{table:2} (see Theorems \ref{c2}, \ref{c3}, \ref{d2}, and \ref{d3} for the details). While the previous results shown in Table \ref{table:1} used $\beta_1, \beta_2 \approx 0$, our results use $\beta_1, \beta_2 \approx 1$ to make the upper bound of \eqref{pm_2} small (see Theorems \ref{c2}, \ref{c3}, \ref{d2}, and \ref{d3} for the upper bounds of \eqref{pm_4}). Therefore, the results we present in this paper are theoretical confirmation of the numerical evaluations \cite{adam,reddi2018,NEURIPS2018_90365351,cvpr2019,chen2019,ada,chen2020_1} showing that Adam and its variants using $\beta_1$ and $\beta_2$ close to $1$ perform well.

\begin{table}[htbp]
\caption{Upper bounds of the performance measure \eqref{pm_2} of Adam with learning rate $\alpha_k$ and hyperparameters $\beta_{1}$ and $\beta_{2k}$ for nonconvex optimization ($a > 0$, $C_i$ $(i=1,2,3)$ are constants, $K$ is the number of steps, $b$ is the batch size, and $n$ is the number of samples. $\beta \approx \gamma$ implies that, if $\beta$ is close to $\gamma$, then the upper bounds are small.)}\label{table:2}
\centering 
\begin{tabular}{lcccc}
\toprule
Optimizer & Learning Rate $\alpha_k$ & Parameters $\beta_1$, $\beta_{2k}$ & Batch $b$ & Upper Bound of \eqref{pm_2}\\
\midrule
Adam with \eqref{max_1} 
& $\alpha$ 
& $\beta_1 \approx 1$ 
& $b \approx n$
& $\displaystyle{\mathcal{O}\left(\frac{1}{K} \right) + C_1 \alpha}$ \\
(Theorem \ref{c2}) & & $\beta_2 \approx 1$ & & $\displaystyle{+ C_2 \frac{\alpha}{b} + C_3 \frac{1 - \beta_1}{\beta_1}}$ \\
\midrule
Adam with \eqref{max_1} 
& $\alpha$ 
& $\beta_1 \approx 1$ 
& $b \approx n$
& $\displaystyle{\mathcal{O}\left(\frac{1}{\sqrt{K}} + \frac{1 - \beta_1}{\beta_1} \right)}$ \\
(Theorem \ref{c3}) & & $\beta_{2k} \to 1$ & & \\
\midrule
Adam with \eqref{max_1} 
& $\displaystyle{\frac{1}{\sqrt{k}}}$ 
& $\beta_1 \approx 1$
& $b \approx n$ 
& $\displaystyle{\mathcal{O}\left(\frac{1}{\sqrt{K}} + \frac{1 - \beta_1}{\beta_1} \right)}$ \\
(Theorem \ref{d2}) & & $\beta_2 \approx 1$ & & \\
\midrule
Adam with \eqref{max_1} 
& $\displaystyle{\frac{1}{k^a}}$ 
& $\beta_1 \approx 1$ 
& $b \approx n$
& $\displaystyle{\mathcal{O}\left(\frac{1}{\sqrt{K}} + \frac{1 - \beta_1}{\beta_1} \right)}$ \\
(Theorem \ref{d3}) & & $\beta_{2k} \to 1$ & & \\
\bottomrule
\end{tabular}
\end{table}

Our first contribution is to provide theoretical analyses of Adam without assuming the Lipschitz smoothness condition. Since we cannot use the descent lemma \eqref{smooth} based on this condition, we cannot use the performance measure $\mathbb{E}[\|\nabla f(\bm{\theta}_k)\|^2]$. Instead, we use two performance measures \eqref{pm_1} and \eqref{pm_3}: \eqref{pm_1} is the inner product of $\bm{\theta}_k - \bm{\theta}^\star$ and the inverse direction $\bm{m}_k$ of the search direction generated by Adam. If the \eqref{pm_1} is approximately zero, then Adam can approximate a local minimizer $\bm{\theta}^\star$ of the problem of minimizing $f$. \eqref{pm_3} is the inner product of $\bm{\theta}_k - \bm{\theta}$ and the full gradient $\nabla f(\bm{\theta}_k)$ generated by Adam. If \eqref{pm_3} is approximately zero, then Adam can approximate a local minimizer $\bm{\theta}^\star$ of the problem of minimizing $f$. \eqref{pm_1} and \eqref{pm_3} and the mean values \eqref{pm_2} and \eqref{pm_4} of \eqref{pm_1} and \eqref{pm_3} can be used to evaluate the performance of deep learning optimizers without assuming the Lipschitz smoothness condition.

While the numerical evaluations presented in \cite{adam,reddi2018,NEURIPS2018_90365351,cvpr2019,chen2019,ada,chen2020_1} have shown that adaptive methods using $\beta_1$ and $\beta_2$ close to $1$ are advantageous for training deep neural networks, the theoretical results in Table \ref{table:1} imply that adaptive methods with $\beta_1$ and $\beta_2$ close to $0$ are good for solving nonconvex optimization problems in deep learning. This implies that there is a large gap between theory and practice for adaptive methods. Our results in Table \ref{table:2} show that Adam indeed performs well when $\beta_1$ and $\beta_2$ are set close to $1$. Thus, the gap between theory and practice can be bridged for Adam. Our results also show that using a large batch size makes the upper bounds of the performance measures small, which implies that our results match the numerical evaluations in \cite{l.2018dont}.

The remainder of the paper is as follows. First, the mathematical preliminaries are laid out in Section \ref{sec:2}, with the definitions of nonconvex optimization and the Adam optimizer. Section \ref{sec:3} describes the theoretical results in detail. Finally, a brief summary and outline of future work are presented in Section \ref{sec:4}.

\section{Mathematical preliminaries}
\label{sec:2}
\subsection{Nonconvex optimization}
Let $\mathbb{R}^d$ be a $d$-dimensional Euclidean space with inner product $\langle \bm{x},\bm{y} \rangle := \bm{x}^\top \bm{y}$ inducing the norm $\| \bm{x}\|$ and $\mathbb{N}$ the set of nonnegative integers. Let $[d] := \{1,2,\ldots,d\}$ for $d \geq 1$. The mathematical model used in this paper is based on \cite{shallue2019}. Given a parameter $\bm{\theta} \in \mathbb{R}^d$ and given a data point $z$ in a data domain $Z$, a machine learning model provides a prediction whose quality is measured by a differentiable nonconvex loss function $\ell(\bm{\theta};z)$. We aim to minimize the expected loss defined for all $\bm{\theta} \in \mathbb{R}^d$ by
\begin{align}\label{expected}
f(\bm{\theta}) = \mathbb{E}_{z \sim \mathcal{D}} 
[\ell(\bm{\theta};z) ]
= \mathbb{E}[ \ell_{\xi} (\bm{\theta}) ],
\end{align}
where $\mathcal{D}$ is a probability distribution over $Z$, $\xi$ denotes a random variable with distribution function $P$, and $\mathbb{E}[\cdot]$ denotes the expectation taken with respect to $\xi$. A particularly interesting example of \eqref{expected} is the empirical average loss defined for all $\bm{\theta} \in \mathbb{R}^d$ by 
\begin{align}\label{empirical}
f(\bm{\theta}; S) = \frac{1}{n} \sum_{i\in [n]} \ell(\bm{\theta};z_i)
= \frac{1}{n} \sum_{i\in [n]} \ell_i(\bm{\theta}),
\end{align}
where $S = (z_1, z_2, \ldots, z_n)$ denotes the training set, $\ell_i (\cdot) := \ell(\cdot;z_i)$ denotes the loss function corresponding to the $i$-th training data $z_i$, and $[n] := \{1,2,\ldots,n\}$. Our main objective is to find a local minimizer of $f$ over $\mathbb{R}^d$, i.e., a stationary point $\bm{\theta}^\star \in \mathbb{R}^d$ satisfying $\nabla f(\bm{\theta}^\star) = \bm{0}$.

\subsection{Adam}
We assume that a stochastic first-order oracle (SFO) exists such that, for a given $\bm{\theta} \in \mathbb{R}^d$, it returns a stochastic gradient $\mathsf{G}_{\xi}(\bm{\theta})$ of the function $f$ defined by \eqref{expected}, where a random variable $\xi$ is supported on $\Xi$ independently of $\bm{\theta}$. Throughout this paper, we will assume the following standard conditions:
\begin{enumerate}
\item[(S1)] $f \colon \mathbb{R}^d \to \mathbb{R}$ defined by \eqref{expected} is continuously differentiable;
\item[(S2)] Let $(\bm{\theta}_k)_{k\in \mathbb{N}} \subset \mathbb{R}^d$ be the sequence generated by a deep learning optimizer. For each iteration $k$, 
\begin{align}\label{gradient}
\mathbb{E}_{\xi_k}[ \mathsf{G}_{\xi_k}(\bm{\theta}_k)] = \nabla f(\bm{\theta}_k),
\end{align}
where $\xi_0, \xi_1, \ldots$ are independent samples and the random variable $\xi_k$ is independent of $(\bm{\theta}_l)_{l=0}^k$. There exists a nonnegative constant $\sigma^2$ such that 
\begin{align}\label{sigma}
\mathbb{E}_{\xi_k}\left[ \left\|\mathsf{G}_{\xi_k}(\bm{\theta}_k) - 
\nabla f(\bm{\theta}_k) \right\|^2 \right] \leq \sigma^2.
\end{align}
\item[(S3)] For each iteration $k$, the optimizer samples a batch $B_{k}$ of size $b$ independently of $k$ and estimates the full gradient $\nabla f$ as 
\begin{align*}
\nabla f_{B_k} (\bm{\theta}_k)
:= \frac{1}{b} \sum_{i\in [b]} \mathsf{G}_{\xi_{k,i}}(\bm{\theta}_k),
\end{align*}
where $\xi_{k,i}$ is a random variable generated by the $i$-th sampling in the $k$-th iteration. 
\end{enumerate}
In the case that $f$ is defined by \eqref{empirical}, we have that, for each $k$, $B_k \subset [n]$ and 
\begin{align*}
\nabla f_{B_k} (\bm{\theta}_k)
= \frac{1}{b} \sum_{i \in [b]} \nabla \ell_{\xi_{k,i}} (\bm{\theta}_k).
\end{align*}

Algorithm \ref{algo:1} is the Adam optimizer under (S1)--(S3). The symbol $\odot$ in step 6 is defined for all $\bm{x} = (x_i)_{i=1}^d \in \mathbb{R}^d$, $\bm{x} \odot \bm{x} := (x_i^2)_{i=1}^d \in \mathbb{R}^d$. $\mathsf{diag}(x_i)$ in step 8 is a diagonal matrix with diagonal components $x_1, x_2, \ldots, x_d$.

\begin{algorithm} 
\caption{Adam \cite{adam}} 
\label{algo:1} 
\begin{algorithmic}[1] 
\REQUIRE
$\alpha_k \in (0,+\infty)$, 
$b \in (0,+\infty)$,
$\beta_{1k} \in (0,1)$, 
$\beta_{2k} \in [0,1)$
\STATE
$k \gets 0$, $\bm{\theta}_{0} \in\mathbb{R}^d$, $\bm{m}_{-1} := \bm{0}$, 
$\bm{v}_{-1} := \bm{0}$
\LOOP 
\STATE
$\nabla f_{B_k} (\bm{\theta}_k)
:= b^{-1} \sum_{i\in [b]} \mathsf{G}_{\xi_{k,i}}(\bm{\theta}_k)$
\STATE 
$\bm{m}_k := \beta_{1k} \bm{m}_{k-1} + (1-\beta_{1k}) \nabla f_{B_k}(\bm{\theta}_k)$
\STATE
$\displaystyle{\hat{\bm{m}}_k := (1-\beta_{1k}^{k+1})^{-1}\bm{m}_k}$
\STATE
$\bm{v}_k := \beta_{2k} \bm{v}_{k-1} + (1-\beta_{2k}) \nabla f_{B_k}(\bm{\theta}_k) \odot \nabla f_{B_k}(\bm{\theta}_k)$
\STATE
$\displaystyle{\hat{\bm{v}}_k := (1-\beta_{2k}^{k+1})^{-1}\bm{v}_k}$
\STATE
$\mathsf{H}_k := \mathsf{diag}(\sqrt{\hat{v}_{k,i}})$
\STATE 
$\bm{\theta}_{k+1} := \bm{\theta}_k - \alpha_k \mathsf{H}_k^{-1} \hat{\bm{m}}_k$
\STATE $k \gets k+1$
\ENDLOOP 
\end{algorithmic}
\end{algorithm}

\section{Analysis of Adam for Nonconvex Optimization}
\label{sec:3}
This section provides our detailed theoretical results for Adam. 

\subsection{Theoretical advantage of setting a small constant learning rate $\alpha$ and hyperparameters $\beta_1$ and $\beta_2$ close to $1$}
\label{subsec:3.1}

\subsubsection{Constant learning rate rule}
Let us consider Adam defined by Algorithm \ref{algo:1} using the following constant learning rate and hyperparameters:
\begin{align}\label{constant_lr}
\alpha_{k} := \alpha \in (0,+\infty), \text{ }
\beta_{1k} := \beta_1 \in (0,1), \text{ and }
\beta_{2k} := \beta_2 \in [0,1).
\end{align}
We assume the following conditions that were used in \cite[Theorem 4.1]{adam}:
\begin{enumerate}
\item[(A1)] There exist positive numbers $G$ and $B$ such that, for all $k\in \mathbb{N}$, $\| \nabla f (\bm{\theta}_k) \| \leq G$ and $\|\nabla f_{B_k}(\bm{\theta}_k)\| \leq B$.
\item[(A2)] For all $\bm{\theta} \in \mathbb{R}^d$, there exists a positive number $D(\bm{\theta})$ such that, for all $k\in \mathbb{N}$, $\| \bm{\theta}_k - \bm{\theta} \| \leq D(\bm{\theta})$.
\end{enumerate}

The following is an analysis of Adam using a constant learning rate and hyperparameters (see Appendix \ref{app:1} for the proof of Theorem \ref{c1}).

\begin{thm}\label{c1}
Suppose that (S1)--(S3) and (A1)--(A2) hold and $\bm{\theta}^\star \in \mathbb{R}^d$ is a stationary point of $f$. Then, Adam defined by Algorithm \ref{algo:1} using \eqref{constant_lr} satisfies that, for all $k\in \mathbb{N}$, 
\begin{align*}
\mathbb{E}\left[
\bm{m}_{k}^\top (\bm{\theta}_k - \bm{\theta}^\star)
\right]
&\leq
\underbrace{\frac{D(\bm{\theta}^\star) M^{\frac{1}{4}}}{v_*^{\frac{1}{4}} \beta_1}\sqrt{ \frac{\sigma^2}{b} + G^2}}_{C_1(\beta_1,b)}
+
\underbrace{\frac{\alpha \sqrt{1-\beta_2^{k+1}}}{2 \sqrt{v_*} \beta_{1} (1-\beta_1^{k+1})} \left( \frac{\sigma^2}{b} + G^2 \right)}_{C_2(\alpha,\beta_1,\beta_2,b,k)}\\
&\quad + \underbrace{\frac{1 - \beta_1}{\beta_1} D(\bm{\theta}^\star) G}_{C_3(\beta_1)} 
+ \underbrace{(1-\beta_1) D(\bm{\theta}^\star) \left( B + \sqrt{ \frac{\sigma^2}{b} + G^2} \right)}_{C_4(\beta_1,b)}
\end{align*}
and for all $\bm{\theta} \in \mathbb{R}^d$ and all $k\in \mathbb{N}$,
\begin{align*}
\mathbb{E}\left[
\nabla f(\bm{\theta}_k)^\top (\bm{\theta}_k - \bm{\theta})
\right]
&\leq
\underbrace{\frac{D(\bm{\theta}) M^{\frac{1}{4}}}{v_*^{\frac{1}{4}} \beta_1}\sqrt{ \frac{\sigma^2}{b} + G^2}}_{C_1(\beta_1,b)}
+
\underbrace{\frac{\alpha \sqrt{1-\beta_2^{k+1}}}{2 \sqrt{v_*} \beta_{1} (1-\beta_1^{k+1})} \left( \frac{\sigma^2}{b} + G^2 \right)}_{C_2(\alpha,\beta_1,\beta_2,b,k)}\\
&\quad +
\underbrace{\frac{1 - \beta_{1}}{\beta_{1}}D(\bm{\theta}) G}_{C_3(\beta_1)}
+ 
\underbrace{\left(\frac{1}{\beta_1} + 2 (1 - \beta_{1}) \right)
D(\bm{\theta}) \left(B + \sqrt{\frac{\sigma^2}{b} + G^2} \right)}_{C_5(\beta_1,b)},
\end{align*}
where $D(\bm{\theta})$, $G$, and $B$ are defined as in (A1) and (A2), $\nabla f_{B_k}(\bm{\theta}_k) \odot \nabla f_{B_k}(\bm{\theta}_k) := (g_{k,i}^2) \in \mathbb{R}_{+}^d$, $M := \sup\{\max_{i\in [d]} g_{k,i}^2 \colon k\in \mathbb{N}\} < + \infty$, and ${v}_* := \inf \{ \min_{i\in [d]} {v}_{k,i} \colon k\in \mathbb{N}\}$.
\end{thm}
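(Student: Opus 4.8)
The plan is to bound the scalar $\bm{m}_k^\top(\bm{\theta}_k - \bm{\theta}^\star)$ by combining three ingredients: moment estimates for the stochastic gradient and the momentum vector, two-sided control of the adaptive preconditioner $\mathsf{H}_k = \mathsf{diag}(\sqrt{\hat{v}_{k,i}})$, and the exact algebraic identities linking $\bm{m}_k$, $\hat{\bm{m}}_k$, and the displacement $\bm{\theta}_k - \bm{\theta}_{k+1}$ coming from the update rule.

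First I would establish the preliminary estimates. From (S2), (S3) and the independence of the $b$ samples in a batch, the batch gradient satisfies $\mathbb{E}[\|\nabla f_{B_k}(\bm{\theta}_k)\|^2 \mid \bm{\theta}_k] \le \sigma^2/b + \|\nabla f(\bm{\theta}_k)\|^2 \le \sigma^2/b + G^2$ by (A1); unrolling the exponential moving average $\bm{m}_k = (1-\beta_1)\sum_{j=0}^k \beta_1^{k-j}\nabla f_{B_j}(\bm{\theta}_j)$ and applying Jensen's inequality then gives $\mathbb{E}[\|\bm{m}_k\|^2] \le \sigma^2/b + G^2$ and $\mathbb{E}[\|\bm{m}_k\|] \le \sqrt{\sigma^2/b + G^2}$. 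In parallel, unrolling $v_{k,i}$ and using $g_{k,i}^2 \le M$ yields $v_{k,i} \le (1-\beta_2^{k+1})M$, hence $\hat{v}_{k,i} \le M$ and $\hat{v}_{k,i} \ge v_*/(1-\beta_2^{k+1}) \ge v_*$; consequently $\|\mathsf{H}_k^{1/2}\bm{x}\| \le M^{1/4}\|\bm{x}\|$ and $\bm{x}^\top \mathsf{H}_k^{-1}\bm{x} \le \sqrt{(1-\beta_2^{k+1})/v_*}\,\|\bm{x}\|^2$ for every $\bm{x}$. These two estimates are exactly what produce the fourth-root factor $M^{1/4}/v_*^{1/4}$ in $C_1$ and the factor $\sqrt{1-\beta_2^{k+1}}/\sqrt{v_*}$ in $C_2$.

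Next I would decompose the target quantity. The update rule gives the exact identity $\bm{m}_k^\top(\bm{\theta}_k - \bm{\theta}_{k+1}) = \frac{\alpha}{1-\beta_1^{k+1}}\bm{m}_k^\top \mathsf{H}_k^{-1}\bm{m}_k$, whose right-hand side, controlled by the preconditioner estimate and $\mathbb{E}[\|\bm{m}_k\|^2]$, yields the term $C_2$. The momentum recursion $\bm{m}_k = \beta_1 \bm{m}_{k-1} + (1-\beta_1)\nabla f_{B_k}(\bm{\theta}_k)$, equivalently $\bm{m}_{k-1} = \beta_1^{-1}[\bm{m}_k - (1-\beta_1)\nabla f_{B_k}(\bm{\theta}_k)]$, is then used to split off the genuine gradient contributions: the ``main'' momentum piece is bounded by the weighted Cauchy--Schwarz inequality $\bm{m}^\top \bm{x} \le \|\mathsf{H}_k^{-1/2}\bm{m}\|\,\|\mathsf{H}_k^{1/2}\bm{x}\| \le v_*^{-1/4}\|\bm{m}\| \cdot M^{1/4}D(\bm{\theta}^\star)$, giving $C_1$ after taking expectations; the residual full-gradient term $\nabla f(\bm{\theta}_k)^\top(\bm{\theta}_k-\bm{\theta}^\star)$ is controlled by $\|\nabla f(\bm{\theta}_k)\| \le G$ and (A2) to give $C_3$; and the batch-gradient and stochastic cross terms are controlled by $\|\nabla f_{B_k}(\bm{\theta}_k)\| \le B$, (A2), and the moment bound $\sqrt{\sigma^2/b+G^2}$ to give $C_4$. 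Throughout, total expectation is taken via the tower property and the unbiasedness $\mathbb{E}[\nabla f_{B_k}(\bm{\theta}_k)\mid \mathcal{F}_k] = \nabla f(\bm{\theta}_k)$, which is what lets the full gradient reappear from the sampled gradient. The second inequality, for $\nabla f(\bm{\theta}_k)^\top(\bm{\theta}_k - \bm{\theta})$, follows from the same computation by writing $\nabla f(\bm{\theta}_k) = \mathbb{E}[\nabla f_{B_k}(\bm{\theta}_k)\mid\mathcal{F}_k]$ and re-expressing $\nabla f_{B_k}(\bm{\theta}_k) = (1-\beta_1)^{-1}(\bm{m}_k - \beta_1 \bm{m}_{k-1})$, which replaces $C_4$ by the constant $C_5$.

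The main obstacle I anticipate is the bookkeeping of the $\beta_1$-dependence. The clean identity above already yields a bound of the shape $C_1 + C_2$ without the $1/\beta_1$ prefactors, so obtaining the stated (looser) form requires routing the argument through the momentum recursion and the bias-correction factors $1-\beta_1^{k+1}$ in a particular order, which is where the $1/\beta_1$ and the extra terms $C_3$, $C_4$, $C_5$ enter; keeping these factors consistent while simultaneously tracking the two preconditioner scalings $M^{1/4}$ and $v_*^{-1/2}$ is the delicate part. A secondary technical point is justifying the exchange of expectation and inner product for the stochastic cross terms, for which I would condition on $\mathcal{F}_k$, use unbiasedness to kill the martingale-difference part, and bound the remainder by Cauchy--Schwarz together with $\mathbb{E}[\|\nabla f_{B_k}(\bm{\theta}_k)-\nabla f(\bm{\theta}_k)\|^2\mid\mathcal{F}_k]\le\sigma^2/b$.
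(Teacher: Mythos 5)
Your plan assembles the right ingredients---the moment bounds $\mathbb{E}[\|\nabla f_{B_k}(\bm{\theta}_k)\|^2]\le\sigma^2/b+G^2$ and $\mathbb{E}[\|\bm{m}_k\|^2]\le\sigma^2/b+G^2$, the two-sided spectral control $v_*\le\hat{v}_{k,i}\le M/(1-\beta_2^{k+1})$, and the algebra of the update and momentum recursions---and these are exactly the ingredients of the paper's Lemmas \ref{lem:1}--\ref{lem:3}. But the two specific algebraic routes you sketch do not produce the stated constants, and you have deferred precisely the step where the proof lives. Your ``clean identity'' $\bm{m}_k^\top(\bm{\theta}_k-\bm{\theta}_{k+1})=\frac{\alpha}{1-\beta_1^{k+1}}\bm{m}_k^\top\mathsf{H}_k^{-1}\bm{m}_k$ yields a quadratic term with coefficient $\alpha/(1-\beta_1^{k+1})$, whereas $C_2$ carries $\alpha/(2\beta_1(1-\beta_1^{k+1}))$; for $\beta_1>1/2$ (e.g.\ $\beta_1=0.9$, the regime the theorem is about) your term is larger by the factor $2\beta_1$, and there is no reason the excess $(2\beta_1-1)C_2$ is absorbed by $C_3+C_4$ (take $\alpha$ large or $v_*$ small). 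The paper's device (Lemmas \ref{lem:1} and \ref{lem:2}) is to expand $\|\bm{\theta}_{k+1}-\bm{\theta}\|_{\mathsf{H}_k}^2$ and solve for the cross term $(\bm{\theta}-\bm{\theta}_k)^\top\bm{m}_{k-1}$ (with $\bm{m}_{k-1}$, not $\bm{m}_k$), whose coefficient in that expansion is $2\alpha\beta_1/(1-\beta_1^{k+1})$: dividing by it supplies the $1/2$ and the $1/\beta_1$ simultaneously, the leftover $\alpha^2\|\bm{\mathsf{d}}_k\|_{\mathsf{H}_k}^2$ becomes exactly $C_2$, the difference of squared $\mathsf{H}_k$-distances is bounded via $(\|u\|+\|v\|)(\|u\|-\|v\|)\le(\|u\|+\|v\|)\,\|u-v\|_{\mathsf{H}_k}$ and (A2) to give $C_1$ (your weighted Cauchy--Schwarz in disguise), and the $(1-\beta_1)\nabla f_{B_k}$ piece gives $C_3$ after conditioning; only at the end does one pass from $\bm{m}_{k-1}$ to $\bm{m}_k$, costing the additive $C_4$ (Lemma \ref{lem:2_1_1}).

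The more serious gap is in your treatment of the second inequality: re-expressing $\nabla f_{B_k}(\bm{\theta}_k)=(1-\beta_1)^{-1}(\bm{m}_k-\beta_1\bm{m}_{k-1})$ forces a factor $(1-\beta_1)^{-1}$ into the bound, which diverges as $\beta_1\to1$ and therefore cannot yield $C_5=(\beta_1^{-1}+2(1-\beta_1))D(\bm{\theta})(B+\sqrt{\sigma^2/b+G^2})$, which stays bounded (indeed decreases) as $\beta_1\to1$. Since the whole point of the theorem is that every constant improves as $\beta_1\to1$, this route defeats the result rather than proving it. The paper's Lemma \ref{lem:3} instead uses the telescoping $\nabla f_{B_k}=\bm{m}_k+(\nabla f_{B_k}-\bm{m}_{k-1})+(\bm{m}_{k-1}-\bm{m}_k)$ together with the identities $\nabla f_{B_k}-\bm{m}_{k-1}=\beta_1^{-1}(\nabla f_{B_k}-\bm{m}_k)$ and $\bm{m}_{k-1}-\bm{m}_k=(1-\beta_1)(\bm{m}_{k-1}-\nabla f_{B_k})$, which keep every coefficient bounded as $\beta_1\to1$. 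You correctly flagged the $\beta_1$-bookkeeping as the delicate point; these two identities, and the extraction of the $\bm{m}_{k-1}$ cross term from the squared-distance expansion, are the missing content rather than mere bookkeeping.
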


We would like to set constant parameters $\alpha$, $\beta_1$, and $\beta_2$ so that the upper bounds of the performance measures $\mathbb{E}[\bm{m}_k^\top (\bm{\theta}_k - \bm{\theta}^\star)]$ and $\mathbb{E}[\nabla f(\bm{\theta}_k)^\top (\bm{\theta}_k - \bm{\theta})]$, denoted by $C_i$ ($i=1,2,3,4,5$) in Theorem \ref{c1}, can be small. Let $\bm{\theta} \in \mathbb{R}^d$ be fixed. We can check that 
\begin{align*}
C_3 (\beta_1) := \frac{1 - \beta_{1}}{\beta_{1}} D(\bm{\theta}) G
\end{align*} 
is monotone decreasing for $\beta_1 \in (0,1)$. Moreover, 
\begin{align*}
&C_1 (\beta_1, b) := \frac{D(\bm{\theta}) M^{\frac{1}{4}}}{{{v}_*^{\frac{1}{4}}} \beta_{1}}
\sqrt{ \frac{\sigma^2}{b} + G^2},
\text{ }
C_4 (\beta_1, b) := (1 - \beta_{1}) D(\bm{\theta})
\left(B + \sqrt{\frac{\sigma^2}{b} + G^2} \right),\\
&C_5 (\beta_1, b) := \left(\frac{1}{\beta_1} + 2 (1 - \beta_{1}) \right)
D(\bm{\theta})
\left(B + \sqrt{\frac{\sigma^2}{b} + G^2} \right)
\end{align*} 
are monotone decreasing for $\beta_1 \in (0,1)$ and $b > 0$. Therefore, we should set $\beta_1$ close to $1$ such as $\beta_1 = 0.9$ \cite{adam}. Let us consider 
\begin{align}\label{tilde_c_2}
C_2 (\alpha,\beta_1,\beta_2,b,k) := \frac{\alpha \sqrt{1-\beta_2^{k+1}}}{2 \sqrt{v_*} \beta_{1} (1-\beta_1^{k+1})} \left( \frac{\sigma^2}{b} + G^2 \right)
\leq
\frac{\alpha }{2 \sqrt{v_*} \beta_{1} (1-\beta_1)} \left( \frac{\sigma^2}{b} + G^2 \right)
=: U_2.
\end{align}
$C_2$ is monotone decreasing for $\beta_2 \in [0,1)$ and $b > 0$. Although the function $f_k(\beta_1) := 1/(\beta_1(1-\beta_1^{k+1}))$ in $C_2$ is monotone increasing for $\beta_1$ satisfying $\beta_1^{k+1} > 1/(k+2)$, $f_k(\beta_1)$ with a sufficiently large number $k$ is monotone decreasing for $\beta_1$. Moreover, $f_{\beta_1}(k) := 1/(\beta_1(1-\beta_1^{k+1}))$ is monotone decreasing for $k \in \mathbb{N}$. Although a function $f_{\beta_2}(k) := (1-\beta_2^{k+1})^{1/2}$ is monotone increasing for $k \in \mathbb{N}$, $f_{\beta_2}(k) := (1-\beta_2^{k+1})^{1/2} \leq 1$ is small for all $\beta_2$ and all $k \in\mathbb{N}$. $C_2$ is monotone increasing for $\alpha$ and monotone decreasing for $\beta_2$. Hence, we must set a small learning rate $\alpha$ and $\beta_2$ sufficiently close to $1$, for example, $\alpha = 10^{-3}$ and $\beta_2 = 0.999$ \cite{adam}, so that $C_2$ will be small when $\beta_1$ is close to $1$. In a simplistic form, the upper bound $U_2$ of $C_2$ depends on $\alpha/(\beta_1 (1-\beta_1))$, which is monotone increasing for $\beta_1 \geq 1/2$. Hence, using $\beta_1$ close to $1$ (e.g., $\beta_1 = 0.9$) implies that $U_2$ is large. Accordingly, we must set a small learning rate $\alpha$ so that $U_2$ will be small when $\beta_1$ is close to $1$.

Therefore, Theorem \ref{c1} indicates that using a small $\alpha$ and $\beta_1$ and $\beta_2$ close to $1$ is useful to implement Adam defined by Algorithm \ref{algo:1}, as shown by the previous numerical results \cite{adam,zhang2019}. Moreover, $C_1$, $C_2$, $C_4$, and $C_5$ are monotone decreasing with the batch size $b$. Hence, we should use a large batch size $b$ to implement Adam defined by Algorithm \ref{algo:1}, as shown by the previous numerical results \cite{l.2018dont}.

Theorem \ref{c1} leads to the following corollary: 

\begin{cor}\label{cor:1}
Under the assumptions in Theorem \ref{c1}, Adam defined by Algorithm \ref{algo:1} with \eqref{constant_lr} satisfies 
\begin{align*}
\limsup_{k \to + \infty} \mathbb{E}\left[
\bm{m}_{k}^\top (\bm{\theta}_k - \bm{\theta}^\star)
\right]
&\leq
\underbrace{\frac{D(\bm{\theta}^\star) M^{\frac{1}{4}}}{v_*^{\frac{1}{4}} \beta_1}\sqrt{ \frac{\sigma^2}{b} + G^2}}_{C_1(\beta_1,b)}
+
\underbrace{\frac{\alpha}{2 \sqrt{v_*} \beta_{1}} \left( \frac{\sigma^2}{b} + G^2 \right)}_{\bar{C}_2(\alpha,\beta_1,b)}\\
&\quad + \underbrace{\frac{1 - \beta_1}{\beta_1} D(\bm{\theta}^\star) G}_{C_3(\beta_1)} 
+ \underbrace{(1-\beta_1) D(\bm{\theta}^\star) \left( B + \sqrt{ \frac{\sigma^2}{b} + G^2} \right)}_{C_4(\beta_1,b)}
\end{align*}
and for all $\bm{\theta} \in \mathbb{R}^d$,
\begin{align*}
\limsup_{k \to + \infty} \mathbb{E}\left[
\nabla f(\bm{\theta}_k)^\top (\bm{\theta}_k - \bm{\theta})
\right]
&\leq
\underbrace{\frac{D(\bm{\theta}) M^{\frac{1}{4}}}{v_*^{\frac{1}{4}} \beta_1}\sqrt{ \frac{\sigma^2}{b} + G^2}}_{C_1(\beta_1,b)}
+
\underbrace{\frac{\alpha}{2 \sqrt{v_*} \beta_{1}} \left( \frac{\sigma^2}{b} + G^2 \right)}_{\bar{C}_2(\alpha,\beta_1,b)}\\
&\quad +
\underbrace{\frac{1 - \beta_{1}}{\beta_{1}}D(\bm{\theta}) G}_{C_3(\beta_1)}
+ 
\underbrace{\left(\frac{1}{\beta_1} + 2 (1 - \beta_{1}) \right)
D(\bm{\theta}) \left(B + \sqrt{\frac{\sigma^2}{b} + G^2} \right)}_{C_5(\beta_1,b)},
\end{align*}
where the parameters are defined as in Theorem \ref{c1}.
\end{cor}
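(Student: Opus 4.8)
The plan is to obtain Corollary \ref{cor:1} directly from Theorem \ref{c1} by passing to the limit superior in $k$. The crucial observation is that, among the four terms in each bound of Theorem \ref{c1} (which involve $C_1,C_2,C_3$ together with either $C_4$ or $C_5$), only $C_2(\alpha,\beta_1,\beta_2,b,k)$ depends on the step count $k$; the quantities $C_1(\beta_1,b)$, $C_3(\beta_1)$, $C_4(\beta_1,b)$, and $C_5(\beta_1,b)$ are constants with respect to $k$. Hence the entire $k$-dependence of the right-hand side is concentrated in $C_2$, and everything reduces to evaluating $\lim_{k\to\infty} C_2$.

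Next I would compute this limit. Writing
\[
C_2(\alpha,\beta_1,\beta_2,b,k) = \frac{\alpha}{2\sqrt{v_*}\,\beta_1}\left(\frac{\sigma^2}{b}+G^2\right)\cdot\frac{\sqrt{1-\beta_2^{k+1}}}{1-\beta_1^{k+1}},
\]
I would note that since $\beta_1\in(0,1)$ and $\beta_2\in[0,1)$, we have $\beta_1^{k+1}\to 0$ and $\beta_2^{k+1}\to 0$ as $k\to\infty$. Consequently $1-\beta_1^{k+1}\to 1$ and $\sqrt{1-\beta_2^{k+1}}\to 1$ (the latter by continuity of the square root), so the trailing factor converges to $1$ and
\[
\lim_{k\to\infty} C_2(\alpha,\beta_1,\beta_2,b,k) = \frac{\alpha}{2\sqrt{v_*}\,\beta_1}\left(\frac{\sigma^2}{b}+G^2\right) =: \bar{C}_2(\alpha,\beta_1,b).
\]

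With this in hand, I would take $\limsup_{k\to\infty}$ on both sides of each inequality in Theorem \ref{c1}. Since the right-hand side is the sum of the $k$-independent constants and the convergent sequence $C_2$, it converges, so its $\limsup$ equals its limit; this yields the stated bounds with $\bar{C}_2$ in place of $C_2$. Applying this to the bound for $\mathbb{E}[\bm{m}_k^\top(\bm{\theta}_k-\bm{\theta}^\star)]$ gives the first claim, and applying it to the bound for $\mathbb{E}[\nabla f(\bm{\theta}_k)^\top(\bm{\theta}_k-\bm{\theta})]$ (valid for every fixed $\bm{\theta}\in\mathbb{R}^d$) gives the second.

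I do not anticipate any genuine obstacle here: the corollary is an elementary limiting consequence of the already-established Theorem \ref{c1}. The only point requiring a line of justification is the convergence $C_2\to\bar{C}_2$, and even that is routine once one records that $\beta_1^{k+1}$ and $\beta_2^{k+1}$ both vanish; the degenerate case $\beta_2=0$ is handled automatically, since then $\sqrt{1-\beta_2^{k+1}}=1$ for every $k$.
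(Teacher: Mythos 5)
Your proposal is correct and follows exactly the paper's own (very brief) argument: the paper simply notes that $\tilde{\beta}_{1k}=1-\beta_1^{k+1}$ and $\tilde{\beta}_{2k}=1-\beta_2^{k+1}$ converge to $1$, so the only $k$-dependent term $C_2$ converges to $\bar{C}_2$ and Theorem \ref{c1} yields the corollary. Your write-up is a more detailed version of the same limiting step, with no substantive difference.
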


The definitions of $C_1, \bar{C}_2, C_3, C_4$, and $C_5$ in Corollary \ref{cor:1} imply that using a small learning rate $\alpha$, $\beta_1$ close to $1$, and a large batch size $b$ is advantageous for Adam. 

\subsubsection{Diminishing learning rate rule}
Next, let us consider Adam defined by Algorithm \ref{algo:1} using the following diminishing learning rate $\alpha_k$ and hyperparameters $\beta_{1k}$ and $\beta_{2k}$ which converge to $1$: for all $k \geq 1$,
\begin{align}\label{diminishing_lr}
\alpha_{k} := \frac{1}{k^a}, \text{ }
\beta_{1k} := 1 - \frac{1}{k^{b_1}}, \text{ and }
\beta_{2k} := \left(1 - \frac{1}{k^{b_2}} \right)^{\frac{1}{k+1}},
\end{align}
where $\alpha_0 = 1$, $\beta_{10} = 0$, $\beta_{20} = 0$, and $a > 0, b_1 > 0$, and $b_2 > 0$ satisfy that 
\begin{align*}
a - b_1 + \frac{b_2}{2} > 0.
\end{align*}

The following is an analysis of Adam for a diminishing learning rate (see Appendix \ref{app:1} for the proof of Theorem \ref{d1}):

\begin{thm}\label{d1}
Suppose that (S1)--(S3) and (A1)--(A2) hold and $\bm{\theta}^\star \in \mathbb{R}^d$ is a stationary point of $f$. Then, Adam defined by Algorithm \ref{algo:1} using \eqref{diminishing_lr} satisfies that, for all $k\geq 1$, 
\begin{align*}
\mathbb{E}\left[
\bm{m}_{k}^\top (\bm{\theta}_k - \bm{\theta}^\star)
\right]
&\leq
\underbrace{\frac{D(\bm{\theta}^\star) M^{\frac{1}{4}} k^{b_1}}{v_*^{\frac{1}{4}}(k^{b_1} - 1)}\sqrt{ \frac{\sigma^2}{b} + G^2}}_{D_1(b_1,b,k)}
+
\underbrace{\frac{1}{2 \sqrt{v_*} (k^{b_1} - 1) k^{a +\frac{b_2}{2} -2 b_1}} \left( \frac{\sigma^2}{b} + G^2 \right)}_{D_2(a,b_1,b_2,b,k)}\\
&\quad + \underbrace{\frac{1}{k^{b_1} -1} D(\bm{\theta}^\star) G}_{D_3(b_1,k)} 
+ \underbrace{\frac{1}{k^{b_1}} D(\bm{\theta}^\star) \left( B + \sqrt{ \frac{\sigma^2}{b} + G^2} \right)}_{D_4(b_1,b,k)}
\end{align*}
and for all $\bm{\theta} \in \mathbb{R}^d$ and all $k\geq 1$,
\begin{align*}
\mathbb{E}\left[
\nabla f(\bm{\theta}_k)^\top (\bm{\theta}_k - \bm{\theta}) 
\right
]
&\leq
\underbrace{\frac{D(\bm{\theta}) M^{\frac{1}{4}} k^{b_1}}{v_*^{\frac{1}{4}}(k^{b_1} - 1)}\sqrt{ \frac{\sigma^2}{b} + G^2}}_{D_1(b_1,b,k)}
+
\underbrace{\frac{1}{2 \sqrt{v_*} (k^{b_1} - 1) k^{a +\frac{b_2}{2} -2 b_1}} \left( \frac{\sigma^2}{b} + G^2 \right)}_{D_2(a,b_1,b_2,b,k)}\\
&\quad + \underbrace{\frac{1}{k^{b_1} -1} D(\bm{\theta}) G}_{D_3(b_1,k)} 
+ \underbrace{\frac{1}{k^{b_1}} D(\bm{\theta}) \left( B + \sqrt{ \frac{\sigma^2}{b} + G^2} \right)}_{D_4(b_1,b,k)}\\
&\quad +
\underbrace{\frac{k^{{b_1}^2} + 2 k^{b_1} - 2}{k^{b_1} (k^{b_1} - 1)}
D(\bm{\theta}) \left(B + \sqrt{\frac{\sigma^2}{b} + G^2} \right)}_{D_5(b_1,b,k)},
\end{align*}
where the parameters are defined as in Theorem \ref{c1}.
\end{thm}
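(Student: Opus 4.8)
The plan is to read off Theorem \ref{d1} from the per-iteration estimates that underlie Theorem \ref{c1}. The crucial point is that the argument in Appendix \ref{app:1} bounds $\mathbb{E}[\bm{m}_k^\top(\bm{\theta}_k - \bm{\theta}^\star)]$ and $\mathbb{E}[\nabla f(\bm{\theta}_k)^\top(\bm{\theta}_k - \bm{\theta})]$ one iteration at a time: each of $C_1,\dots,C_5$ is a function of the step-$k$ parameters alone (a single power of $\beta_1$, $\alpha$, or $\beta_2$, together with the bias-correction factors $1-\beta_1^{k+1}$ and $1-\beta_2^{k+1}$ coming from steps 5 and 7 of Algorithm \ref{algo:1}), never a product or sum over the history. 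Hence constancy of the parameters is used only to drop the index $k$, and the same estimates hold verbatim after the replacements $\alpha \mapsto \alpha_k$, $\beta_1 \mapsto \beta_{1k}$, $\beta_2 \mapsto \beta_{2k}$, $\beta_1^{k+1}\mapsto \beta_{1k}^{k+1}$, and $\beta_2^{k+1}\mapsto \beta_{2k}^{k+1}$. The proof is then the insertion of the diminishing rule \eqref{diminishing_lr} into those estimates.

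First I would record the identities produced by \eqref{diminishing_lr}. From $\beta_{1k} = 1 - k^{-b_1}$ we get $1-\beta_{1k} = k^{-b_1}$, $\beta_{1k}^{-1} = k^{b_1}/(k^{b_1}-1)$, and $(1-\beta_{1k})/\beta_{1k} = 1/(k^{b_1}-1)$; and from $\beta_{2k} = (1-k^{-b_2})^{1/(k+1)}$ we get the telescoping simplification $\beta_{2k}^{k+1} = 1 - k^{-b_2}$, so $\sqrt{1-\beta_{2k}^{k+1}} = k^{-b_2/2}$. Substituting these into $C_1$, $C_3$, and $C_4$ gives $D_1$, $D_3$, and $D_4$ at once, and substituting $\beta_1 \mapsto \beta_{1k}$ into $C_5$ gives $D_5$ after putting $\beta_{1k}^{-1}$ and $2(1-\beta_{1k})$ over the common denominator $k^{b_1}(k^{b_1}-1)$. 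The gradient-measure estimate in addition requires the identity $\nabla f_{B_k}(\bm{\theta}_k) = (1-\beta_{1k})^{-1}(\bm{m}_k - \beta_{1k}\bm{m}_{k-1})$ to pass from the momentum measure to $\nabla f$; bounding the resulting $\bm{m}_{k-1}$-contribution with (A1)--(A2) is what produces the terms $D_4$ and $D_5$, whose constant-rate counterparts are the $C_4$- and $C_5$-type contributions.

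The one step that is not a pure substitution, and the \emph{main obstacle}, is turning $C_2$ into $D_2$, because the factor $(1-\beta_{1k}^{k+1})^{-1}$ does not simplify when $\beta_{1k}$ is close to one. I would bound it crudely: since $\beta_{1k}\in(0,1)$ and $k+1\ge 2$ for $k\ge 1$, we have $\beta_{1k}^{k+1}\le \beta_{1k}$, hence $1-\beta_{1k}^{k+1}\ge 1-\beta_{1k}=k^{-b_1}$ and so $(1-\beta_{1k}^{k+1})^{-1}\le k^{b_1}$. Combining this with $\alpha_k = k^{-a}$, $\sqrt{1-\beta_{2k}^{k+1}}=k^{-b_2/2}$, and $\beta_{1k}^{-1}=k^{b_1}/(k^{b_1}-1)$ yields
\begin{align*}
C_2 = \frac{\alpha_k \sqrt{1-\beta_{2k}^{k+1}}}{2\sqrt{v_*}\,\beta_{1k}\,(1-\beta_{1k}^{k+1})}\left(\frac{\sigma^2}{b}+G^2\right)
&\le \frac{1}{2\sqrt{v_*}\,(k^{b_1}-1)\,k^{a+\frac{b_2}{2}-2b_1}}\left(\frac{\sigma^2}{b}+G^2\right)\\
&= D_2 ,
\end{align*}
which is exactly the claimed term. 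Finally, I would observe that the hypothesis $a-b_1+b_2/2>0$ is not needed for the finite-$k$ inequality itself but guarantees that $D_2\sim k^{-(a-b_1+b_2/2)}\to 0$ (and similarly $D_3,D_4,D_5\to 0$), so that the per-step bound is genuinely small for large $k$; this decay is what will be exploited in the averaged estimates of Theorems \ref{c3}, \ref{d2}, and \ref{d3}.
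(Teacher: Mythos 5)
Your proposal is correct and takes essentially the same route as the paper: Lemmas \ref{lem:2_1_1} and \ref{lem:3} are already stated with iteration-dependent $\alpha_k$, $\beta_{1k}$, $\beta_{2k}$, and the paper's proof of Theorem \ref{d1} is exactly the substitution you describe, including the key estimate $1-\beta_{1k}^{k+1}\geq 1-\beta_{1k}=k^{-b_1}$ that turns the $C_2$-type term into $D_2$. The only cosmetic difference is the bookkeeping of $D_4$ in the gradient bound, where the paper simply adds it as a redundant nonnegative term ($D_5$ alone is already the diminishing-rate form of $C_5$), so nothing in your argument is affected.
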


Theorem \ref{d1} leads to the following corollary: 

\begin{cor}\label{cor:2}
Under the assumptions in Theorem \ref{d1}, Adam defined by Algorithm \ref{algo:1} with \eqref{diminishing_lr} satisfies 
\begin{align*}
\limsup_{k\to +\infty} \mathbb{E}\left[
\bm{m}_{k}^\top (\bm{\theta}_k - \bm{\theta}^\star)
\right]
&\leq
\frac{D(\bm{\theta}^\star) M^{\frac{1}{4}}}{v_*^{\frac{1}{4}}}\sqrt{ \frac{\sigma^2}{b} + G^2}
\end{align*}
and for all $\bm{\theta} \in \mathbb{R}^d$ and all $k\geq 1$,
\begin{align*}
\limsup_{k \to + \infty} \mathbb{E}\left[
\nabla f(\bm{\theta}_k)^\top (\bm{\theta}_k - \bm{\theta}) 
\right
]
&\leq
\frac{D(\bm{\theta}) M^{\frac{1}{4}}}{v_*^{\frac{1}{4}}}\sqrt{ \frac{\sigma^2}{b} + G^2}
+
D(\bm{\theta}) \left(B + \sqrt{\frac{\sigma^2}{b} + G^2} \right),
\end{align*}
where the parameters are defined as in Theorem \ref{c1}.
\end{cor}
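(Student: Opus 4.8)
The plan is to derive Corollary \ref{cor:2} by a purely termwise limit computation on the upper bounds supplied by Theorem \ref{d1}; no new probabilistic estimate is required. The key observation is that each of the bounding quantities $D_1, D_2, D_3, D_4$ (and $D_5$ for the second inequality) is a deterministic function of $k$ that converges to a finite limit as $k\to+\infty$. Hence $\lim_{k\to+\infty}(D_1+D_2+D_3+D_4)$ exists and equals the sum of the four individual limits, so that $\limsup_{k\to+\infty}\mathbb{E}[\bm{m}_k^\top(\bm{\theta}_k-\bm{\theta}^\star)]$ is bounded above by that sum. I would therefore first invoke the first inequality of Theorem \ref{d1} and then evaluate the four limits separately.

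The three vanishing-or-trivial terms are immediate. In $D_1$ the only dependence on $k$ is the factor $k^{b_1}/(k^{b_1}-1)$; since $b_1>0$ forces $k^{b_1}\to+\infty$, this factor tends to $1$, so $D_1\to D(\bm{\theta}^\star)M^{1/4}v_*^{-1/4}\sqrt{\sigma^2/b+G^2}$, which is precisely the asserted right-hand side. The terms $D_3$ and $D_4$ carry the prefactors $1/(k^{b_1}-1)$ and $1/k^{b_1}$ respectively, both of which tend to $0$; hence $D_3,D_4\to 0$.

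The one place where the standing hypothesis on the exponents enters is the vanishing of $D_2$, and this is the main (though still elementary) obstacle. The denominator of $D_2$ is $(k^{b_1}-1)\,k^{a+b_2/2-2b_1}$, which is asymptotic to $k^{b_1}\cdot k^{a+b_2/2-2b_1}=k^{a+b_2/2-b_1}$. By the assumption $a-b_1+b_2/2>0$ attached to \eqref{diminishing_lr}, the surviving exponent $a+b_2/2-b_1$ is strictly positive, so the denominator diverges and $D_2\to 0$. The only care needed here is the exponent bookkeeping: absorbing the factor $k^{b_1}$ from $k^{b_1}-1$ and verifying that the net power is controlled exactly by the condition $a-b_1+b_2/2>0$. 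Summing the four limits yields the first inequality.

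For the second inequality I would run the identical argument on $D_1,\dots,D_4$ (with $\bm{\theta}^\star$ replaced by $\bm{\theta}$, so that $D_1$ again contributes $D(\bm{\theta})M^{1/4}v_*^{-1/4}\sqrt{\sigma^2/b+G^2}$ and $D_2,D_3,D_4\to0$) and then handle the extra term $D_5$. Setting $u:=k^{b_1}\to+\infty$, the $k$-dependent coefficient of $D_5$ is a ratio whose numerator and denominator are polynomials in $u$ of the same leading degree with matching leading coefficient, so this coefficient tends to $1$; consequently $D_5\to D(\bm{\theta})(B+\sqrt{\sigma^2/b+G^2})$. Adding this surviving contribution to the limit of $D_1$ produces the second claimed bound. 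Thus the whole corollary reduces to these elementary limits once Theorem \ref{d1} is granted.
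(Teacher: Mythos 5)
Your proposal is correct and follows essentially the same route as the paper's own proof, which likewise reduces the corollary to termwise limits of $D_1,\dots,D_5$ and devotes its only displayed computation to showing that the denominator of $D_2$ diverges under $a-b_1+\frac{b_2}{2}>0$, exactly as you do. One small caution on $D_5$: your claim that its numerator and denominator have the same leading degree in $u=k^{b_1}$ requires reading the printed exponent $k^{b_1^2}$ as $(k^{b_1})^2=k^{2b_1}$ (which is what the quantity actually is, since $D_5$ arises from $1/\beta_{1k}+2\hat{\beta}_{1k}=k^{b_1}/(k^{b_1}-1)+2k^{-b_1}\to 1$); taken literally the degrees match only when $b_1=2$, so it is safer to pass to the limit of $1/\beta_{1k}+2\hat{\beta}_{1k}$ directly.
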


Corollary \ref{cor:2} implies that the upper bounds of the performance measures of Adam using $(\alpha_k)_{k\in\mathbb{N}}$ converging to $0$ and $(\beta_{1k})_{k\in\mathbb{N}}$ and $(\beta_{2k})_{k\in\mathbb{N}}$ converging to $1$ are smaller than the ones of Adam using constant parameters $\alpha$, $\beta_1$, and $\beta_2$ shown in Corollary \ref{cor:1}.

\subsection{Adam with Condition \eqref{max_1}}
\label{subsec:3.2}
As described in Theorems \ref{c1} and \ref{d1} and Corollaries \ref{cor:1} and \ref{cor:2}, although a small learning rate $\alpha_k$ and hyperparameters $\beta_1$ and $\beta_1$ close to $1$ are useful for implementing Adam, the upper bounds of the performance measures \eqref{pm_1} and \eqref{pm_3} in Theorems \ref{c1} and \ref{d1} and Corollaries \ref{cor:1} and \ref{cor:2} would not be small. This result can be attributed to that, in general, Adam does not converge to a local minimizer of the problem of minimizing $f$ over $\mathbb{R}^d$. In fact, there is a stochastic convex optimization problem in which Adam using $\beta_1 < \sqrt{\beta_2}$ (e.g., $\beta_1 = 0.9$ and $\beta_2 = 0.999$) does not converge to the optimal solution \cite[Theorem 3]{reddi2018}.

Theorem 4 in \cite{reddi2018} showed that Adam with \eqref{max_1} and \eqref{dim_1} can solve convex stochastic optimization problems (see also \cite[Theorems 2.1 and 2.2]{ada} for the analyses of AdaBelief for convex and nonconvex optimization). Motivated by the results in \cite{reddi2018,ada}, we will analyze Adam with \eqref{max_1}.

\subsubsection{Constant learning rate rule}
The following is an analysis of Adam using \eqref{max_1}, a constant learning rate, and constant hyperparameters (see Appendix \ref{app:1} for the proof of Theorem \ref{c2}).

\begin{thm}\label{c2}
Suppose that (S1)--(S3) and (A1)--(A2) hold and $\bm{\theta}^\star \in \mathbb{R}^d$ is a stationary point of $f$. Then, Adam defined by Algorithm \ref{algo:1} using \eqref{max_1} and \eqref{constant_lr} satisfies that, for all $K \geq 1$,
\begin{align*}
\frac{1}{K} \sum_{k=1}^K
\mathbb{E}\left[
\bm{m}_k^\top (\bm{\theta}_k - \bm{\theta}^\star) 
\right]
&\leq
\underbrace{\frac{d \tilde{D}(\bm{\theta}^\star) \sqrt{M}(1-\beta_1^{K+1})}{2 \alpha \beta_1 \sqrt{1 - \beta_2} K}}_{\bar{C}_1 (\alpha,\beta_1,\beta_2,K)}
+
\underbrace{\frac{\alpha}{2 \sqrt{v_*} \beta_{1} (1-\beta_1)}
\left(\frac{\sigma^2}{b} + G^2 \right)}_{\hat{C}_2(\alpha,\beta_1,b)}\\
&\quad 
+
\underbrace{\frac{1 - \beta_{1}}{\beta_{1}} D(\bm{\theta}^\star) G }_{C_3(\beta_1)}
+ \underbrace{(1-\beta_1) D(\bm{\theta}^\star) \left( B + \sqrt{\frac{\sigma^2}{b} + G^2} \right)}_{C_4(\beta_1,b)}
\end{align*}
for all $\bm{\theta} \in \mathbb{R}^d$ and all $K \geq 1$,
\begin{align*}
\frac{1}{K} \sum_{k=1}^K
\mathbb{E}\left[
\nabla f(\bm{\theta}_k)^\top (\bm{\theta}_k - \bm{\theta}) 
\right]
&\leq
\underbrace{\frac{d \tilde{D}(\bm{\theta}) \sqrt{M}(1-\beta_1^{K+1})}{2 \alpha \beta_1 \sqrt{1 - \beta_2} K}}_{\bar{C}_1 (\alpha,\beta_1,\beta_2,K)}
+
\underbrace{\frac{\alpha}{2 \sqrt{v_*} \beta_{1} (1-\beta_1)}
\left(\frac{\sigma^2}{b} + G^2 \right)}_{\hat{C}_2(\alpha,\beta_1,b)}\\ 
&\quad
+
\underbrace{\frac{1 - \beta_{1}}{\beta_{1}} D(\bm{\theta}) G}_{C_3(\beta_1)}
+
\underbrace{\left( \frac{1}{\beta_{1}} + 2 (1 - \beta_{1}) \right)
D(\bm{\theta})
\left( B + \sqrt{\frac{\sigma^2}{b} + G^2} \right)}_{C_5(\beta_1,b)}, 
\end{align*}
where the parameters are defined as in Theorem \ref{c1} and $\tilde{D} (\bm{\theta}) := \sup \{ \max_{i\in [d]} (\theta_{k,i} - \theta_i)^2 \colon k \in \mathbb{N} \} < + \infty$.
\end{thm}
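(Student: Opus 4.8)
The plan is to keep the per-step decomposition from the proof of Theorem~\ref{c1} intact and to modify only the way its leading quadratic term is summed over $k$, which is precisely where Condition~\eqref{max_1} is used. First I would write the update \eqref{constant_lr} coordinatewise as $\theta_{k+1,i}=\theta_{k,i}-\alpha(1-\beta_1^{k+1})^{-1}\hat v_{k,i}^{-1/2}m_{k,i}$ and complete the square in $\theta_{k,i}-\theta_i^\star$; this expresses $m_{k,i}(\theta_{k,i}-\theta_i^\star)$ as an $\mathsf{H}_k$-weighted difference $\tfrac{(1-\beta_1^{k+1})\sqrt{\hat v_{k,i}}}{2\alpha}[(\theta_{k,i}-\theta_i^\star)^2-(\theta_{k+1,i}-\theta_i^\star)^2]$ plus a second-order remainder $\tfrac{\alpha}{2(1-\beta_1^{k+1})}m_{k,i}^2/\sqrt{\hat v_{k,i}}$. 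Passing between $\bm m_k$ and $\bm m_{k-1}$, and between $\bm\theta_k$ and $\bm\theta_{k-1}$, through the momentum recursion $\bm m_k=\beta_1\bm m_{k-1}+(1-\beta_1)\nabla f_{B_k}(\bm\theta_k)$ --- exactly as in Theorem~\ref{c1} --- is what introduces the uniform $1/\beta_1$ factor and produces the two $(1-\beta_1)$-scaled bias terms $C_3(\beta_1)$ and $C_4(\beta_1,b)$ after invoking (A1), (A2), (S2), and (S3).

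The one genuinely new step is the summation of the quadratic part over $k=1,\dots,K$. I would set $w_{k,i}:=(1-\beta_1^{k+1})\sqrt{\hat v_{k,i}}$ and note that $1-\beta_1^{k+1}$ is nondecreasing in $k$ while \eqref{max_1} makes $\sqrt{\hat v_{k,i}}$ nondecreasing in $k$, so $w_{k,i}$ is nondecreasing. Summation by parts then gives
\begin{align*}
\sum_{k=1}^K w_{k,i}\left[(\theta_{k,i}-\theta_i^\star)^2-(\theta_{k+1,i}-\theta_i^\star)^2\right]
= w_{1,i}(\theta_{1,i}-\theta_i^\star)^2 + \sum_{k=2}^K (w_{k,i}-w_{k-1,i})(\theta_{k,i}-\theta_i^\star)^2 - w_{K,i}(\theta_{K+1,i}-\theta_i^\star)^2,
\end{align*}
and, after dropping the nonpositive last term and bounding every squared distance by $\tilde D(\bm\theta^\star)$ through (A2), the right-hand side collapses to the single boundary quantity $\tilde D(\bm\theta^\star)\,w_{K,i}$. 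Summing over $i$, using $\hat v_{K,i}=v_{K,i}/(1-\beta_2^{K+1})\le M/(1-\beta_2)$ so that $\sqrt{\hat v_{K,i}}\le\sqrt{M/(1-\beta_2)}$, and dividing by $K$ turns the $K$-independent $C_1$ of Theorem~\ref{c1} into the $\mathcal{O}(1/K)$ term $\bar C_1(\alpha,\beta_1,\beta_2,K)$.

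The remaining pieces are routine. Averaging the second-order remainder and using $\hat v_{k,i}\ge v_*$, the uniform bound $1/(1-\beta_1^{k+1})\le 1/(\beta_1(1-\beta_1))$ for $k\ge1$, and $\mathbb E[\|\bm m_k\|^2]\le\sigma^2/b+G^2$ --- the last following from the exponential-average form of $\bm m_k$ together with the conditional variance bound $\mathbb E[\|\nabla f_{B_k}(\bm\theta_k)-\nabla f(\bm\theta_k)\|^2]\le\sigma^2/b$ from (S2)--(S3) and $\|\nabla f\|\le G$ --- yields $\hat C_2(\alpha,\beta_1,b)$. For the second performance measure I would convert $\nabla f(\bm\theta_k)$ into $\bm m_k$ using $\mathbb E_{\xi_k}[\nabla f_{B_k}(\bm\theta_k)]=\nabla f(\bm\theta_k)$ and the recursion; this reproduces $\bar C_1$, $\hat C_2$, and $C_3$ and replaces $C_4$ by the larger $C_5(\beta_1,b)$ carrying the additional $1/\beta_1$ contribution.

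The main obstacle is the telescoping step itself: it is legitimate only because \eqref{max_1} forces the weights $w_{k,i}$ to be monotone in $k$, and the summation by parts must be arranged so that everything except the single $w_{K,i}$ boundary term is either nonpositive or recombines (via $w_{1,i}+\sum_{k=2}^K(w_{k,i}-w_{k-1,i})=w_{K,i}$) into that one term. A secondary difficulty is the bookkeeping of expectations: $\bm\theta_k$, $\bm m_k$, and $\hat v_{k,i}$ are mutually correlated, so the deterministic bounds (A1)--(A2) must be applied pathwise before taking expectations, and the $1/b$ variance reduction must be extracted by conditioning on the history through step $k$.
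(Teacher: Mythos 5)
Your proposal is correct and follows essentially the same route as the paper: the weighted summation by parts with the nondecreasing weights $w_{k,i}=(1-\beta_1^{k+1})\sqrt{\hat v_{k,i}}$ (monotone precisely because of Condition \eqref{max_1}), the collapse of the telescoped sum to the single boundary term $\tilde D(\bm{\theta})\,w_{K,i}$ bounded via $\hat v_{K,i}\le M/(1-\beta_2)$, and the reuse of the Theorem \ref{c1} per-step machinery for $\hat C_2$, $C_3$, $C_4$, and $C_5$ are exactly the content of the paper's Lemmas \ref{lem:2_1}, \ref{lem:3_1}, and \ref{lem:4_1} (with your $w_{k,i}$ equal to $2\alpha\beta_1\gamma_k\sqrt{\hat v_{k,i}}$ in the paper's notation). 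No gaps.
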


Let us compare Theorem \ref{c1} with Theorem \ref{c2}. A significant difference is 
\begin{align*}
C_1 (\beta_1,b) := \frac{D(\bm{\theta}^\star) M^{\frac{1}{4}}}{v_*^{\frac{1}{4}} \beta_1}\sqrt{ \frac{\sigma^2}{b} + G^2} \text{ and }
\bar{C}_1 (\alpha,\beta_1,\beta_2,K) := 
\frac{d D(\bm{\theta}^\star) \sqrt{M}(1-\beta_1^{K+1})}{2 \alpha \beta_1 \sqrt{1 - \beta_2} K}.
\end{align*}
Although $\bar{C}_1$ in Theorem \ref{c2} is monotone increasing for $\beta_2 \in [0,1)$ and $1/\alpha$, $\bar{C}_1$ becomes small when $K$ is sufficiently large. $C_1$ in Theorem \ref{cor:1} does not change for any $K$. $\hat{C}_2$ in Theorem \ref{c2} defined by 
\begin{align*}
\hat{C}_2 (\alpha, \beta_1,b) := 
\frac{\alpha}{2 \sqrt{v_*} \beta_{1} (1-\beta_1)}
\left(\frac{\sigma^2}{b} + G^2 \right)
\end{align*}
is monotone increasing for $\beta_1 \geq 1/2$ (see also the discussion in \eqref{tilde_c_2}). Hence, we must set a small learning rate $\alpha$ to make $\hat{C}_2$ small when $\beta_1$ is close to $1$. From the definition of $\bar{C}_1$, the use of a small learning rate $\alpha$ entails a large number of steps $K$. Therefore, Theorem \ref{c2} indicates that, if $K$ is sufficiently large, then Adam under Condition \eqref{max_1} has a tighter upper bound of $(1/K) \sum_{k=1}^K \mathbb{E}[ \bm{m}_k^\top (\bm{\theta}_k - \bm{\theta}^\star)]$ than Adam without Condition \eqref{max_1}. Theorem \ref{c1} implies that there exist $C_i$ ($i=1,2,3$) such that, for all $K \geq 1$,
\begin{align}\label{key_1}
\frac{1}{K} \sum_{k=1}^K
\mathbb{E}\left[
\bm{m}_k^\top (\bm{\theta}_k - \bm{\theta}^\star) 
\right]
=
\mathcal{O}\left( \frac{1}{K} \right)
+ C_1 \alpha + C_2 \frac{\alpha}{b} + C_3 \frac{1-\beta_1}{\beta_1}.
\end{align}

Here, we compare Theorem 1 in \cite{NEURIPS2018_90365351} with Theorem \ref{c2} in this paper. Theorem 1 and the proof of Theorem 1 in \cite{NEURIPS2018_90365351} show that, under the condition that $\nabla f$ is Lipschitz continuous with the Lipschitz constant $L$, Adam using $\alpha = \mathcal{O}(1/L)$ and $\beta_1 = 0$ satisfies 
\begin{align*}
\frac{1}{K} \sum_{k=1}^K \mathbb{E}\left[ \|\nabla f (\bm{\theta}_k)\|^2 \right]
\leq 2 \left(\sqrt{\beta_2} G + \epsilon \right)
\left\{
\frac{f(\bm{\theta}_1) - f (\bm{\theta}^\star)}{\alpha K}
+
\left(\frac{G\sqrt{1-\beta_2}}{\epsilon^2}
+ 
\frac{L \alpha}{2 \epsilon^2}
\right)
\frac{\sigma^2}{b}
\right\},
\end{align*}
that is,
\begin{align}\label{zaheer}
\frac{1}{K} \sum_{k=1}^K \mathbb{E}\left[ \|\nabla f (\bm{\theta}_k)\|^2 \right]
=
\mathcal{O} \left( \frac{1}{K}\right)
+ C_4 \frac{\alpha}{b} + C_5 \left(\sqrt{\beta_2} + 1 \right)\sqrt{1-\beta_2},
\end{align}
where $\epsilon > 0$ is the precision for nonconvex optimization, and $C_4$ and $C_5$ are constants. Theorem 1 in \cite{NEURIPS2018_90365351} assumes that $f$ is Lipschitz smooth, while Theorem \ref{c2} in this paper does not assume so. Hence, there is a difference in the performance measure between \eqref{key_1} and \eqref{zaheer}. However, \eqref{key_1} and \eqref{zaheer} show that using a small learning rate $\alpha$, hyperparameters close to $1$, and a large batch size $b$ is advantageous for Adam.

Let us compare the results in \cite{sca2020} with Theorem \ref{c2}. Theorem 12 in \cite{sca2020} indicates that SGD using a constant learning rate $\alpha = 1/L$, where $L > 0$ is the Lipschitz constant of the Lipschitz continuous gradient $\nabla f$, almost surely satisfies
\begin{align*}
\frac{1}{K} \sum_{k=1}^K \| \nabla f(\bm{\theta}_k) \|^2 
\leq \frac{2 L (f(\bm{\theta}_0) - f^\star)}{K} + \sigma^2,
\end{align*} 
where $f^\star := \min_{\bm{\theta} \in \mathbb{R}^d} f(\bm{\theta})$. Theorem \ref{c2} indicates that Adam using a constant learning rate $\alpha$ satisfies 
\begin{align*}
\frac{1}{K} \sum_{k=1}^K
\mathbb{E}\left[
\nabla f(\bm{\theta}_k)^\top (\bm{\theta}_k - \bm{\theta}) 
\right]
\leq \frac{d D(\bm{\theta}) \sqrt{M}(1-\beta_1^{K+1})}{2 \alpha \beta_1 \sqrt{1 - \beta_2}K}
+ \hat{C}_2(\alpha,\beta_1,b) + C_3(\beta_1) + C_5(\beta_1,b)
\end{align*}
without assuming that $\nabla f$ is Lipschitz continuous. This result indicates that using a small learning rate $\alpha$, a hyperparameter $\beta_1$ close to $1$, and a large batch size $b$ is advantageous for Adam.

$\hat{C}_2$ in Theorem \ref{c2} is obtained under conditions such that $\alpha_k = \alpha$ and $\beta_{1k} = \beta_1$ and 
\begin{align*}
\frac{(\sigma^2 b^{-1} + G^2)}{2 \sqrt{v_*} K}
\sum_{k=1}^K \frac{\alpha_k \sqrt{1 - \beta_{2k}^{k+1}}}{\beta_{1k} (1 - \beta_{1k}^{k+1})}
\leq
\frac{\alpha (\sigma^2 b^{-1} + G^2)}{2 \sqrt{v_*} \beta_1 (1-\beta_1) K}
\sum_{k=1}^K \sqrt{1 - \beta_{2k}^{k+1}}
\leq
\underbrace{\frac{\alpha (\sigma^2 b^{-1} + G^2)}{2 \sqrt{v_*} \beta_1 (1-\beta_1)}}_{\hat{C}_2 (\alpha,\beta_1,b)},
\end{align*}
where the first inequality comes from $1 - \beta_1 \leq 1 - \beta_1^{k+1}$ ($k \in \mathbb{N}$) and the second inequality comes from $1 - \beta_{2k} \leq 1$ ($k \in \mathbb{N}$). Hence, we can improve $\hat{C}_2$ by using the property of $\beta_{2k}$. Let us consider Adam defined by Algorithm \ref{algo:1} using the following constant learning rate and hyperparameters:
\begin{align}\label{constant_lr_1}
\alpha_{k} := \alpha \in (0,+\infty), \text{ }
\beta_{1k} := \beta_1 \in (0,1), \text{ and }
\beta_{2k} := \left(1 - \frac{1}{k^{b_2}}\right)^{\frac{1}{k+1}},
\end{align}
where $\beta_{20} = 0$ and $b_2 \in (0,2)$.

\begin{thm}\label{c3}
Suppose that (S1)--(S3) and (A1)--(A2) hold and $\bm{\theta}^\star \in \mathbb{R}^d$ is a stationary point of $f$. Then, Adam defined by Algorithm \ref{algo:1} using \eqref{max_1} and \eqref{constant_lr_1} satisfies that, for all $K \geq 1$,
\begin{align*}
\frac{1}{K} \sum_{k=1}^K
\mathbb{E}\left[
\bm{m}_k^\top (\bm{\theta}_k - \bm{\theta}^\star) 
\right]
&\leq
\underbrace{\frac{d \tilde{D}(\bm{\theta}^\star) \sqrt{M}}{2 \alpha \beta_1 K^{1 - \frac{b_2}{2}}}}_{\hat{C}_1 (\alpha,\beta_1,\beta_2,K)}
+
\underbrace{\frac{\alpha}{\sqrt{v_*} \beta_{1} (1-\beta_1) K^{ \frac{b_2}{2}}} \left(\frac{\sigma^2}{b} + G^2 \right)}_{\tilde{C}_2(\alpha,\beta_1,b,K)}\\
&\quad 
+
\underbrace{\frac{1 - \beta_{1}}{\beta_{1}} D(\bm{\theta}^\star) G }_{C_3(\beta_1)}
+ \underbrace{(1-\beta_1) D(\bm{\theta}^\star) \left( B + \sqrt{\frac{\sigma^2}{b} + G^2} \right)}_{C_4(\beta_1,b)}
\end{align*}
for all $\bm{\theta} \in \mathbb{R}^d$ and all $K \geq 1$,
\begin{align*}
\frac{1}{K} \sum_{k=1}^K
\mathbb{E}\left[
\nabla f(\bm{\theta}_k)^\top (\bm{\theta}_k - \bm{\theta}) 
\right]
&\leq
\underbrace{\frac{d \tilde{D}(\bm{\theta}) \sqrt{M}}{2 \alpha \beta_1 K^{1 - \frac{b_2}{2}}}}_{\hat{C}_1 (\alpha,\beta_1,\beta_2,K)}
+
\underbrace{\frac{\alpha}{\sqrt{v_*} \beta_{1} (1-\beta_1) K^{\frac{b_2}{2}}}\left(\frac{\sigma^2}{b} + G^2 \right)}_{\tilde{C}_2(\alpha,\beta_1,b,K)}\\ 
&\quad
+
\underbrace{\frac{1 - \beta_{1}}{\beta_{1}} D(\bm{\theta}) G}_{C_3(\beta_1)}
+
\underbrace{\left( \frac{1}{\beta_{1}} + 2 (1 - \beta_{1}) \right)
D(\bm{\theta})
\left( B + \sqrt{\frac{\sigma^2}{b} + G^2} \right)}_{C_5(\beta_1,b)}, 
\end{align*}
where the parameters are defined as in Theorem \ref{c2}.
\end{thm}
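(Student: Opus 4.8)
The plan is to reuse the proof of Theorem \ref{c2} essentially verbatim up to the point where the two $\beta_2$-dependent quantities are isolated, and then re-evaluate only those two quantities for the specific schedule $\beta_{2k}=(1-1/k^{b_2})^{1/(k+1)}$. Concretely, the argument for Theorem \ref{c2} first produces a master inequality
\[
\frac{1}{K}\sum_{k=1}^K \mathbb{E}\left[\bm{m}_k^\top(\bm{\theta}_k-\bm{\theta}^\star)\right] \le \underbrace{(\text{telescoping term})}_{\text{depends on }\mathrm{tr}(\mathsf{H}_K)} + \underbrace{\frac{\sigma^2 b^{-1}+G^2}{2\sqrt{v_*}K}\sum_{k=1}^K \frac{\alpha_k\sqrt{1-\beta_{2k}^{k+1}}}{\beta_{1k}(1-\beta_{1k}^{k+1})}}_{\text{variance term}} + C_3(\beta_1)+C_4(\beta_1,b),
\]
in which $C_3$ and $C_4$ involve only $\beta_1$ and therefore carry over unchanged; only the telescoping term and the variance term need to be recomputed. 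The one simplification that drives everything is that the schedule yields $\beta_{2k}^{k+1}=1-k^{-b_2}$, hence $1-\beta_{2k}^{k+1}=k^{-b_2}$ and $\sqrt{1-\beta_{2k}^{k+1}}=k^{-b_2/2}$.

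First I would bound the telescoping term. Under Condition \eqref{max_1} the matrices $\mathsf{H}_k=\mathsf{diag}(\sqrt{\hat v_{k,i}})$ are nondecreasing, so $\sum_{k=1}^K \|\bm{\theta}_k-\bm{\theta}^\star\|_{\mathsf{H}_k-\mathsf{H}_{k-1}}^2$ telescopes and is controlled by $\tilde D(\bm{\theta}^\star)\,\mathrm{tr}(\mathsf{H}_K)$. An induction on $v_{k,i}=\beta_{2k}v_{k-1,i}+(1-\beta_{2k})g_{k,i}^2$ gives $v_{K,i}\le M$, and combined with $1-\beta_{2K}^{K+1}=K^{-b_2}$ this yields $\sqrt{\hat v_{K,i}}=\sqrt{v_{K,i}/(1-\beta_{2K}^{K+1})}\le \sqrt M\,K^{b_2/2}$, so $\mathrm{tr}(\mathsf{H}_K)\le d\sqrt M\,K^{b_2/2}$. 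Dividing by $\alpha\beta_1 K$ produces exactly $\hat C_1(\alpha,\beta_1,\beta_2,K)=d\tilde D(\bm{\theta}^\star)\sqrt M/(2\alpha\beta_1 K^{1-b_2/2})$.

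Next I would estimate the variance term. Using $1-\beta_{1k}^{k+1}\ge 1-\beta_1$ together with $\sqrt{1-\beta_{2k}^{k+1}}=k^{-b_2/2}$, the sum collapses to $\frac{\alpha(\sigma^2 b^{-1}+G^2)}{2\sqrt{v_*}\beta_1(1-\beta_1)K}\sum_{k=1}^K k^{-b_2/2}$. Since $b_2\in(0,2)$ gives exponent $b_2/2\in(0,1)$, an integral comparison bounds the partial sum by $\mathcal{O}(K^{1-b_2/2})$, and after division by $K$ the term decays like $K^{-b_2/2}$, matching $\tilde C_2(\alpha,\beta_1,b,K)$ up to the absolute constant coming from the comparison. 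The $\nabla f$ inequality is obtained the same way; the only change is that the proof of Theorem \ref{c2} replaces $C_4$ by $C_5$ in its second bound, and since $C_5$ again depends only on $\beta_1$ it is inherited unchanged.

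I expect the main difficulty to be organizational rather than conceptual: one must re-enter the Theorem \ref{c2} argument at precisely the stage \emph{before} its constant-$\beta_2$ estimates $\sqrt{\hat v_{K,i}}\le\sqrt M/\sqrt{1-\beta_2}$ and $\sqrt{1-\beta_{2k}^{k+1}}\le 1$ are applied, and substitute the schedule-dependent estimates $\sqrt M\,K^{b_2/2}$ and $k^{-b_2/2}$ instead. The only genuinely new computation is the summation $\sum_{k=1}^K k^{-b_2/2}=\mathcal{O}(K^{1-b_2/2})$; the constraint $b_2<2$ is exactly what keeps this exponent below one (so that $\tilde C_2\to 0$), while $b_2>0$ keeps $1-b_2/2<1$ (so that $\hat C_1\to 0$), and balancing the two competing rates $K^{-(1-b_2/2)}$ and $K^{-b_2/2}$ at $b_2=1$ recovers the $\mathcal{O}(1/\sqrt K)$ bound advertised in Table \ref{table:2}.
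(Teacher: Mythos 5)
Your proposal is correct and follows essentially the same route as the paper: the paper likewise feeds the schedule $\beta_{2k}=(1-k^{-b_2})^{1/(k+1)}$ into the telescoped master inequality of Lemmas \ref{lem:3_1} and \ref{lem:4_1}, using precisely the identity $\sqrt{1-\beta_{2k}^{k+1}}=k^{-b_2/2}$ to turn the $\mathsf{H}_K$-trace bound into $\hat{C}_1$ and the integral comparison $\sum_{k=1}^K k^{-b_2/2}=\mathcal{O}(K^{1-b_2/2})$ to obtain $\tilde{C}_2$, with $C_3$, $C_4$, $C_5$ carried over unchanged. The only cosmetic difference is that the paper's telescoping uses the weights $\gamma_k=\tilde{\beta}_{1k}/(2\beta_1\alpha_k)$ rather than the unweighted increments $\mathsf{H}_k-\mathsf{H}_{k-1}$, which changes nothing since $\alpha_k$ is constant here.
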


The definitions of $\hat{C}_1$ and $\tilde{C}_2$ imply that the best setting of $b_2$ is $1$, since
\begin{align*}
0 < \min \left\{ 1 - \frac{b_2}{2}, \frac{b_2}{2} \right\}
\end{align*}
attains the maximum value $1/2$ when $b_2 = 1$. Hence, Adam using \eqref{max_1} and \eqref{constant_lr_1} with $b_2 = 1$ satisfies 
\begin{align*}
\frac{1}{K} \sum_{k=1}^K
\mathbb{E}\left[
\bm{m}_k^\top (\bm{\theta}_k - \bm{\theta}^\star) 
\right]
= 
\mathcal{O}\left( \frac{1}{\sqrt{K}} + \frac{1-\beta_1}{\beta_1} \right).
\end{align*}

\subsubsection{Diminishing learning rate rule}
Let us consider Adam defined by Algorithm \ref{algo:1} under Condition \eqref{max_1} and the following diminishing learning rate $\alpha_k$ and constant hyperparameters $\beta_{1}$ and $\beta_{2}$: for all $k \geq 1$,
\begin{align}\label{diminishing_lr_1}
\alpha_{k} := \frac{1}{k^a}, \text{ }
\beta_{1k} := \beta_1 \in (0,1), \text{ and }
\beta_{2k} := \beta_2 \in [0,1),
\end{align}
where $\alpha_0 = 1$ and $a \in (0,1)$. 

\begin{thm}\label{d2}
Suppose that (S1)--(S3) and (A1)--(A2) hold and $\bm{\theta}^\star \in \mathbb{R}^d$ is a stationary point of $f$. Then, Adam defined by Algorithm \ref{algo:1} using \eqref{max_1} and \eqref{diminishing_lr_1} satisfies that, for all $K \geq 1$,
\begin{align*}
\frac{1}{K} \sum_{k=1}^K
\mathbb{E}\left[
\bm{m}_k^\top (\bm{\theta}_k - \bm{\theta}^\star) 
\right]
&\leq
\underbrace{\frac{d \tilde{D}(\bm{\theta}^\star) \sqrt{M}}{2 \beta_1 \sqrt{1 - \beta_2} K^{1 - a}}}_{\bar{D}_1 (\beta_1,\beta_2,K)}
+
\underbrace{\frac{1}{\sqrt{v_*} \beta_{1} (1-\beta_1) K^a}
\left(\frac{\sigma^2}{b} + G^2 \right)}_{\bar{D}_2(\beta_1,b,K)}\\
&\quad 
+
\underbrace{\frac{1 - \beta_{1}}{\beta_{1}} D(\bm{\theta}^\star) G }_{C_3(\beta_1)}
+ \underbrace{(1-\beta_1) D(\bm{\theta}^\star) \left( B + \sqrt{\frac{\sigma^2}{b} + G^2} \right)}_{C_4(\beta_1,b)}
\end{align*}
for all $\bm{\theta} \in \mathbb{R}^d$ and all $K \geq 1$,
\begin{align*}
\frac{1}{K} \sum_{k=1}^K
\mathbb{E}\left[
\nabla f(\bm{\theta}_k)^\top (\bm{\theta}_k - \bm{\theta}) 
\right]
&\leq
\underbrace{\frac{d \tilde{D}(\bm{\theta}) \sqrt{M}}{2 \beta_1 \sqrt{1 - \beta_2} K^{1 - a}}}_{\bar{D}_1 (\beta_1,\beta_2,K)}
+
\underbrace{\frac{1}{\sqrt{v_*} \beta_{1} (1-\beta_1) K^a}\left(\frac{\sigma^2}{b} + G^2 \right)}_{\bar{D}_2(\beta_1,b,K)}\\ 
&\quad
+
\underbrace{\frac{1 - \beta_{1}}{\beta_{1}} D(\bm{\theta}) G}_{C_3(\beta_1)}
+
\underbrace{\left( \frac{1}{\beta_{1}} + 2 (1 - \beta_{1}) \right)
D(\bm{\theta})
\left( B + \sqrt{\frac{\sigma^2}{b} + G^2} \right)}_{C_5(\beta_1,b)}, 
\end{align*}
where the parameters are defined as in Theorem \ref{c2}.
\end{thm}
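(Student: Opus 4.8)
The plan is to run the same argument as in the proof of Theorem \ref{c2}, simply replacing the constant learning rate $\alpha$ by $\alpha_k = 1/k^a$ and re-evaluating every learning-rate-dependent sum; since the hyperparameters $\beta_1,\beta_2$ stay constant, only the terms coming from the step size change. First I would rewrite the update of Algorithm \ref{algo:1}, $\bm{\theta}_{k+1} = \bm{\theta}_k - \alpha_k \mathsf{H}_k^{-1}\hat{\bm{m}}_k$, as $\hat{\bm{m}}_k = \alpha_k^{-1}\mathsf{H}_k(\bm{\theta}_k - \bm{\theta}_{k+1})$, combine it with $\bm{m}_k = (1-\beta_1^{k+1})\hat{\bm{m}}_k$, and apply the three-point identity $(\bm{\theta}_k - \bm{\theta}_{k+1})^\top \mathsf{H}_k (\bm{\theta}_k - \bm{\theta}^\star) = \tfrac12(\|\bm{\theta}_k - \bm{\theta}^\star\|_{\mathsf{H}_k}^2 - \|\bm{\theta}_{k+1} - \bm{\theta}^\star\|_{\mathsf{H}_k}^2 + \|\bm{\theta}_k - \bm{\theta}_{k+1}\|_{\mathsf{H}_k}^2)$, where $\|\bm{x}\|_{\mathsf{H}_k}^2 := \bm{x}^\top \mathsf{H}_k \bm{x}$. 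This gives a per-step expression for $\bm{m}_k^\top(\bm{\theta}_k - \bm{\theta}^\star)$ split into a telescoping part carrying the coefficient $c_k := (1-\beta_1^{k+1})/(2\alpha_k)$ and a second-order part proportional to $\alpha_k \|\mathsf{H}_k^{-1/2}\hat{\bm{m}}_k\|^2$.

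The core of the work is the telescoping sum $\sum_{k=1}^K c_k (\|\bm{\theta}_k - \bm{\theta}^\star\|_{\mathsf{H}_k}^2 - \|\bm{\theta}_{k+1} - \bm{\theta}^\star\|_{\mathsf{H}_k}^2)$. Expanding coordinatewise and applying summation by parts, I would invoke Condition \eqref{max_1}, which makes each $\sqrt{\hat{v}_{k,i}}$ nondecreasing, together with the facts that $\alpha_k = 1/k^a$ is decreasing and $1-\beta_1^{k+1}$ is increasing, to conclude that $c_k \sqrt{\hat{v}_{k,i}}$ is nondecreasing in $k$. Bounding each displacement by $(\theta_{k,i}-\theta_i^\star)^2 \le \tilde{D}(\bm{\theta}^\star)$ then collapses the Abel residual (which has a fixed sign) so that only the terminal term $c_K \sqrt{\hat{v}_{K,i}}\,\tilde{D}(\bm{\theta}^\star)$ survives; summing over $i\in[d]$ and using $\sqrt{\hat{v}_{K,i}} \le \sqrt{M}/\sqrt{1-\beta_2}$ yields $\bar{D}_1$. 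The rate-determining feature is that the terminal coefficient carries $1/\alpha_K = K^a$, so after dividing by $K$ this contribution is $\mathcal{O}(K^{a-1}) = \mathcal{O}(K^{-(1-a)})$.

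For the second-order part I would bound $\hat{m}_{k,i}^2/\sqrt{\hat{v}_{k,i}}$ using $\sqrt{\hat{v}_{k,i}} \ge \sqrt{v_*}$ and a Cauchy--Schwarz estimate on $m_{k,i} = (1-\beta_1)\sum_{j\le k}\beta_1^{k-j} g_{j,i}$, then take expectations, invoking (S2) and (S3) to turn $\sum_i \mathbb{E}[g_{j,i}^2]$ into $\sigma^2/b + G^2$ and producing the $1/(\beta_1(1-\beta_1))$ factor. Summing the resulting bound against $\alpha_k$ and using $\sum_{k=1}^K \alpha_k = \sum_{k=1}^K k^{-a} = \mathcal{O}(K^{1-a})$ (valid precisely because $a\in(0,1)$), then dividing by $K$, gives $\bar{D}_2 = \mathcal{O}(K^{-a})$. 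The remaining terms $C_3(\beta_1)$, $C_4(\beta_1,b)$, and, for the gradient measure, $C_5(\beta_1,b)$, are learning-rate-independent and arise exactly as in Theorem \ref{c2} from the recursion $\bm{m}_k = \beta_1 \bm{m}_{k-1} + (1-\beta_1)\nabla f_{B_k}(\bm{\theta}_k)$, which lets me control the gap between $\bm{m}_k$ (respectively $\nabla f(\bm{\theta}_k)$, via $\mathbb{E}_{\xi_k}[\nabla f_{B_k}(\bm{\theta}_k)] = \nabla f(\bm{\theta}_k)$) and the telescoped quantity through Cauchy--Schwarz with (A1) and (A2). Averaging then assembles the bounds for \eqref{pm_2} and \eqref{pm_4}, and combining $\bar{D}_1 = \mathcal{O}(K^{a-1})$ with $\bar{D}_2 = \mathcal{O}(K^{-a})$ gives the overall $\mathcal{O}(K^{-\min\{1-a,\,a\}} + (1-\beta_1)/\beta_1)$ behaviour, which reduces to the $\mathcal{O}(1/\sqrt{K} + (1-\beta_1)/\beta_1)$ of Table \ref{table:2} at $a = 1/2$.

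I expect the telescoping step to be the main obstacle: because both the weight matrix $\mathsf{H}_k$ and the coefficient $c_k$ vary with $k$, the sum does not collapse on its own, and the whole estimate hinges on verifying that $c_k \sqrt{\hat{v}_{k,i}}$ is monotone so that the summation-by-parts residual keeps a fixed sign. This is exactly where Condition \eqref{max_1} is indispensable, and it is also where using a \emph{decreasing} $\alpha_k$ matters; without the monotonicity the terminal-term bound fails and the $K^{a-1}$ rate cannot be recovered.
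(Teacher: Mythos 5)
Your proposal is correct and follows essentially the same route as the paper: the paper's proof of Theorem \ref{d2} simply instantiates Lemmas \ref{lem:3_1} and \ref{lem:4_1} (which perform exactly the Abel/telescoping argument you describe, with the monotone coefficient $\gamma_k = \tilde{\beta}_{1k}/(2\beta_1\alpha_k)$ playing the role of your $c_k$ and Condition \eqref{max_1} plus the decreasing $\alpha_k$ guaranteeing the fixed sign of the residual) at $\alpha_k = 1/k^a$, then applies $\tilde{\beta}_{2K} \geq 1-\beta_2$ for $\bar{D}_1$ and the integral estimate $\frac{1}{K}\sum_{k=1}^K k^{-a} \leq 2K^{-a}$ for $\bar{D}_2$. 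Your identification of the two rate-determining mechanisms ($1/\alpha_K = K^a$ in the terminal telescoping term and $\sum_k \alpha_k = \mathcal{O}(K^{1-a})$ in the second-order term) and of the origin of $C_3$, $C_4$, $C_5$ matches the paper exactly.
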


The definitions of $\bar{D}_1$ and $\bar{D}_2$ imply that the best setting of $a$ is $1/2$ since
\begin{align*}
\min \left\{ 1 - a, a \right\}
\end{align*}
attains the maximum value $1/2$ when $a = 1/2$. Hence, Adam using \eqref{max_1} and \eqref{diminishing_lr_1} with $a = 1/2$ satisfies 
\begin{align*}
\frac{1}{K} \sum_{k=1}^K
\mathbb{E}\left[
\bm{m}_k^\top (\bm{\theta}_k - \bm{\theta}^\star) 
\right]
= 
\mathcal{O}\left( \frac{1}{\sqrt{K}} + \frac{1-\beta_1}{\beta_1} \right).
\end{align*}

Motivated by \eqref{constant_lr_1}, we consider Adam defined by Algorithm \ref{algo:1} using the following constant learning rate and hyperparameters:
\begin{align}\label{diminishing_lr_2}
\alpha_{k} := \frac{1}{k^a}, \text{ }
\beta_{1k} := \beta_1 \in (0,1), \text{ and }
\beta_{2k} := \left(1 - \frac{1}{k^{b_2}}\right)^{\frac{1}{k+1}},
\end{align}
where $\alpha_0 = 0$, $\beta_{20} = 0$, and $a \in (0,1)$ and $b_2 \in (0,2)$ satisfy 
\begin{align*}
1 - a - \frac{b_2}{2} > 0.
\end{align*}

\begin{thm}\label{d3}
Suppose that (S1)--(S3) and (A1)--(A2) hold and $\bm{\theta}^\star \in \mathbb{R}^d$ is a stationary point of $f$. Then, Adam defined by Algorithm \ref{algo:1} using \eqref{max_1} and \eqref{diminishing_lr_2} satisfies that, for all $K \geq 1$,
\begin{align*}
\frac{1}{K} \sum_{k=1}^K
\mathbb{E}\left[
\bm{m}_k^\top (\bm{\theta}_k - \bm{\theta}^\star) 
\right]
&\leq
\underbrace{\frac{d \tilde{D}(\bm{\theta}^\star) \sqrt{M}}{2 \beta_1 K^{1 - (a + \frac{b_2}{2})}}}_{\hat{D}_1 (\beta_1,\beta_2,K)}
+
\underbrace{\frac{(\sigma^2 b^{-1} + G^2 )}{\sqrt{v_*} \beta_{1} (1-\beta_1) K^{a + \frac{b_2}{2}}}}_{\hat{D}_2(\beta_1,b,K)}\\
&\quad 
+
\underbrace{\frac{1 - \beta_{1}}{\beta_{1}} D(\bm{\theta}^\star) G }_{C_3(\beta_1)}
+ \underbrace{(1-\beta_1) D(\bm{\theta}^\star) \left( B + \sqrt{\frac{\sigma^2}{b} + G^2} \right)}_{C_4(\beta_1,b)}
\end{align*}
for all $\bm{\theta} \in \mathbb{R}^d$ and all $K \geq 1$,
\begin{align*}
\frac{1}{K} \sum_{k=1}^K
\mathbb{E}\left[
\nabla f(\bm{\theta}_k)^\top (\bm{\theta}_k - \bm{\theta}) 
\right]
&\leq
\underbrace{\frac{d \tilde{D}(\bm{\theta}) \sqrt{M}}{2 \beta_1 K^{1 - (a + \frac{b_2}{2})}}}_{\hat{D}_1 (\beta_1,\beta_2,K)}
+
\underbrace{\frac{(\sigma^2 b^{-1} + G^2 )}{\sqrt{v_*} \beta_{1} (1-\beta_1) K^{a + \frac{b_2}{2}}}}_{\hat{D}_2(\beta_1,b,K)}\\
&\quad
+
\underbrace{\frac{1 - \beta_{1}}{\beta_{1}} D(\bm{\theta}) G}_{C_3(\beta_1)}
+
\underbrace{\left( \frac{1}{\beta_{1}} + 2 (1 - \beta_{1}) \right)
D(\bm{\theta})
\left( B + \sqrt{\frac{\sigma^2}{b} + G^2} \right)}_{C_5(\beta_1,b)}, 
\end{align*}
where the parameters are defined as in Theorem \ref{c2}.
\end{thm}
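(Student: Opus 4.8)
The plan is to treat Theorem \ref{d3} as the common specialization of Theorems \ref{c3} and \ref{d2}: the schedule \eqref{diminishing_lr_2} lets the diminishing learning rate $\alpha_k = k^{-a}$ and the second-moment schedule $\beta_{2k} = (1-k^{-b_2})^{1/(k+1)}$ act simultaneously, where c3 isolates the $\beta_{2k}$ effect and d2 isolates the $\alpha_k$ effect. So the real work is to combine the two $k$-dependent growth factors and to check that the exponent condition $1 - a - b_2/2 > 0$ forces both resulting rates to vanish as $K\to\infty$.

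First I would record the one-step identity coming from the update $\bm{\theta}_{k+1} = \bm{\theta}_k - \alpha_k \mathsf{H}_k^{-1}\hat{\bm{m}}_k$. Writing $\|\bm{x}\|_{\mathsf{H}_k}^2 := \bm{x}^\top \mathsf{H}_k \bm{x}$ and using $\bm{m}_k = (1-\beta_1^{k+1})\hat{\bm{m}}_k$ together with the polarization identity gives
\[
\bm{m}_k^\top(\bm{\theta}_k - \bm{\theta}^\star) = \frac{1-\beta_1^{k+1}}{2\alpha_k}\left(\|\bm{\theta}_k - \bm{\theta}^\star\|_{\mathsf{H}_k}^2 - \|\bm{\theta}_{k+1}-\bm{\theta}^\star\|_{\mathsf{H}_k}^2\right) + \frac{\alpha_k}{2(1-\beta_1^{k+1})}\|\bm{m}_k\|_{\mathsf{H}_k^{-1}}^2 .
\]
Summing over $k = 1,\dots,K$ and dividing by $K$, I would split the right-hand side into the telescoping part and the momentum-norm part, exactly as in the proofs of Theorems \ref{c2}, \ref{c3}, and \ref{d2}.

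The crucial specialization is that $\beta_{2k} = (1-k^{-b_2})^{1/(k+1)}$ forces $1 - \beta_{2k}^{k+1} = k^{-b_2}$, whence $\hat v_{k,i} = k^{b_2}v_{k,i}$ and therefore $\sqrt{v_*}\,k^{b_2/2} \leq \sqrt{\hat v_{k,i}} \leq \sqrt{M}\,k^{b_2/2}$ for every coordinate. For the telescoping part I would apply summation by parts to $\sum_k a_k \sqrt{\hat v_{k,i}}(y_{k,i}-y_{k+1,i})$ with $a_k := (1-\beta_1^{k+1})/(2\alpha_k)$ and $y_{k,i} := (\theta_{k,i}-\theta_i^\star)^2$; Condition \eqref{max_1} ($\mathsf{H}_{k+1}\succeq\mathsf{H}_k$ entrywise) together with the monotonicity of $a_k = (1-\beta_1^{k+1})k^{a}/2$ makes $a_k\sqrt{\hat v_{k,i}}$ non-decreasing, so after discarding the nonpositive endpoint term the sum collapses to $a_K\sqrt{\hat v_{K,i}} \leq \tfrac12(1-\beta_1^{K+1})\sqrt{M}\,K^{a+b_2/2}$. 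Bounding $y_{k,i}\leq\tilde D(\bm{\theta}^\star)$, summing over the $d$ coordinates, and dividing by $K$ yields $\hat D_1 = \mathcal{O}(K^{-(1-(a+b_2/2))})$. For the momentum-norm part I would instead use the lower bound $\sqrt{\hat v_{k,i}}\geq\sqrt{v_*}\,k^{b_2/2}$ and $\mathbb{E}[\|\bm{m}_k\|^2]\leq\sigma^2/b + G^2$ (the latter from $\mathbb{E}[\|\nabla f_{B_k}(\bm{\theta}_k)\|^2]\leq\sigma^2/b+G^2$ and convexity of $\|\cdot\|^2$), so each summand is of order $k^{-(a+b_2/2)}$ and $\tfrac1K\sum_k k^{-(a+b_2/2)} = \mathcal{O}(K^{-(a+b_2/2)})$ produces $\hat D_2$. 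Both exponents are positive precisely because $a>0$, $b_2>0$, and $1 - a - b_2/2 > 0$.

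The remaining terms $C_3$ and $C_4$ (and $C_5$ for the $\nabla f$ measure) I would obtain exactly as in the earlier theorems, from the momentum recursion $\bm{m}_k = \beta_1\bm{m}_{k-1} + (1-\beta_1)\nabla f_{B_k}(\bm{\theta}_k)$: solving it for $\bm{m}_{k-1}$ produces the $1/\beta_1$ and $(1-\beta_1)/\beta_1$ coefficients, taking the conditional expectation (S2) turns $\nabla f_{B_k}$ into $\nabla f$ and, via $\|\nabla f(\bm{\theta}_k)\|\leq G$ and $\|\bm{\theta}_k - \bm{\theta}^\star\|\leq D(\bm{\theta}^\star)$, yields $C_3$; bounding the residual $\beta_1(\bm{m}_{k-1}-\nabla f_{B_k})^\top(\bm{\theta}_k-\bm{\theta})$ by $(B+\sqrt{\sigma^2/b+G^2})D$ gives $C_4$, and carrying the same estimate through the conversion to $\nabla f(\bm{\theta}_k)^\top(\bm{\theta}_k-\bm{\theta})$ gives $C_5$. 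I expect the main obstacle to be the telescoping step: because both the scalar weight $a_k$ and the matrix $\mathsf{H}_k$ vary with $k$, the summation by parts collapses only after one verifies that the product $a_k\sqrt{\hat v_{k,i}}$ is monotone, which is exactly where Condition \eqref{max_1} and the explicit growth rates $k^{a}$ and $k^{b_2/2}$ must be combined with care; everything else is routine substitution of the schedules from \eqref{diminishing_lr_2} into the bookkeeping already performed for Theorems \ref{c2}, \ref{c3}, and \ref{d2}.
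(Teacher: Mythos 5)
Your proposal is correct and reproduces the essential mechanism of the paper's argument, which for Theorem \ref{d3} is simply the specialization of Lemmas \ref{lem:2_1}--\ref{lem:4_1} to the schedule \eqref{diminishing_lr_2}: the identity $1-\beta_{2k}^{k+1}=k^{-b_2}$, the telescoping of the weighted squared distances under Condition \eqref{max_1} with the non-decreasing weights $\tilde{\beta}_{1k}/(2\alpha_k)$ times $\sqrt{\hat{v}_{k,i}}$, the second-moment bound $\mathbb{E}[\|\bm{m}_k\|^2]\leq\sigma^2/b+G^2$, and the integral estimate for $K^{-1}\sum_{k\le K}k^{-(a+b_2/2)}$. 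The one genuine difference is your starting decomposition: you write the one-step identity directly for $\bm{m}_k^\top(\bm{\theta}_k-\bm{\theta}^\star)$ with weight $\tilde{\beta}_{1k}/(2\alpha_k)$, whereas the paper (Lemma \ref{lem:1} and \eqref{key}) first isolates $\mathbb{E}[(\bm{\theta}_k-\bm{\theta})^\top\bm{m}_{k-1}]$ by splitting $\bm{m}_k=\beta_1\bm{m}_{k-1}+(1-\beta_1)\nabla f_{B_k}(\bm{\theta}_k)$, which is where the factor $\beta_1$ in the denominators of $\hat{D}_1,\hat{D}_2$ and the term $C_3$ actually originate (the fresh-gradient piece is the one whose conditional expectation becomes $\nabla f(\bm{\theta}_k)$ and is bounded by $D(\bm{\theta})G$). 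Your direct route, if carried through consistently, proves the first inequality \emph{without} $C_3$ and $C_4$ and with $\beta_1$ replaced by $1$ in the denominators -- a marginally tighter bound that still implies the stated one since those terms are nonnegative -- so your subsequent derivation of $C_3,C_4$ from the momentum recursion is redundant rather than wrong; you are in effect describing both routes at once. Everything else (the Abel-summation collapse to $a_K\sqrt{\hat{v}_{K,i}}$, the bounds $\sqrt{v_*}k^{b_2/2}\leq\sqrt{\hat{v}_{k,i}}\leq\sqrt{M}k^{b_2/2}$, the role of $0<a+b_2/2<1$, and the conversion to the $\nabla f$ measure yielding $C_5$) matches the paper's Lemmas \ref{lem:3_1} and \ref{lem:4_1} and the proofs of Theorems \ref{c3} and \ref{d2}.
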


The definitions of $\hat{D}_1$ and $\hat{D}_2$ imply that 
\begin{align*}
\min \left\{ 1 - \left(a + \frac{b_2}{2} \right), a + \frac{b_2}{2} \right\}
\end{align*}
attains the maximum value $1/2$ when $a + b_2/2 = 1/2$, e.g., $a = 1/4$ and $b_2 = 1/2$. Hence, Adam using \eqref{max_1} and \eqref{diminishing_lr_2} with $a = 1/2$ satisfies
\begin{align*}
\frac{1}{K} \sum_{k=1}^K
\mathbb{E}\left[
\bm{m}_k^\top (\bm{\theta}_k - \bm{\theta}^\star) 
\right]
= 
\mathcal{O}\left( \frac{1}{\sqrt{K}} + \frac{1-\beta_1}{\beta_1} \right).
\end{align*}

\section{Conclusion and Future Work}
\label{sec:4}
This paper presented theoretical analyses of the Adam optimizer without assuming the Lipschitz smoothness condition for nonconvex optimization in deep learning. The analyzes indicated that Adam performs well when it uses hyperparameters close to one and not only a small learning rate but also a diminishing learning rate. Hence, our results are theoretical evidence supporting numerical evaluations showing that small constant learning rates and hyperparameters close to one are advantageous for training deep neural networks. 

This paper focused on convergence analyses of Adam for nonconvex optimization. In the future, we should consider developing convergence analyses of Adam's variants for nonconvex optimization and show theoretically that adaptive methods, such as Yogi, AMSGrad, AdaBelief, Padam, and AdamW, using hyperparameters close to one perform well.

\appendix
\section{Appendix A}
\label{app:1}
Unless stated otherwise, all relationships between random variables are supported to hold almost surely. Let $S$ be a positive definite matrix denoted by $S \in \mathbb{S}_{++}^d$. The $S$-inner product of $\mathbb{R}^d$ is defined for all $\bm{x}, \bm{y} \in \mathbb{R}^d$ by $\langle \bm{x},\bm{y} \rangle_S := \langle \bm{x}, S \bm{y} \rangle = \bm{x}^\top (S \bm{y})$, and the $S$-norm is defined by $\|\bm{x}\|_S := \sqrt{\langle \bm{x}, S \bm{x} \rangle}$.

\subsection{Lemmas}
\begin{lem}\label{lem:1}
Suppose that (S1), (S2)\eqref{gradient}, and (S3) hold. Then, Adam defined by Algorithm \ref{algo:1} satisfies the following: for all $k\in\mathbb{N}$ and all $\bm{\theta} \in \mathbb{R}^d$,
\begin{align*}
\mathbb{E}\left[ \left\| \bm{\theta}_{k+1} - \bm{\theta} \right\|_{\mathsf{H}_k}^2 \right]
&=
\mathbb{E}\left[\left\| \bm{\theta}_{k} - \bm{\theta} \right\|_{\mathsf{H}_k}^2 \right]
+ \alpha_k^2 \mathbb{E}\left[\left\| \bm{\mathsf{d}}_k \right\|_{\mathsf{H}_k}^2 \right]\\
&\quad + 2 \alpha_k \left\{
\frac{\beta_{1k}}{\tilde{\beta}_{1k}} 
\mathbb{E}\left[ (\bm{\theta} - \bm{\theta}_k)^\top \bm{m}_{k-1} \right] 
+\frac{\hat{\beta}_{1k}}{\tilde{\beta}_{1k}} 
\mathbb{E}\left[ (\bm{\theta} - \bm{\theta}_k)^\top \nabla f (\bm{\theta}_k) \right]
\right\},
\end{align*}
where $\bm{\mathsf{d}}_k := - \mathsf{H}_k^{-1} \hat{\bm{m}}_k$, $\hat{\beta}_{1k} := 1 - \beta_{1k}$, and $\tilde{\beta}_{1k} := 1 - \beta_{1k}^{k+1}$.
\end{lem}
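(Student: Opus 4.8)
The plan is to expand the squared $\mathsf{H}_k$-norm of $\bm{\theta}_{k+1} - \bm{\theta}$ via the update rule, simplify the resulting cross term by exploiting a cancellation of $\mathsf{H}_k$, and then take the total expectation while invoking the unbiasedness condition (S2). First I would write $\bm{\theta}_{k+1} - \bm{\theta} = (\bm{\theta}_k - \bm{\theta}) + \alpha_k \bm{\mathsf{d}}_k$, using step 9 of Algorithm \ref{algo:1} together with the definition $\bm{\mathsf{d}}_k := -\mathsf{H}_k^{-1}\hat{\bm{m}}_k$. Expanding the squared $\mathsf{H}_k$-norm in its associated inner product gives
\begin{align*}
\|\bm{\theta}_{k+1} - \bm{\theta}\|_{\mathsf{H}_k}^2 = \|\bm{\theta}_k - \bm{\theta}\|_{\mathsf{H}_k}^2 + 2\alpha_k \langle \bm{\theta}_k - \bm{\theta}, \bm{\mathsf{d}}_k\rangle_{\mathsf{H}_k} + \alpha_k^2 \|\bm{\mathsf{d}}_k\|_{\mathsf{H}_k}^2.
\end{align*}
The key simplification is that $\mathsf{H}_k$ drops out of the cross term: since $\langle \bm{x}, \bm{y}\rangle_{\mathsf{H}_k} = \bm{x}^\top \mathsf{H}_k \bm{y}$ and $\mathsf{H}_k \bm{\mathsf{d}}_k = -\hat{\bm{m}}_k$, we obtain $\langle \bm{\theta}_k - \bm{\theta}, \bm{\mathsf{d}}_k\rangle_{\mathsf{H}_k} = (\bm{\theta} - \bm{\theta}_k)^\top \hat{\bm{m}}_k$, a quantity no longer involving $\mathsf{H}_k$.

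Next I would decompose $\hat{\bm{m}}_k$ using steps 4 and 5 of Algorithm \ref{algo:1}. Since $\hat{\bm{m}}_k = \tilde{\beta}_{1k}^{-1}\bm{m}_k$ and $\bm{m}_k = \beta_{1k}\bm{m}_{k-1} + \hat{\beta}_{1k}\nabla f_{B_k}(\bm{\theta}_k)$, this yields
\begin{align*}
(\bm{\theta} - \bm{\theta}_k)^\top \hat{\bm{m}}_k = \frac{\beta_{1k}}{\tilde{\beta}_{1k}}(\bm{\theta} - \bm{\theta}_k)^\top \bm{m}_{k-1} + \frac{\hat{\beta}_{1k}}{\tilde{\beta}_{1k}}(\bm{\theta} - \bm{\theta}_k)^\top \nabla f_{B_k}(\bm{\theta}_k).
\end{align*}
Taking the total expectation, the first term passes through unchanged because $\bm{m}_{k-1}$ and $\bm{\theta}_k$ depend only on $\xi_0, \ldots, \xi_{k-1}$. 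For the second term, I would condition on this same history: as $\bm{\theta}_k$ is determined by $\xi_0, \ldots, \xi_{k-1}$ and $\xi_k$ is independent of them by (S2), while $\nabla f_{B_k}(\bm{\theta}_k)$ is a batch average of the stochastic gradients $\mathsf{G}_{\xi_{k,i}}(\bm{\theta}_k)$ whose conditional mean is $\nabla f(\bm{\theta}_k)$ by \eqref{gradient}, the tower property gives $\mathbb{E}[(\bm{\theta} - \bm{\theta}_k)^\top \nabla f_{B_k}(\bm{\theta}_k)] = \mathbb{E}[(\bm{\theta} - \bm{\theta}_k)^\top \nabla f(\bm{\theta}_k)]$.

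The main obstacle, and the one point demanding care, is precisely this conditioning step. Although $\mathsf{H}_k$ itself depends on $\xi_k$ through $\nabla f_{B_k}(\bm{\theta}_k)$ (so one cannot naively apply unbiasedness to a term still carrying $\mathsf{H}_k$), the cancellation identified above removes that dependence from the cross term, leaving only a factor linear in $\nabla f_{B_k}(\bm{\theta}_k)$, to which (S2) applies cleanly; the remaining second-moment term $\alpha_k^2\|\bm{\mathsf{d}}_k\|_{\mathsf{H}_k}^2$ is simply carried along unevaluated. Collecting the three pieces and substituting the simplified cross term back into the expanded norm then produces the stated identity.
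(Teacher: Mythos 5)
Your proposal is correct and follows essentially the same route as the paper's proof: expand the $\mathsf{H}_k$-norm via the update $\bm{\theta}_{k+1}=\bm{\theta}_k+\alpha_k\bm{\mathsf{d}}_k$, use $\mathsf{H}_k\bm{\mathsf{d}}_k=-\hat{\bm{m}}_k$ to eliminate $\mathsf{H}_k$ from the cross term, split $\bm{m}_k$ into its $\bm{m}_{k-1}$ and $\nabla f_{B_k}(\bm{\theta}_k)$ parts, and apply the tower property with (S2) to replace the mini-batch gradient by the full gradient in expectation. Your additional remark on why the conditioning is legitimate (the cancellation removes the $\xi_k$-dependence through $\mathsf{H}_k$ from the cross term) is a correct and worthwhile clarification of a step the paper performs implicitly.
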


\begin{proof}
Let $\bm{\theta} \in \mathbb{R}^d$ and $k\in\mathbb{N}$. The definition of $\bm{\theta}_{k+1} := \bm{\theta}_{k} + \alpha_k \bm{\mathsf{d}}_k$ implies that 
\begin{align*}
\| \bm{\theta}_{k+1} - \bm{\theta} \|_{\mathsf{H}_k}^2
= 
\| \bm{\theta}_{k} - \bm{\theta} \|_{\mathsf{H}_k}^2
+ 2 \alpha_k \langle \bm{\theta}_{k} - \bm{\theta}, \bm{\mathsf{d}}_k \rangle_{\mathsf{H}_k}
+ \alpha_k^2 \|\bm{\mathsf{d}}_k\|_{\mathsf{H}_k}^2.
\end{align*}
Moreover, the definitions of $\bm{\mathsf{d}}_k$, $\bm{m}_k$, and $\hat{\bm{m}}_k$ ensure that 
\begin{align*}
\left\langle \bm{\theta}_k - \bm{\theta}, \bm{\mathsf{d}}_k \right\rangle_{\mathsf{H}_k}
&=
\left\langle \bm{\theta}_k - \bm{\theta}, \mathsf{H}_k \bm{\mathsf{d}}_k \right\rangle
=
\left\langle \bm{\theta} - \bm{\theta}_k, \hat{\bm{m}}_k \right\rangle
=
\frac{1}{\tilde{\beta}_{1k}}
(\bm{\theta} - \bm{\theta}_k)^\top {\bm{m}}_k\\ 
&=
\frac{\beta_{1k}}{\tilde{\beta}_{1k}} 
(\bm{\theta} - \bm{\theta}_k)^\top \bm{m}_{k-1} 
+
\frac{\hat{\beta}_{1k}}{\tilde{\beta}_{1k}} 
(\bm{\theta} - \bm{\theta}_k)^\top \nabla f_{B_k}(\bm{\theta}_k).
\end{align*}
Hence, 
\begin{align}\label{ineq:004}
\begin{split}
\left\|\bm{\theta}_{k+1} - \bm{\theta} \right\|_{\mathsf{H}_k}^2
&=
\left\| \bm{\theta}_k -\bm{\theta} \right\|_{\mathsf{H}_k}^2
+ \alpha_k^2 \left\| \bm{\mathsf{d}}_k \right\|_{\mathsf{H}_k}^2\\
&\quad + 2 \alpha_k \left\{
\frac{\beta_{1k}}{\tilde{\beta}_{1k}} 
(\bm{\theta} - \bm{\theta}_k)^\top \bm{m}_{k-1} 
+ \frac{\hat{\beta}_{1k}}{\tilde{\beta}_{1k}} 
(\bm{\theta} - \bm{\theta}_k)^\top \nabla f_{B_k} (\bm{\theta}_k) 
\right\}.
\end{split}
\end{align}
Conditions \eqref{gradient} and (S3) guarantee that
\begin{align*}
\mathbb{E}\left[ \mathbb{E} \left[(\bm{\theta} - \bm{\theta}_k)^\top \nabla f_{B_k} (\bm{\theta}_k) \Big| \bm{\theta}_k \right] \right]
=
\mathbb{E} \left[(\bm{\theta} - \bm{\theta}_k)^\top 
\mathbb{E} \left[\nabla f_{B_k} (\bm{\theta}_k) \Big| \bm{\theta}_k \right] \right]
=
\mathbb{E} \left[(\bm{\theta} - \bm{\theta}_k)^\top 
\nabla f (\bm{\theta}_k) \right].
\end{align*}
Therefore, the lemma follows by taking the expectation on both sides of \eqref{ineq:004}. This completes the proof.
\end{proof}

\begin{lem}\label{lem:bdd}
Adam defined by Algorithm \ref{algo:1} satisfies that, under (S2)\eqref{gradient}, \eqref{sigma}, and (A1), for all $k\in\mathbb{N}$,
\begin{align*}
\mathbb{E}\left[ \left\|\bm{m}_k \right\|^2 \right] 
\leq 
\frac{\sigma^2}{b} + G^2, \quad 
\mathbb{E}\left[ \left\|\bm{\mathsf{d}}_k \right\|_{\mathsf{H}_k}^2 \right] 
\leq 
\frac{\sqrt{\tilde{\beta}_{2k}}}{\tilde{\beta}_{1k}^2 \sqrt{{v}_*}} \left( \frac{\sigma^2}{b} + G^2 \right),
\end{align*}
where ${v}_* := \inf \{ \min_{i\in [d]} {v}_{k,i} \colon k\in \mathbb{N}\}$, $\tilde{\beta}_{1k} := 1 - \beta_{1k}^{k+1}$, and $\tilde{\beta}_{2k} := 1 - \beta_{2k}^{k+1}$.
\end{lem}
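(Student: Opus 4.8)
The plan is to prove the two inequalities in turn, using the first to derive the second. For the first bound, I would begin from the update $\bm{m}_k = \beta_{1k}\bm{m}_{k-1} + (1-\beta_{1k})\nabla f_{B_k}(\bm{\theta}_k)$, which exhibits $\bm{m}_k$ as a convex combination of $\bm{m}_{k-1}$ and $\nabla f_{B_k}(\bm{\theta}_k)$ because $\beta_{1k}\in(0,1)$. Convexity of $\|\cdot\|^2$ then yields $\|\bm{m}_k\|^2 \leq \beta_{1k}\|\bm{m}_{k-1}\|^2 + (1-\beta_{1k})\|\nabla f_{B_k}(\bm{\theta}_k)\|^2$, so the entire argument reduces to controlling the second moment of the mini-batch gradient.

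First I would bound $\mathbb{E}[\|\nabla f_{B_k}(\bm{\theta}_k)\|^2]$ by a bias--variance split. Writing $\nabla f_{B_k}(\bm{\theta}_k) = \nabla f(\bm{\theta}_k) + (\nabla f_{B_k}(\bm{\theta}_k) - \nabla f(\bm{\theta}_k))$ and invoking (S2)\eqref{gradient} so that the cross term vanishes in expectation, the second moment separates into $\|\nabla f(\bm{\theta}_k)\|^2$ plus the variance $\mathbb{E}[\|\nabla f_{B_k}(\bm{\theta}_k) - \nabla f(\bm{\theta}_k)\|^2]$. Expanding the batch average $\frac{1}{b}\sum_{i\in[b]}(\mathsf{G}_{\xi_{k,i}}(\bm{\theta}_k) - \nabla f(\bm{\theta}_k))$ and using that the $b$ samples in $B_k$ are independent, all cross terms vanish, so the per-sample bound \eqref{sigma} gives a variance of at most $\sigma^2/b$. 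Combined with $\|\nabla f(\bm{\theta}_k)\|^2 \leq G^2$ from (A1), this gives $\mathbb{E}[\|\nabla f_{B_k}(\bm{\theta}_k)\|^2] \leq \sigma^2/b + G^2$. Taking expectations in the convexity inequality and running an induction on $k$ with base case $\bm{m}_{-1} = \bm{0}$ then closes the first bound, since $\beta_{1k}(\sigma^2/b + G^2) + (1-\beta_{1k})(\sigma^2/b + G^2) = \sigma^2/b + G^2$.

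For the second bound I would expand the $\mathsf{H}_k$-norm directly. Since $\bm{\mathsf{d}}_k = -\mathsf{H}_k^{-1}\hat{\bm{m}}_k$ and $\mathsf{H}_k = \mathsf{diag}(\sqrt{\hat{v}_{k,i}})$ is symmetric and diagonal, one has $\|\bm{\mathsf{d}}_k\|_{\mathsf{H}_k}^2 = \hat{\bm{m}}_k^\top \mathsf{H}_k^{-1}\hat{\bm{m}}_k = \sum_{i\in[d]} \hat{m}_{k,i}^2 / \sqrt{\hat{v}_{k,i}}$. The definitions $\hat{v}_{k,i} = \tilde{\beta}_{2k}^{-1} v_{k,i}$ and $v_{k,i} \geq v_*$ give $1/\sqrt{\hat{v}_{k,i}} = \sqrt{\tilde{\beta}_{2k}}/\sqrt{v_{k,i}} \leq \sqrt{\tilde{\beta}_{2k}}/\sqrt{v_*}$, whence $\|\bm{\mathsf{d}}_k\|_{\mathsf{H}_k}^2 \leq (\sqrt{\tilde{\beta}_{2k}}/\sqrt{v_*})\|\hat{\bm{m}}_k\|^2$. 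Finally $\hat{\bm{m}}_k = \tilde{\beta}_{1k}^{-1}\bm{m}_k$ gives $\|\hat{\bm{m}}_k\|^2 = \tilde{\beta}_{1k}^{-2}\|\bm{m}_k\|^2$; taking expectations and inserting the first bound produces exactly the claimed estimate $\mathbb{E}[\|\bm{\mathsf{d}}_k\|_{\mathsf{H}_k}^2] \leq (\sqrt{\tilde{\beta}_{2k}}/(\tilde{\beta}_{1k}^2\sqrt{v_*}))(\sigma^2/b + G^2)$.

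The only step requiring genuine care is the variance computation: the factor $1/b$ hinges on the independence of the $b$ samples in the batch, so that the cross terms in the expansion of the batch average vanish and only the diagonal terms, each controlled by \eqref{sigma}, survive. This is where (S2) and (S3) are essential. The remaining manipulations are purely algebraic consequences of the diagonal structure of $\mathsf{H}_k$ and the definitions of $\hat{\bm{m}}_k$, $\hat{v}_{k,i}$, and $v_*$, together with the elementary convexity and induction used for the momentum term.
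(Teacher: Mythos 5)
Your proposal is correct and follows essentially the same route as the paper's proof: convexity of $\|\cdot\|^2$ plus induction from $\bm{m}_{-1}=\bm{0}$ for the first bound, and bounding the diagonal of $\mathsf{H}_k^{-1}$ via $\hat{v}_{k,i} = v_{k,i}/\tilde{\beta}_{2k} \geq v_*/\tilde{\beta}_{2k}$ for the second (the paper phrases this through the matrix square root $\overline{\mathsf{H}}_k$ and its operator norm, which for a diagonal matrix is exactly your coordinate-wise computation). If anything, you are more explicit than the paper about where the $1/b$ comes from, namely the vanishing cross terms in the batch average under (S2)--(S3).
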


\begin{proof}
Assumption (S2)\eqref{gradient} implies that
\begin{align}\label{equation}
\begin{split}
\mathbb{E} \left[ \left\| \nabla f_{B_k} (\bm{\theta}_{k}) \right\|^2
\Big| \bm{\theta}_k
\right]
&=
\mathbb{E} \left[\left\| \nabla f_{B_k} (\bm{\theta}_{k}) 
- \nabla f (\bm{\theta}_{k}) + \nabla f (\bm{\theta}_{k}) \right\|^2
\Big| \bm{\theta}_k
\right]\\
&=
\mathbb{E} \left[ \left\| \nabla f_{B_k} (\bm{\theta}_{k}) 
- \nabla f (\bm{\theta}_{k}) \right\|^2 \Big| \bm{\theta}_k
\right]
+ 
\mathbb{E} \left[ \left\| \nabla f (\bm{\theta}_{k}) \right\|^2 \Big| \bm{\theta}_k
\right]\\
&\quad + 2 
\mathbb{E} \left[ 
(\nabla f_{B_k} (\bm{\theta}_{k}) 
- \nabla f (\bm{\theta}_{k}))^\top \nabla f (\bm{\theta}_{k})
\Big| \bm{\theta}_k \right]\\
&= 
\mathbb{E} \left[\left\| \nabla f_{B_k} (\bm{\theta}_{k}) 
- \nabla f (\bm{\theta}_{k}) \right\|^2 \Big| \bm{\theta}_k
\right]
+ 
\| \nabla f (\bm{\theta}_{k}) \|^2,
\end{split}
\end{align} 
which, together with (S2)\eqref{sigma} and (A1), implies that 
\begin{align}\label{A3}
\mathbb{E} \left[ \left\| \nabla f_{B_k} (\bm{\theta}_{k}) \right\|^2
\right]
\leq 
\frac{\sigma^2}{b} + G^2.
\end{align}
The convexity of $\|\cdot\|^2$, together with the definition of $\bm{m}_k$ and \eqref{A3}, guarantees that, for all $k\in\mathbb{N}$,
\begin{align*}
\mathbb{E}\left[ \left\|\bm{m}_k \right\|^2 \right]
&\leq \beta_{1k} \mathbb{E}\left[ \left\|\bm{m}_{k-1} \right\|^2 \right] + 
\hat{\beta}_{1k} \mathbb{E}\left[ \left\|\nabla f_{B_k} (\bm{\theta}_k) \right\|^2 \right]\\
&\leq 
\beta_{1k} \mathbb{E} \left[ \left\|\bm{m}_{k-1} \right\|^2 \right] + \hat{\beta}_{1k} 
\left( \frac{\sigma^2}{b} + G^2 \right).
\end{align*}
Induction thus ensures that, for all $k\in\mathbb{N}$,
\begin{align}\label{induction}
\mathbb{E} \left[ \left\|\bm{m}_k \right\|^2 \right] \leq 
\max \left\{ \|\bm{m}_{-1}\|^2, \frac{\sigma^2}{b} + G^2 \right\} 
= \frac{\sigma^2}{b} + G^2,
\end{align}
where $\bm{m}_{-1} = \bm{0}$. For $k\in\mathbb{N}$, $\mathsf{H}_k \in \mathbb{S}_{++}^d$ guarantees the existence of a unique matrix $\overline{\mathsf{H}}_k \in \mathbb{S}_{++}^d$ such that $\mathsf{H}_k = \overline{\mathsf{H}}_k^2$ \cite[Theorem 7.2.6]{horn}. We have that, for all $\bm{x}\in\mathbb{R}^d$, $\|\bm{x}\|_{\mathsf{H}_k}^2 = \| \overline{\mathsf{H}}_k \bm{x} \|^2$. Accordingly, the definitions of $\bm{\mathsf{d}}_k$ and $\hat{\bm{m}}_k$ imply that, for all $k\in\mathbb{N}$, 
\begin{align*}
\mathbb{E} \left[ \left\| \bm{\mathsf{d}}_k \right\|_{\mathsf{H}_k}^2 \right]
= 
\mathbb{E} \left[ \left\| \overline{\mathsf{H}}_k^{-1} \mathsf{H}_k\bm{\mathsf{d}}_k \right\|^2 \right]
\leq 
\frac{1}{\tilde{\beta}_{1k}^2} \mathbb{E} \left[ \left\| \overline{\mathsf{H}}_k^{-1} \right\|^2 \|\bm{m}_k \|^2 \right],
\end{align*}
where 
\begin{align*}
\left\| \overline{\mathsf{H}}_k^{-1} \right\| 
= \left\| \mathsf{diag}\left(\hat{v}_{k,i}^{-\frac{1}{4}} \right) \right\| 
= \max_{i\in [d]} \hat{v}_{k,i}^{-\frac{1}{4}}
= \max_{i\in [d]} \left( \frac{v_{k,i}}{\tilde{\beta}_{2k}} \right)^{-\frac{1}{4}} 
=:
\left( \frac{v_{k,i^*}}{\tilde{\beta}_{2k}} \right)^{-\frac{1}{4}}. 
\end{align*} 
Moreover, the definition of 
\begin{align*}
{v}_* := \inf \left\{ {v}_{k,i^*} \colon k\in \mathbb{N} \right\}
\end{align*} 
and (\ref{induction}) imply that, for all $k\in \mathbb{N}$,
\begin{align*}
\mathbb{E} \left[ \left\| \bm{\mathsf{d}}_k \right\|_{\mathsf{H}_k}^2 \right] \leq 
\frac{\tilde{\beta}_{2k}^{\frac{1}{2}}}{\tilde{\beta}_{1k}^2 {v}_*^{\frac{1}{2}}} \left( \frac{\sigma^2}{b} + G^2 \right),
\end{align*}
completing the proof.
\end{proof}

\begin{lem}\label{lem:2}
Suppose that (S1)--(S3) and (A1)--(A2) hold. Then, Adam defined by Algorithm \ref{algo:1} satisfies the following: for all $k\in\mathbb{N}$ and all $\bm{\theta} \in \mathbb{R}^d$,
\begin{align*}
\mathbb{E}\left[ (\bm{\theta}_k - \bm{\theta})^\top \bm{m}_{k-1} \right]
\leq
\frac{D(\bm{\theta}) M^{\frac{1}{4}}}{{{v}_*^{\frac{1}{4}}} \beta_{1k}}
\sqrt{ \frac{\sigma^2}{b} + G^2}
+
\frac{\alpha_k \sqrt{\tilde{\beta}_{2k}}}{2 \sqrt{v_*} \beta_{1k} \tilde{\beta}_{1k}} \left( \frac{\sigma^2}{b} + G^2 \right)
+
D(\bm{\theta}) G \frac{\hat{\beta}_{1k}}{\beta_{1k}},
\end{align*}
where $\nabla f_{B_k}(\bm{\theta}_k) \odot \nabla f_{B_k}(\bm{\theta}_k) := (g_{k,i}^2) \in \mathbb{R}_{+}^d$, $M := \sup\{\max_{i\in [d]} g_{k,i}^2 \colon k\in \mathbb{N}\} < + \infty$, $\hat{\beta}_{1k} := 1 - \beta_{1k}$, $\tilde{\beta}_{1k} := 1 - \beta_{1k}^{k+1}$, $\tilde{\beta}_{2k} := 1 - \beta_{2k}^{k+1}$, ${v}_*$ is defined as in Lemma \ref{lem:bdd}, and $D(\bm{\theta})$ and $G$ are defined as in Assumptions (A1) and (A2).
\end{lem}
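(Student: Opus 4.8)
The plan is to read the bound straight off the exact expectation identity in Lemma \ref{lem:1}, which conveniently already contains $\mathbb{E}[(\bm{\theta} - \bm{\theta}_k)^\top \bm{m}_{k-1}]$ on its right-hand side. First I would apply Lemma \ref{lem:1} at the given $\bm{\theta}$ and solve for the target quantity: dividing by the coefficient $2\alpha_k \beta_{1k}/\tilde{\beta}_{1k}$ and negating, one obtains
$\mathbb{E}[(\bm{\theta}_k - \bm{\theta})^\top \bm{m}_{k-1}]$ as a sum of three pieces, namely a difference of $\mathsf{H}_k$-weighted squared distances $\frac{\tilde{\beta}_{1k}}{2\alpha_k \beta_{1k}}\{\mathbb{E}[\|\bm{\theta}_k - \bm{\theta}\|_{\mathsf{H}_k}^2] - \mathbb{E}[\|\bm{\theta}_{k+1} - \bm{\theta}\|_{\mathsf{H}_k}^2]\}$, a step-size piece $\frac{\tilde{\beta}_{1k}\alpha_k}{2\beta_{1k}}\mathbb{E}[\|\bm{\mathsf{d}}_k\|_{\mathsf{H}_k}^2]$, and a gradient piece $\frac{\hat{\beta}_{1k}}{\beta_{1k}}\mathbb{E}[(\bm{\theta} - \bm{\theta}_k)^\top \nabla f(\bm{\theta}_k)]$. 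My claim is that these produce exactly the first, second, and third summands of the lemma, in that order.

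The two routine pieces come first. For the gradient piece, Cauchy--Schwarz with (A1) and (A2) gives $(\bm{\theta} - \bm{\theta}_k)^\top \nabla f(\bm{\theta}_k) \leq D(\bm{\theta}) G$, producing the third summand $D(\bm{\theta}) G\, \hat{\beta}_{1k}/\beta_{1k}$. For the step-size piece, Lemma \ref{lem:bdd} bounds $\mathbb{E}[\|\bm{\mathsf{d}}_k\|_{\mathsf{H}_k}^2] \leq \sqrt{\tilde{\beta}_{2k}}\,(\sigma^2/b + G^2)/(\tilde{\beta}_{1k}^2 \sqrt{v_*})$, and multiplying by $\tilde{\beta}_{1k}\alpha_k/(2\beta_{1k})$ collapses it to precisely the second summand, the factor $1/2$ being inherited from the $\alpha_k^2$ structure of Lemma \ref{lem:1}.

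The main work, and the step I expect to be the real obstacle, is the difference-of-squared-norms piece, because it carries a factor $1/\alpha_k$ that must cancel while leaving behind the correct powers of $M$, $v_*$, and $\tilde{\beta}_{2k}$. The idea is to rewrite the difference inside the expectation through the polarization identity $\|\bm{a}\|_{\mathsf{H}_k}^2 - \|\bm{b}\|_{\mathsf{H}_k}^2 = \langle \bm{a} - \bm{b}, \bm{a} + \bm{b} \rangle_{\mathsf{H}_k}$ with $\bm{a} = \bm{\theta}_k - \bm{\theta}$ and $\bm{b} = \bm{\theta}_{k+1} - \bm{\theta}$, so that $\bm{a} - \bm{b} = \bm{\theta}_k - \bm{\theta}_{k+1} = -\alpha_k \bm{\mathsf{d}}_k$ and the leading $\alpha_k$ cancels the $1/\alpha_k$. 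A Cauchy--Schwarz step in the $\mathsf{H}_k$-inner product then factors the estimate into $\|\bm{\mathsf{d}}_k\|_{\mathsf{H}_k}$ times $\|(\bm{\theta}_k - \bm{\theta}) + (\bm{\theta}_{k+1} - \bm{\theta})\|_{\mathsf{H}_k}$. For the first factor I would use $\bm{\mathsf{d}}_k = -\mathsf{H}_k^{-1}\hat{\bm{m}}_k$ together with $\|\mathsf{H}_k^{-1}\|^{1/2} = \max_{i} \hat{v}_{k,i}^{-1/4} \leq (\tilde{\beta}_{2k}/v_*)^{1/4}$ to get $\|\bm{\mathsf{d}}_k\|_{\mathsf{H}_k} \leq \tilde{\beta}_{2k}^{1/4}\|\bm{m}_k\|/(\tilde{\beta}_{1k} v_*^{1/4})$; for the second I would use the triangle inequality, (A2), and $\|\mathsf{H}_k\|^{1/2} = \max_{i} \hat{v}_{k,i}^{1/4} \leq (M/\tilde{\beta}_{2k})^{1/4}$ (where $v_{k,i} \leq M$ follows from the same induction used for $\bm{m}_k$ in Lemma \ref{lem:bdd}) to bound it by $2(M/\tilde{\beta}_{2k})^{1/4} D(\bm{\theta})$.

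The crucial cancellation is that the two opposite quarter-powers of $\tilde{\beta}_{2k}$ multiply to $1$, so the whole piece reduces to $\frac{M^{1/4} D(\bm{\theta})}{\beta_{1k} v_*^{1/4}}\mathbb{E}[\|\bm{m}_k\|]$, and a final Jensen step with $\mathbb{E}[\|\bm{m}_k\|^2] \leq \sigma^2/b + G^2$ from Lemma \ref{lem:bdd} yields the first summand. I would emphasize that the naive additive split $\bm{\theta}_k - \bm{\theta} = (\bm{\theta}_{k+1} - \bm{\theta}) + (\bm{\theta}_k - \bm{\theta}_{k+1})$ reproduces the same first term but doubles the second; routing the argument through the polarization structure of Lemma \ref{lem:1} is exactly what secures the sharp factor $1/2$ in the second summand.
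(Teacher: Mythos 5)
Your proposal is correct and follows essentially the same route as the paper: the same three-term decomposition from Lemma \ref{lem:1}, the same bounds on the step-size and gradient pieces via Lemma \ref{lem:bdd} and Cauchy--Schwarz with (A1)--(A2), and the same treatment of the distance-difference term (the paper factors $\|\bm{a}\|^2-\|\bm{b}\|^2$ as $(\|\bm{a}\|+\|\bm{b}\|)(\|\bm{a}\|-\|\bm{b}\|)$ and applies the reverse triangle inequality, which lands on exactly the same intermediate bound $(\|\bm{a}\|_{\mathsf{H}_k}+\|\bm{b}\|_{\mathsf{H}_k})\|\bm{a}-\bm{b}\|_{\mathsf{H}_k}$ as your polarization-plus-Cauchy--Schwarz step). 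The cancellation of the opposite quarter-powers of $\tilde{\beta}_{2k}$ that you highlight is indeed the key computation in the paper's estimate \eqref{A_k}.
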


\begin{proof}
Let $\bm{\theta} \in \mathbb{R}^d$. Lemma \ref{lem:1} guarantees that for all $k\in \mathbb{N}$,
\begin{align}
\mathbb{E}\left[ (\bm{\theta}_k - \bm{\theta})^\top \bm{m}_{k-1} \right]
&= 
\underbrace{\frac{\tilde{\beta}_{1k}}{2 \alpha_k \beta_{1k}}
\left\{
\mathbb{E}\left[\left\| \bm{\theta}_{k} - \bm{\theta} \right\|_{\mathsf{H}_k}^2 \right]
- 
\mathbb{E}\left[ \left\| \bm{\theta}_{k+1} - \bm{\theta} \right\|_{\mathsf{H}_k}^2 \right]
\right\}}_{a_k}
+ 
\underbrace{\frac{\alpha_k \tilde{\beta}_{1k}}{2 \beta_{1k}} \mathbb{E}\left[\left\| \bm{\mathsf{d}}_k \right\|_{\mathsf{H}_k}^2 \right]}_{b_k} \nonumber \\
&\quad + 
\underbrace{\frac{\hat{\beta}_{1k}}{\beta_{1k}}
\mathbb{E}\left[ (\bm{\theta} - \bm{\theta}_k)^\top \nabla f (\bm{\theta}_k) \right]}_{c_k}.\label{key}
\end{align}
The triangle inequality and the definition of $\bm{\theta}_{k+1} := \bm{\theta}_{k} + \alpha_k \bm{\mathsf{d}}_k$ ensure that
\begin{align}\label{a_k}
\begin{split}
a_k 
&= \frac{\tilde{\beta}_{1k}}{2 \alpha_k \beta_{1k}}
\mathbb{E}\left[\left( 
\left\| \bm{\theta}_{k} - \bm{\theta} \right\|_{\mathsf{H}_k}
+ 
\left\| \bm{\theta}_{k+1} - \bm{\theta} \right\|_{\mathsf{H}_k} 
\right)
\left( 
\left\| \bm{\theta}_{k} - \bm{\theta} \right\|_{\mathsf{H}_k}
- 
\left\| \bm{\theta}_{k+1} - \bm{\theta} \right\|_{\mathsf{H}_k} 
\right)
\right]\\
&\leq
\frac{\tilde{\beta}_{1k}}{2 \alpha_k \beta_{1k}}
\mathbb{E}\left[\left( 
\left\| \bm{\theta}_{k} - \bm{\theta} \right\|_{\mathsf{H}_k}
+ 
\left\| \bm{\theta}_{k+1} - \bm{\theta} \right\|_{\mathsf{H}_k} 
\right) 
\left\| \bm{\theta}_{k} - \bm{\theta}_{k+1} \right\|_{\mathsf{H}_k} 
\right]\\
&= 
\frac{\tilde{\beta}_{1k}}{2\beta_{1k}}
\mathbb{E}\left[\left( 
\left\| \bm{\theta}_{k} - \bm{\theta} \right\|_{\mathsf{H}_k}
+ 
\left\| \bm{\theta}_{k+1} - \bm{\theta} \right\|_{\mathsf{H}_k} 
\right) 
\left\| \bm{\mathsf{d}}_k \right\|_{\mathsf{H}_k} 
\right].
\end{split}
\end{align}
Let $\nabla f_{B_k}(\bm{\theta}_k) \odot \nabla f_{B_k}(\bm{\theta}_k) := (g_{k,i}^2) \in \mathbb{R}_{+}^d$. Assumption (A1) ensures that there exists $M \in \mathbb{R}$ such that, for all $k\in \mathbb{N}$, $\max_{i\in [d]} g_{k,i}^2 \leq M$. The definition of ${\bm{v}}_k$ guarantees that, for all $i\in [d]$ and all $k\in \mathbb{N}$,
\begin{align*}
v_{k,i} = \beta_{2k} v_{k-1,i} + \hat{\beta}_{2k} g_{k,i}^2.
\end{align*}
Induction thus ensures that, for all $i\in [d]$ and all $k\in \mathbb{N}$,
\begin{align*}
v_{k,i} \leq \max \{ v_{0,i}, M \} = M,
\end{align*}
where $\bm{v}_0 = (v_{0,i}) = \bm{0}$. From the definition of $\hat{\bm{v}}_k$, we have that, for all $i\in [d]$ and all $k\in \mathbb{N}$,
\begin{align}\label{v_k}
\hat{v}_{k,i} = \frac{v_{k,i}}{\tilde{\beta}_{2k}}
\leq \frac{M}{\tilde{\beta}_{2k}},
\end{align}
which implies that 
\begin{align*}
\left\| \overline{\mathsf{H}}_k \right\| = \left\| \mathsf{diag}\left(\hat{v}_{k,i}^{\frac{1}{4}} \right) \right\| = {\max_{i\in [d]} \hat{v}_{k,i}^{\frac{1}{4}}} 
\leq 
\left(\frac{M}{\tilde{\beta}_{2k}} \right)^{\frac{1}{4}}.
\end{align*}
Hence, (A2) implies that, for all $k\in \mathbb{N}$,
\begin{align*}
&\left\| \bm{\theta}_{k} - \bm{\theta} \right\|_{\mathsf{H}_k}
= 
\left\| \overline{\mathsf{H}}_k (\bm{\theta}_{k} - \bm{\theta}) \right\|
\leq
\left\| \overline{\mathsf{H}}_k \right\| \left\|\bm{\theta}_{k} - \bm{\theta}\right\|
\leq
D(\bm{\theta}) \left(\frac{M}{\tilde{\beta}_{2k}} \right)^{\frac{1}{4}},\\
&\left\| \bm{\theta}_{k+1} - \bm{\theta} \right\|_{\mathsf{H}_k}
= 
\left\| \overline{\mathsf{H}}_k (\bm{\theta}_{k+1} - \bm{\theta}) \right\|
\leq
\left\| \overline{\mathsf{H}}_k \right\| \left\|\bm{\theta}_{k+1} - \bm{\theta}\right\|
\leq
D(\bm{\theta}) \left(\frac{M}{\tilde{\beta}_{2k}} \right)^{\frac{1}{4}}.
\end{align*} 
Lemma \ref{lem:bdd}, Jensen's inequality, and \eqref{a_k} ensure that, for all $k\in \mathbb{N}$, 
\begin{align}\label{A_k}
\begin{split}
a_k 
&\leq 
\frac{\tilde{\beta}_{1k}}{2\beta_{1k}}
2 D(\bm{\theta}) \left(\frac{M}{\tilde{\beta}_{2k}} \right)^{\frac{1}{4}}
\mathbb{E}\left[\left\| \bm{\mathsf{d}}_k \right\|_{\mathsf{H}_k} 
\right]
\leq
\frac{\tilde{\beta}_{1k}}{\beta_{1k}}
D(\bm{\theta}) \frac{M^{\frac{1}{4}}}{\tilde{\beta}_{2k}^{\frac{1}{4}}}
\frac{\tilde{\beta}_{2k}^{\frac{1}{4}}}{\tilde{\beta}_{1k} {v}_*^{\frac{1}{4}}} \sqrt{ \frac{\sigma^2}{b} + G^2}
\\
&= 
\frac{D(\bm{\theta}) M^{\frac{1}{4}}}{{{v}_*^{\frac{1}{4}}}\beta_{1k}}
\sqrt{ \frac{\sigma^2}{b} + G^2}.
\end{split}
\end{align}
Lemma \ref{lem:bdd} guarantees that, for all $k\in \mathbb{N}$, 
\begin{align}\label{B_k}
b_k = 
\frac{\alpha_k \tilde{\beta}_{1k}}{2 \beta_{1k}} \mathbb{E}\left[\left\| \bm{\mathsf{d}}_k \right\|_{\mathsf{H}_k}^2 \right]
\leq
\frac{\alpha_k \tilde{\beta}_{1k}}{2 \beta_{1k}}
\frac{\sqrt{\tilde{\beta}_{2k}}}{\tilde{\beta}_{1k}^2 \sqrt{{v}_*}} \left( \frac{\sigma^2}{b} + G^2 \right)
= 
\frac{\alpha_k \sqrt{\tilde{\beta}_{2k}}}{2 \sqrt{v_*} \beta_{1k} \tilde{\beta}_{1k}} \left( \frac{\sigma^2}{b} + G^2 \right).
\end{align}
The Cauchy--Schwarz inequality and Assumption (A2) imply that, for all $k\in \mathbb{N}$,
\begin{align}\label{C_k}
c_k = 
\frac{\hat{\beta}_{1k}}{\beta_{1k}}
\mathbb{E}\left[ (\bm{\theta} - \bm{\theta}_k)^\top \nabla f (\bm{\theta}_k) \right]
\leq
D(\bm{\theta}) G \frac{\hat{\beta}_{1k}}{\beta_{1k}}.
\end{align}
Therefore, \eqref{key}, \eqref{A_k}, \eqref{B_k}, and \eqref{C_k} ensure that, for all $k\in \mathbb{N}$,
\begin{align*}
\mathbb{E}\left[ (\bm{\theta}_k - \bm{\theta})^\top \bm{m}_{k-1} \right]
\leq
\frac{D(\bm{\theta}) M^{\frac{1}{4}}}{{{v}_*^{\frac{1}{4}}}\beta_{1k}}
\sqrt{ \frac{\sigma^2}{b} + G^2}
+
\frac{\alpha_k \sqrt{\tilde{\beta}_{2k}}}{2 \sqrt{v_*} \beta_{1k} \tilde{\beta}_{1k}} \left( \frac{\sigma^2}{b} + G^2 \right)
+
D(\bm{\theta}) G \frac{\hat{\beta}_{1k}}{\beta_{1k}},
\end{align*}
which completes the proof.
\end{proof}

\begin{lem}\label{lem:2_1_1}
Suppose that (S1)--(S3) and (A1)--(A2) hold. Then, Adam defined by Algorithm \ref{algo:1} satisfies the following: for all $k\in\mathbb{N}$ and all $\bm{\theta} \in \mathbb{R}^d$,
\begin{align*}
\mathbb{E}\left[ (\bm{\theta}_k - \bm{\theta})^\top \bm{m}_{k} \right]
&\leq
\frac{D(\bm{\theta}) M^{\frac{1}{4}}}{{{v}_*^{\frac{1}{4}}} \beta_{1k}}
\sqrt{ \frac{\sigma^2}{b} + G^2}
+
\frac{\alpha_k \sqrt{\tilde{\beta}_{2k}}}{2 \sqrt{v_*} \beta_{1k} \tilde{\beta}_{1k}} \left( \frac{\sigma^2}{b} + G^2 \right)
+
D(\bm{\theta}) G \frac{\hat{\beta}_{1k}}{\beta_{1k}}\\
&\quad + \hat{\beta}_{1k} D(\bm{\theta}) \left( B + \sqrt{\frac{\sigma^2}{b} + G^2} \right),
\end{align*}
where the parameters are defined as in Lemma \ref{lem:2}.
\end{lem}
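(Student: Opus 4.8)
The plan is to reduce the claim to the already-established Lemma \ref{lem:2} by splitting the momentum vector $\bm{m}_k$ into $\bm{m}_{k-1}$ plus a single-step increment. First I would write, for each $k\in\mathbb{N}$ and $\bm{\theta}\in\mathbb{R}^d$,
\begin{align*}
(\bm{\theta}_k - \bm{\theta})^\top \bm{m}_k
= (\bm{\theta}_k - \bm{\theta})^\top \bm{m}_{k-1}
+ (\bm{\theta}_k - \bm{\theta})^\top (\bm{m}_k - \bm{m}_{k-1}),
\end{align*}
and take expectations. Lemma \ref{lem:2} bounds the expectation of the first summand by exactly the first three terms on the right-hand side of the desired inequality, so it only remains to control the increment term by $\hat{\beta}_{1k} D(\bm{\theta}) (B + \sqrt{\sigma^2/b + G^2})$.

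For the increment, the key step is the update rule in step 4 of Algorithm \ref{algo:1}: since $\bm{m}_k = \beta_{1k}\bm{m}_{k-1} + (1-\beta_{1k})\nabla f_{B_k}(\bm{\theta}_k)$, a one-line rearrangement yields
\begin{align*}
\bm{m}_k - \bm{m}_{k-1} = \hat{\beta}_{1k}\left(\nabla f_{B_k}(\bm{\theta}_k) - \bm{m}_{k-1}\right),
\end{align*}
where $\hat{\beta}_{1k} = 1 - \beta_{1k}$. Applying the Cauchy--Schwarz inequality, the triangle inequality, and Assumption (A2), I would then estimate
\begin{align*}
(\bm{\theta}_k - \bm{\theta})^\top (\bm{m}_k - \bm{m}_{k-1})
\leq \hat{\beta}_{1k} D(\bm{\theta})
\left( \left\| \nabla f_{B_k}(\bm{\theta}_k) \right\| + \left\| \bm{m}_{k-1} \right\| \right).
\end{align*}

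Finally I would bound the two norms in expectation: Assumption (A1) gives $\|\nabla f_{B_k}(\bm{\theta}_k)\| \leq B$ directly, while Lemma \ref{lem:bdd} together with Jensen's inequality gives $\mathbb{E}[\|\bm{m}_{k-1}\|] \leq \sqrt{\mathbb{E}[\|\bm{m}_{k-1}\|^2]} \leq \sqrt{\sigma^2/b + G^2}$ (this also covers $k=0$, since $\bm{m}_{-1} = \bm{0}$). This produces the increment bound $\hat{\beta}_{1k} D(\bm{\theta})(B + \sqrt{\sigma^2/b + G^2})$, and adding it to the Lemma \ref{lem:2} estimate completes the proof. I do not anticipate a genuine obstacle here: the statement is essentially a corollary of Lemma \ref{lem:2}, and the only care required is the bookkeeping of the factor $\hat{\beta}_{1k}$ and the passage from the second-moment bound in Lemma \ref{lem:bdd} to a first-moment bound on $\|\bm{m}_{k-1}\|$ via Jensen's inequality.
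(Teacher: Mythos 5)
Your proposal is correct and follows essentially the same route as the paper's own proof: the same decomposition $(\bm{\theta}_k - \bm{\theta})^\top \bm{m}_k = (\bm{\theta}_k - \bm{\theta})^\top \bm{m}_{k-1} + \hat{\beta}_{1k}(\bm{\theta}_k - \bm{\theta})^\top(\nabla f_{B_k}(\bm{\theta}_k) - \bm{m}_{k-1})$, the same use of the Cauchy--Schwarz and triangle inequalities with (A1)--(A2), and the same passage from the second-moment bound of Lemma \ref{lem:bdd} to $\mathbb{E}[\|\bm{m}_{k-1}\|]$ via Jensen, before invoking Lemma \ref{lem:2}. No gaps.
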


\begin{proof}
Let $\bm{\theta} \in \mathbb{R}^d$ and $k\in \mathbb{N}$. The definition of $\bm{m}_k$ implies that
\begin{align*}
(\bm{\theta}_k - \bm{\theta})^\top \bm{m}_{k}
&= 
(\bm{\theta}_k - \bm{\theta})^\top \bm{m}_{k-1}
+ 
(\bm{\theta}_k - \bm{\theta})^\top (\bm{m}_{k} - \bm{m}_{k-1})\\
&=
(\bm{\theta}_k - \bm{\theta})^\top \bm{m}_{k-1}
+
\hat{\beta}_{1k} (\bm{\theta}_k - \bm{\theta})^\top (\nabla f_{B_k}(\bm{\theta}_k) - \bm{m}_{k-1}),
\end{align*}
which, together with the Cauchy--Schwarz inequality, the triangle inequality, and Assumptions (A1) and (A2), implies that
\begin{align*}
(\bm{\theta}_k - \bm{\theta})^\top \bm{m}_{k}
&\leq
(\bm{\theta}_k - \bm{\theta})^\top \bm{m}_{k-1}
+
\hat{\beta}_{1k} D(\bm{\theta}) \|\nabla f_{B_k}(\bm{\theta}_k) - \bm{m}_{k-1}\|\\
&\leq 
(\bm{\theta}_k - \bm{\theta})^\top \bm{m}_{k-1}
+
\hat{\beta}_{1k} D(\bm{\theta}) (B + \|\bm{m}_{k-1}\|).
\end{align*}
Lemma \ref{lem:bdd} and Jensen's inequality guarantee that
\begin{align}\label{ineq_1}
\mathbb{E} \left[
(\bm{\theta}_k - \bm{\theta})^\top \bm{m}_{k}
\right]
\leq
\mathbb{E} \left[
(\bm{\theta}_k - \bm{\theta})^\top \bm{m}_{k-1}
\right]
+ 
\hat{\beta}_{1k}
D(\bm{\theta}) \left(B + \sqrt{\frac{\sigma^2}{b} + G^2} \right).
\end{align}
Hence, Lemma \ref{lem:2} implies that 
\begin{align*}
\mathbb{E}\left[ (\bm{\theta}_k - \bm{\theta})^\top \bm{m}_{k} \right]
&\leq
\frac{D(\bm{\theta}) M^{\frac{1}{4}}}{{{v}_*^{\frac{1}{4}}} \beta_{1k}}
\sqrt{ \frac{\sigma^2}{b} + G^2}
+
\frac{\alpha_k \sqrt{\tilde{\beta}_{2k}}}{2 \sqrt{v_*} \beta_{1k} \tilde{\beta}_{1k}} \left( \frac{\sigma^2}{b} + G^2 \right)
+
D(\bm{\theta}) G \frac{\hat{\beta}_{1k}}{\beta_{1k}}\\
&\quad + \hat{\beta}_{1k} D(\bm{\theta}) \left( B + \sqrt{\frac{\sigma^2}{b} + G^2} \right),
\end{align*}
which completes the proof.
\end{proof}

\begin{lem}\label{lem:3}
Suppose that (S1)--(S3) and (A1)--(A2) hold. Then, Adam defined by Algorithm \ref{algo:1} satisfies the following: for all $k\in\mathbb{N}$ and all $\bm{\theta} \in \mathbb{R}^d$,
\begin{align*}
\mathbb{E}\left[
(\bm{\theta}_k - \bm{\theta})^\top \nabla f(\bm{\theta}_k)
\right]
&\leq
\frac{D(\bm{\theta}) M^{\frac{1}{4}}}{{{v}_*^{\frac{1}{4}}} \beta_{1k}}
\sqrt{ \frac{\sigma^2}{b} + G^2}
+
\frac{\alpha_k \sqrt{\tilde{\beta}_{2k}}}{2 \sqrt{v_*} \beta_{1k} \tilde{\beta}_{1k}} \left( \frac{\sigma^2}{b} + G^2 \right)
+
D(\bm{\theta}) G \frac{\hat{\beta}_{1k}}{\beta_{1k}}\\
&\quad + 
D(\bm{\theta})\left(\frac{1}{\beta_{1k}} + 2 \hat{\beta}_{1k} \right)
\left(B + \sqrt{\frac{\sigma^2}{b} + G^2} \right),
\end{align*}
where the parameters are defined as in Lemma \ref{lem:2}.
\end{lem}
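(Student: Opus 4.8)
The plan is to reduce the statement to the bound on $\mathbb{E}[(\bm{\theta}_k - \bm{\theta})^\top \bm{m}_k]$ already established in Lemma \ref{lem:2_1_1}, by passing from the full gradient $\nabla f(\bm{\theta}_k)$ to the minibatch gradient $\nabla f_{B_k}(\bm{\theta}_k)$ and then to the momentum $\bm{m}_k$. First I would use conditioning on $\bm{\theta}_k$ exactly as in the proof of Lemma \ref{lem:1}: since $\bm{\theta}_k - \bm{\theta}$ is determined by $(\bm{\theta}_l)_{l=0}^k$ while $\xi_k$ is independent of it, Assumptions (S2)\eqref{gradient} and (S3) give $\mathbb{E}[\nabla f_{B_k}(\bm{\theta}_k)\mid \bm{\theta}_k] = \nabla f(\bm{\theta}_k)$, so that $\mathbb{E}[(\bm{\theta}_k - \bm{\theta})^\top \nabla f(\bm{\theta}_k)] = \mathbb{E}[(\bm{\theta}_k - \bm{\theta})^\top \nabla f_{B_k}(\bm{\theta}_k)]$. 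This is the step that lets me work with the computable minibatch gradient instead of the full gradient.

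Next I would split $\nabla f_{B_k}(\bm{\theta}_k) = \bm{m}_k + (\nabla f_{B_k}(\bm{\theta}_k) - \bm{m}_k)$, so that $\mathbb{E}[(\bm{\theta}_k - \bm{\theta})^\top \nabla f_{B_k}(\bm{\theta}_k)] = \mathbb{E}[(\bm{\theta}_k - \bm{\theta})^\top \bm{m}_k] + \mathbb{E}[(\bm{\theta}_k - \bm{\theta})^\top (\nabla f_{B_k}(\bm{\theta}_k) - \bm{m}_k)]$. For the residual I would apply the Cauchy--Schwarz and triangle inequalities together with (A2) to bound $\|\bm{\theta}_k - \bm{\theta}\| \le D(\bm{\theta})$ and (A1) to bound $\|\nabla f_{B_k}(\bm{\theta}_k)\| \le B$, obtaining $(\bm{\theta}_k - \bm{\theta})^\top (\nabla f_{B_k}(\bm{\theta}_k) - \bm{m}_k) \le D(\bm{\theta})(B + \|\bm{m}_k\|)$. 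Taking expectations and using Jensen's inequality with the first estimate of Lemma \ref{lem:bdd}, namely $\mathbb{E}[\|\bm{m}_k\|] \le \sqrt{\mathbb{E}[\|\bm{m}_k\|^2]} \le \sqrt{\sigma^2/b + G^2}$, controls this residual by $D(\bm{\theta})(B + \sqrt{\sigma^2/b + G^2})$.

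Finally I would insert the bound of Lemma \ref{lem:2_1_1} for $\mathbb{E}[(\bm{\theta}_k - \bm{\theta})^\top \bm{m}_k]$. Its first three terms are exactly the first three terms of the claimed inequality, and its last term is $\hat{\beta}_{1k} D(\bm{\theta})(B + \sqrt{\sigma^2/b + G^2})$. Adding the residual bound produces a $(B + \sqrt{\sigma^2/b + G^2})$-term with coefficient $(1 + \hat{\beta}_{1k}) D(\bm{\theta})$; since $1 \le 1/\beta_{1k}$ and $\hat{\beta}_{1k} \le 2\hat{\beta}_{1k}$ for $\beta_{1k} \in (0,1)$, this is at most $(1/\beta_{1k} + 2\hat{\beta}_{1k}) D(\bm{\theta})(B + \sqrt{\sigma^2/b + G^2})$, which is precisely the constant displayed in the statement. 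Collecting the terms then yields the asserted inequality.

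The genuinely delicate point is the first step: the identity $\mathbb{E}[(\bm{\theta}_k - \bm{\theta})^\top \nabla f(\bm{\theta}_k)] = \mathbb{E}[(\bm{\theta}_k - \bm{\theta})^\top \nabla f_{B_k}(\bm{\theta}_k)]$ must be justified through conditioning on $\bm{\theta}_k$, because $\bm{\theta}_k$ and $\nabla f_{B_k}(\bm{\theta}_k)$ are correlated and only the conditional unbiasedness in (S2)\eqref{gradient} makes it hold. The remaining steps are mechanical norm estimates; the only other thing to watch is arranging the residual so that the final constant is written in the stated form $1/\beta_{1k} + 2\hat{\beta}_{1k}$ rather than the slightly tighter $1 + \hat{\beta}_{1k}$, so that it coincides with $C_5$ of Theorem \ref{c1} and the downstream theorems combine cleanly.
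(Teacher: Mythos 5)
Your proof is correct. It follows the same overall strategy as the paper --- reduce the claim to the bound on $\mathbb{E}[(\bm{\theta}_k - \bm{\theta})^\top \bm{m}_k]$ from Lemma \ref{lem:2_1_1}, after replacing $\nabla f(\bm{\theta}_k)$ by $\nabla f_{B_k}(\bm{\theta}_k)$ via conditioning --- but your residual decomposition is simpler than the paper's. You write $\nabla f_{B_k}(\bm{\theta}_k) = \bm{m}_k + (\nabla f_{B_k}(\bm{\theta}_k) - \bm{m}_k)$ and bound the single residual by $D(\bm{\theta})(B + \|\bm{m}_k\|)$, contributing a coefficient of $1$. The paper instead inserts $\bm{m}_{k-1}$ and uses the recursion for $\bm{m}_k$ to rewrite the residual as $\tfrac{1}{\beta_{1k}}(\bm{\theta}_k - \bm{\theta})^\top(\nabla f_{B_k}(\bm{\theta}_k) - \bm{m}_k) + \hat{\beta}_{1k}(\bm{\theta}_k - \bm{\theta})^\top(\bm{m}_{k-1} - \nabla f_{B_k}(\bm{\theta}_k))$, which after Cauchy--Schwarz, Lemma \ref{lem:bdd}, and Jensen yields the coefficient $\tfrac{1}{\beta_{1k}} + \hat{\beta}_{1k}$; adding the $\hat{\beta}_{1k}$ term from Lemma \ref{lem:2_1_1} then produces exactly the stated constant $\tfrac{1}{\beta_{1k}} + 2\hat{\beta}_{1k}$. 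Your route arrives at the tighter intermediate constant $1 + \hat{\beta}_{1k}$, and your final relaxation $1 + \hat{\beta}_{1k} \leq \tfrac{1}{\beta_{1k}} + 2\hat{\beta}_{1k}$ is valid for $\beta_{1k} \in (0,1)$, so the stated inequality follows; the only cost of your approach is that the displayed constant is no longer the natural output of the computation but an explicit loosening, which you correctly flag as needed for consistency with $C_5$ downstream. The conditioning step you single out as delicate is indeed the same tower-property argument the paper uses in Lemma \ref{lem:1}, and your justification of it is sound.
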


\begin{proof}
Let $\bm{\theta} \in \mathbb{R}^d$ and $k\in\mathbb{N}$. The definition of $\bm{m}_k$ ensures that
\begin{align*}
&(\bm{\theta}_k - \bm{\theta})^\top \nabla f_{B_k}(\bm{\theta}_k)\\
&=
(\bm{\theta}_k - \bm{\theta})^\top \bm{m}_k
+ 
(\bm{\theta}_k - \bm{\theta})^\top (\nabla f_{B_k}(\bm{\theta}_k) - \bm{m}_{k-1}) 
+
(\bm{\theta}_k - \bm{\theta})^\top (\bm{m}_{k-1} - \bm{m}_{k})\\
&=
(\bm{\theta}_k - \bm{\theta})^\top \bm{m}_k
+ 
\frac{1}{\beta_{1k}}(\bm{\theta}_k - \bm{\theta})^\top (\nabla f_{B_k}(\bm{\theta}_k) - \bm{m}_{k})
+
\hat{\beta}_{1k} (\bm{\theta}_k - \bm{\theta})^\top (\bm{m}_{k-1} - \nabla f_{B_k}(\bm{\theta}_k)),
\end{align*}
which, together with the Cauchy--Schwarz inequality, the triangle inequality, and Assumptions (A1) and (A2), implies that
\begin{align*}
(\bm{\theta}_k - \bm{\theta})^\top \nabla f_{B_k}(\bm{\theta}_k)
\leq 
(\bm{\theta}_k - \bm{\theta})^\top \bm{m}_k
+ 
\frac{1}{\beta_{1k}} D(\bm{\theta}) (B + \|\bm{m}_{k}\|)
+ 
\hat{\beta}_{1k} D(\bm{\theta}) (B + \|\bm{m}_{k-1}\|).
\end{align*}
Lemma \ref{lem:bdd} and Jensen's inequality guarantee that
\begin{align}\label{ineq_2}
\mathbb{E}\left[
(\bm{\theta}_k - \bm{\theta})^\top \nabla f(\bm{\theta}_k)
\right]
\leq
\mathbb{E}\left[
(\bm{\theta}_k - \bm{\theta})^\top \bm{m}_k
\right]
+ 
\left(\frac{1}{\beta_{1k}} + \hat{\beta}_{1k} \right)
D(\bm{\theta}) \left(B + \sqrt{\frac{\sigma^2}{b} + G^2} \right),
\end{align}
which, together with Lemma \ref{lem:2_1_1}, implies that
\begin{align*}
\mathbb{E}\left[
(\bm{\theta}_k - \bm{\theta})^\top \nabla f(\bm{\theta}_k)
\right]
&\leq
\frac{D(\bm{\theta}) M^{\frac{1}{4}}}{{{v}_*^{\frac{1}{4}}} \beta_{1k}}
\sqrt{ \frac{\sigma^2}{b} + G^2}
+
\frac{\alpha_k \sqrt{\tilde{\beta}_{2k}}}{2 \sqrt{v_*} \beta_{1k} \tilde{\beta}_{1k}} \left( \frac{\sigma^2}{b} + G^2 \right)
+
D(\bm{\theta}) G \frac{\hat{\beta}_{1k}}{\beta_{1k}}\\
&\quad + 
D(\bm{\theta})\left(\frac{1}{\beta_{1k}} + 2 \hat{\beta}_{1k} \right)
\left(B + \sqrt{\frac{\sigma^2}{b} + G^2} \right),
\end{align*}
which completes the proof.
\end{proof}

\begin{lem}\label{lem:2_1}
Suppose that (S1)--(S3) and (A1)--(A2) hold, $\beta_{1k} := \beta_1 \in (0,1)$, and $(\alpha_k)_{k\in\mathbb{N}}$ is monotone decreasing. Then, Adam defined by Algorithm \ref{algo:1} with \eqref{max_1} satisfies the following: for all $K\geq 1$ and all $\bm{\theta} \in \mathbb{R}^d$,
\begin{align*}
\frac{1}{K}\sum_{k=1}^K \mathbb{E}\left[ (\bm{\theta}_k - \bm{\theta})^\top \bm{m}_{k-1} \right]
\leq
\frac{d \tilde{D}(\bm{\theta}) \sqrt{M} \tilde{\beta}_{1K}}{2 \beta_1 \alpha_K \sqrt{\tilde{\beta}_{2K}}K}
+
\frac{(\sigma^2 b^{-1} + G^2)}{2 \sqrt{v_*} \beta_{1} \hat{\beta}_1 K}
\sum_{k=1}^K \alpha_k \sqrt{\tilde{\beta}_{2k}} 
+
D(\bm{\theta}) G \frac{\hat{\beta}_{1}}{\beta_{1}},
\end{align*}
where the parameters are defined as in Lemma \ref{lem:2} and $\tilde{D} (\bm{\theta}) := \sup \{ \max_{i\in [d]} (\theta_{k,i} - \theta_i)^2 \colon k \in \mathbb{N} \} < + \infty$.
\end{lem}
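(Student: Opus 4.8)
The plan is to start from the exact per-step identity that underlies Lemma \ref{lem:2}. Since $\beta_{1k}\equiv\beta_1$, Lemma \ref{lem:1} gives $\mathbb{E}[(\bm{\theta}_k - \bm{\theta})^\top \bm{m}_{k-1}] = a_k + b_k + c_k$, where $a_k = \frac{\tilde{\beta}_{1k}}{2\alpha_k\beta_1}\{\mathbb{E}[\|\bm{\theta}_k-\bm{\theta}\|^2_{\mathsf{H}_k}] - \mathbb{E}[\|\bm{\theta}_{k+1}-\bm{\theta}\|^2_{\mathsf{H}_k}]\}$, $b_k = \frac{\alpha_k\tilde{\beta}_{1k}}{2\beta_1}\mathbb{E}[\|\bm{\mathsf{d}}_k\|^2_{\mathsf{H}_k}]$, and $c_k = \frac{\hat{\beta}_1}{\beta_1}\mathbb{E}[(\bm{\theta}-\bm{\theta}_k)^\top\nabla f(\bm{\theta}_k)]$. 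I would average over $k=1,\dots,K$ and bound the three sums separately. The $b_k$ and $c_k$ sums are routine and mirror Lemma \ref{lem:2}: Lemma \ref{lem:bdd} yields $\sum_k b_k \le \frac{\sigma^2/b+G^2}{2\beta_1\sqrt{v_*}}\sum_k \frac{\alpha_k\sqrt{\tilde{\beta}_{2k}}}{\tilde{\beta}_{1k}}$, and since $\tilde{\beta}_{1k}=1-\beta_1^{k+1}\ge 1-\beta_1=\hat{\beta}_1$ I may replace $\tilde{\beta}_{1k}$ by $\hat{\beta}_1$ to produce the middle term of the claim, while Cauchy--Schwarz with (A1)--(A2) gives $c_k\le \frac{\hat{\beta}_1}{\beta_1}D(\bm{\theta})G$, averaging to the last term.

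The crux — and the only place where Condition \eqref{max_1} and the monotonicity of $\alpha_k$ are used — is the telescoping sum $\sum_{k=1}^K a_k$. Here I would push the expectation to the very end and manipulate the random summand $\sum_{k} u_k(\|\bm{\theta}_k-\bm{\theta}\|^2_{\mathsf{H}_k}-\|\bm{\theta}_{k+1}-\bm{\theta}\|^2_{\mathsf{H}_k})$ pointwise, where $u_k:=\tilde{\beta}_{1k}/(2\alpha_k\beta_1)$. Writing the $\mathsf{H}_k$-norms coordinatewise with $s_{k,i}:=\sqrt{\hat{v}_{k,i}}$ and $x_{k,i}:=(\theta_{k,i}-\theta_i)^2$, the sum becomes $\sum_{i\in[d]}\sum_{k=1}^K w_{k,i}(x_{k,i}-x_{k+1,i})$ with weights $w_{k,i}:=u_k s_{k,i}$. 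The decisive observation is that $w_{k,i}$ is nondecreasing in $k$: the factor $u_k$ is deterministic and nondecreasing because $\alpha_k$ is decreasing and $\tilde{\beta}_{1k}=1-\beta_1^{k+1}$ is increasing, while $s_{k,i}$ is (almost surely) nondecreasing precisely by Condition \eqref{max_1}, i.e. $\hat{v}_{k+1,i}\ge\hat{v}_{k,i}$.

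Given this monotonicity I would apply Abel summation to each coordinate, obtaining $\sum_{k=1}^K w_{k,i}(x_{k,i}-x_{k+1,i}) = w_{1,i}x_{1,i}+\sum_{k=2}^K(w_{k,i}-w_{k-1,i})x_{k,i}-w_{K,i}x_{K+1,i}$. Discarding the nonnegative endpoint term $w_{K,i}x_{K+1,i}$ and bounding every $x_{k,i}\le\tilde{D}(\bm{\theta})$ by its definition, the nonnegative weight-increments telescope to $w_{K,i}$, so $\sum_{k=1}^K w_{k,i}(x_{k,i}-x_{k+1,i})\le \tilde{D}(\bm{\theta})w_{K,i}=\tilde{D}(\bm{\theta})u_K\sqrt{\hat{v}_{K,i}}$. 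Summing over the $d$ coordinates and using $\hat{v}_{K,i}\le M/\tilde{\beta}_{2K}$ (from the definition of $\bm{v}_k$ and (A1), exactly as derived in the proof of Lemma \ref{lem:2}) gives $\sum_{k=1}^K a_k\le \tilde{D}(\bm{\theta})u_K d\sqrt{M}/\sqrt{\tilde{\beta}_{2K}}$; since this is deterministic, taking expectations leaves it unchanged, and dividing by $K$ with $u_K=\tilde{\beta}_{1K}/(2\alpha_K\beta_1)$ reproduces the first term of the claim. Assembling the three averaged bounds then finishes the proof.

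The main obstacle will be getting the Abel-summation step right: one must verify that the two monotonicities (decreasing $\alpha_k$ giving increasing $u_k$, and Condition \eqref{max_1} giving increasing $s_{k,i}$) combine so that $w_{k,i}$ is genuinely nondecreasing, which is exactly what lets the weight-increments telescope to the single endpoint $w_{K,i}$ rather than accumulating across $k$. One should also handle the random matrices $\mathsf{H}_k$ cleanly by performing the rearrangement inside the expectation, and note the mild point that passing from the exact Abel identity to the stated inequality requires discarding the nonnegative term $-w_{K,i}x_{K+1,i}\le 0$.
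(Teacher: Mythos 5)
Your proposal is correct and follows essentially the same route as the paper: the same decomposition into $a_k+b_k+c_k$, the same monotone weights $u_k=\tilde{\beta}_{1k}/(2\alpha_k\beta_1)$ (the paper's $\gamma_k$), and the same coordinatewise Abel/telescoping rearrangement using Condition \eqref{max_1} to make the weight increments nonnegative, followed by the bound $\hat{v}_{K,i}\leq M/\tilde{\beta}_{2K}$. The paper performs the rearrangement inside the expectation exactly as you suggest, so there is no substantive difference.
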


\begin{proof}
Let $\bm{\theta} \in \mathbb{R}^d$ and 
\begin{align*}
\gamma_k := \frac{\tilde{\beta}_{1k}}{2 \beta_{1} \alpha_k}
\end{align*}
for all $k\in \mathbb{N}$. Since $(\alpha_k)_{k\in\mathbb{N}}$ is monotone decreasing and $\tilde{\beta}_{1k} = 1 - \beta_1^{k+1} \leq 1 - \beta_1^{k+2} = \tilde{\beta}_{1,k+1}$, $(\gamma_k)_{k\in\mathbb{N}}$ is monotone increasing. From the definition of $a_k$ in \eqref{key}, we have that, for all $K \geq 1$,
\begin{align}\label{LAM}
\begin{split}
\sum_{k = 1}^K a_k
&=
\gamma_1 \mathbb{E}\left[ \left\| \bm{\theta}_{1} - \bm{\theta} \right\|_{\mathsf{H}_{1}}^2\right]
+
\underbrace{
\sum_{k=2}^K \left\{
\gamma_k \mathbb{E}\left[ \left\| \bm{\theta}_{k} - \bm{\theta} \right\|_{\mathsf{H}_{k}}^2\right]
-
\gamma_{k-1} \mathbb{E}\left[ \left\| \bm{\theta}_{k} - \bm{\theta} \right\|_{\mathsf{H}_{k-1}}^2\right] 
\right\}
}_{{\Gamma}_K}\\
&\quad-
\gamma_{K} \mathbb{E} \left[ \left\| \bm{\theta}_{K+1} - \bm{\theta} \right\|_{\mathsf{H}_{K}}^2 \right].
\end{split}
\end{align}
Since $\overline{\mathsf{H}}_k \in \mathbb{S}_{++}^d$ exists such that $\mathsf{H}_k = \overline{\mathsf{H}}_k^2$, we have $\|\bm{x}\|_{\mathsf{H}_k}^2 = \| \overline{\mathsf{H}}_k \bm{x} \|^2$ for all $\bm{x}\in\mathbb{R}^d$. Accordingly, we have 
\begin{align*}
{\Gamma}_K 
=
\mathbb{E} \left[ 
\sum_{k=2}^K 
\left\{
\gamma_{k} \left\| \overline{\mathsf{H}}_{k} (\bm{\theta}_{k} - \bm{\theta}) \right\|^2
-
\gamma_{k-1} \left\| \overline{\mathsf{H}}_{k-1} (\bm{\theta}_{k} - \bm{\theta}) \right\|^2
\right\}
\right].
\end{align*}
From $\overline{\mathsf{H}}_{k} = \mathsf{diag}(\hat{v}_{k,i}^{1/4})$, we have that, for all $\bm{x} = (x_i)_{i=1}^d \in \mathbb{R}^d$, $\| \overline{\mathsf{H}}_{k} \bm{x} \|^2 = \sum_{i=1}^d \sqrt{\hat{v}_{k,i}} x_i^2$. Hence, for all $K\geq 2$,
\begin{align}\label{DELTA}
{\Gamma}_K 
= 
\mathbb{E} \left[ 
\sum_{k=2}^K
\sum_{i=1}^d 
\left(
\gamma_{k} \sqrt{\hat{v}_{k,i}}
-
\gamma_{k-1} \sqrt{\hat{v}_{k-1,i}}
\right)
(\theta_{k,i} - \theta_i)^2
\right].
\end{align}
Condition \eqref{max_1} and $\gamma_k \geq \gamma_{k-1}$ ($k \geq 1$) imply that, for all $k \geq 1$ and all $i\in [d]$,
\begin{align*}
\gamma_{k} \sqrt{\hat{v}_{k,i}} - \gamma_{k-1} \sqrt{\hat{v}_{k-1,i}} \geq 0.
\end{align*} 
Moreover, (A2) ensures that $\tilde{D} (\bm{\theta}) := \sup \{ \max_{i\in [d]} (\theta_{k,i} - \theta_i)^2 \colon k \in \mathbb{N} \} < + \infty$. Accordingly, for all $K \geq 2$,
\begin{align*}
{\Gamma}_K
\leq
\tilde{D}(\bm{\theta})
\mathbb{E} \left[ 
\sum_{k=2}^K
\sum_{i=1}^d 
\left(
\gamma_{k}\sqrt{\hat{v}_{k,i}} - \gamma_{k-1} \sqrt{\hat{v}_{k-1,i}}
\right)
\right]
= 
\tilde{D}(\bm{\theta})
\mathbb{E} \left[ 
\sum_{i=1}^d
\left(
\gamma_{K} \sqrt{\hat{v}_{K,i}}
-
\gamma_{1} \sqrt{\hat{v}_{1,i}}
\right)
\right].
\end{align*}
Therefore, (\ref{LAM}), $\mathbb{E} [\| \bm{\theta}_{1} - \bm{\theta}\|_{\mathsf{H}_{1}}^2] \leq \tilde{D}(\bm{\theta}) \mathbb{E} [ \sum_{i=1}^d \sqrt{\hat{v}_{1,i}}]$, and \eqref{v_k} imply, for all $K\geq 1$,
\begin{align}\label{a_K_1}
\begin{split}
\sum_{k=1}^K a_k
&\leq
\gamma_{1} \tilde{D}(\bm{\theta}) \mathbb{E} \left[ 
\sum_{i=1}^d \sqrt{\hat{v}_{1,i}} \right]
+
\tilde{D}(\bm{\theta})
\mathbb{E} \left[
\sum_{i=1}^d 
\left(
\gamma_{K} \sqrt{\hat{v}_{K,i}}
-
\gamma_{1} \sqrt{\hat{v}_{1,i}}
\right)
\right]\\
&=
\gamma_{K} \tilde{D}(\bm{\theta})
\mathbb{E} \left[
\sum_{i=1}^d 
\sqrt{\hat{v}_{K,i}}
\right]\\
&\leq 
{\gamma}_K \tilde{D}(\bm{\theta}) 
\sum_{i=1}^d 
\sqrt{\frac{M}{\tilde{\beta}_{2K}}}\\
&\leq 
\frac{d \tilde{D}(\bm{\theta}) \sqrt{M} \tilde{\beta}_{1K}}{2 \beta_1 \alpha_K \sqrt{\tilde{\beta}_{2K}}}.
\end{split}
\end{align}
Inequality \eqref{B_k} with $\beta_{1k} = \beta_1$ and $\tilde{\beta}_{1k} := 1 - \beta_1^{k+1} \geq 1 - \beta_1 =: \hat{\beta}_1$ implies that 
\begin{align}\label{B_k_1}
b_k 
\leq
\frac{\alpha_k \sqrt{\tilde{\beta}_{2k}}}{2 \sqrt{v_*} \beta_{1k} \tilde{\beta}_{1k}} \left( \frac{\sigma^2}{b} + G^2 \right)
\leq 
\frac{\alpha_k \sqrt{\tilde{\beta}_{2k}}}{2 \sqrt{v_*} \beta_{1} \hat{\beta}_1} \left( \frac{\sigma^2}{b} + G^2 \right).
\end{align}
Inequality \eqref{C_k} with $\beta_{1k} = \beta_1$ implies that 
\begin{align}\label{C_k_1}
c_k 
\leq
D(\bm{\theta}) G \frac{\hat{\beta}_{1k}}{\beta_{1k}}
= 
D(\bm{\theta}) G \frac{\hat{\beta}_{1}}{\beta_{1}}.
\end{align}
Hence, \eqref{key}, \eqref{a_K_1}, \eqref{B_k_1}, and \eqref{C_k_1} ensure that, for all $K\geq 1$,
\begin{align*}
\frac{1}{K}\sum_{k=1}^K \mathbb{E}\left[ (\bm{\theta}_k - \bm{\theta})^\top \bm{m}_{k-1} \right]
\leq
\frac{d \tilde{D}(\bm{\theta}) \sqrt{M} \tilde{\beta}_{1K}}{2 \beta_1 \alpha_K \sqrt{\tilde{\beta}_{2K}}K}
+
\frac{(\sigma^2 b^{-1} + G^2)}{2 \sqrt{v_*} \beta_{1} \hat{\beta}_1 K}
\sum_{k=1}^K \alpha_k \sqrt{\tilde{\beta}_{2k}} 
+
D(\bm{\theta}) G \frac{\hat{\beta}_{1}}{\beta_{1}},
\end{align*}
which completes the proof.
\end{proof}

\begin{lem}\label{lem:3_1}
Suppose that (S1)--(S3) and (A1)--(A2) hold, $\beta_{1k} := \beta_1 \in (0,1)$, and $(\alpha_k)_{k\in\mathbb{N}}$ is monotone decreasing. Then, Adam defined by Algorithm \ref{algo:1} with \eqref{max_1} satisfies the following: for all $K\geq 1$ and all $\bm{\theta} \in \mathbb{R}^d$,
\begin{align*}
\frac{1}{K}\sum_{k=1}^K \mathbb{E}\left[ (\bm{\theta}_k - \bm{\theta})^\top \bm{m}_{k} \right]
&\leq
\frac{d \tilde{D}(\bm{\theta}) \sqrt{M} \tilde{\beta}_{1K}}{2 \beta_1 \alpha_K \sqrt{\tilde{\beta}_{2K}}K}
+
\frac{(\sigma^2 b^{-1} + G^2)}{2 \sqrt{v_*} \beta_{1} \hat{\beta}_1 K}
\sum_{k=1}^K \alpha_k \sqrt{\tilde{\beta}_{2k}} 
+
D(\bm{\theta}) G \frac{\hat{\beta}_{1}}{\beta_{1}}\\
&\quad + 
\hat{\beta}_{1}
D(\bm{\theta}) \left(B + \sqrt{\frac{\sigma^2}{b} + G^2} \right),
\end{align*}
where the parameters are defined as in Lemma \ref{lem:2_1}.
\end{lem}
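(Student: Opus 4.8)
The plan is to derive this bound directly by combining the single-step estimate already established in the proof of Lemma \ref{lem:2_1_1} with the averaged bound of Lemma \ref{lem:2_1}. The crucial observation is that the quantity $(\bm{\theta}_k - \bm{\theta})^\top \bm{m}_k$ differs from $(\bm{\theta}_k - \bm{\theta})^\top \bm{m}_{k-1}$ by exactly the controllable term appearing in inequality \eqref{ineq_1}, namely
\begin{align*}
\mathbb{E}\left[ (\bm{\theta}_k - \bm{\theta})^\top \bm{m}_{k} \right]
\leq
\mathbb{E}\left[ (\bm{\theta}_k - \bm{\theta})^\top \bm{m}_{k-1} \right]
+ \hat{\beta}_{1k} D(\bm{\theta}) \left(B + \sqrt{\frac{\sigma^2}{b} + G^2} \right),
\end{align*}
which was obtained under (S1)--(S3) and (A1)--(A2) using the definition of $\bm{m}_k$, the Cauchy--Schwarz and triangle inequalities, and Lemma \ref{lem:bdd} together with Jensen's inequality. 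This holds for every $k\in\mathbb{N}$, so it can be applied term by term inside the Ces\`aro average.

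First I would sum this inequality over $k\in\{1,2,\ldots,K\}$ and divide by $K$, obtaining
\begin{align*}
\frac{1}{K}\sum_{k=1}^K \mathbb{E}\left[ (\bm{\theta}_k - \bm{\theta})^\top \bm{m}_{k} \right]
\leq
\frac{1}{K}\sum_{k=1}^K \mathbb{E}\left[ (\bm{\theta}_k - \bm{\theta})^\top \bm{m}_{k-1} \right]
+ \frac{1}{K}\sum_{k=1}^K \hat{\beta}_{1k} D(\bm{\theta}) \left(B + \sqrt{\frac{\sigma^2}{b} + G^2} \right).
\end{align*}
Since the hypothesis $\beta_{1k} := \beta_1 \in (0,1)$ makes $\hat{\beta}_{1k} = 1 - \beta_1 = \hat{\beta}_1$ independent of $k$, the second average collapses to the constant $\hat{\beta}_{1} D(\bm{\theta}) (B + \sqrt{\sigma^2/b + G^2})$, which is precisely the extra summand claimed in the statement.

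Next I would bound the first average on the right-hand side by invoking Lemma \ref{lem:2_1}, whose hypotheses -- (S1)--(S3), (A1)--(A2), Condition \eqref{max_1}, $\beta_{1k} = \beta_1$, and $(\alpha_k)_{k\in\mathbb{N}}$ monotone decreasing -- coincide exactly with those assumed here. Substituting its bound
\begin{align*}
\frac{1}{K}\sum_{k=1}^K \mathbb{E}\left[ (\bm{\theta}_k - \bm{\theta})^\top \bm{m}_{k-1} \right]
\leq
\frac{d \tilde{D}(\bm{\theta}) \sqrt{M} \tilde{\beta}_{1K}}{2 \beta_1 \alpha_K \sqrt{\tilde{\beta}_{2K}}K}
+
\frac{(\sigma^2 b^{-1} + G^2)}{2 \sqrt{v_*} \beta_{1} \hat{\beta}_1 K}
\sum_{k=1}^K \alpha_k \sqrt{\tilde{\beta}_{2k}}
+
D(\bm{\theta}) G \frac{\hat{\beta}_{1}}{\beta_{1}}
\end{align*}
and adding the collapsed correction term yields the asserted inequality directly. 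There is essentially no hard step: the argument is a one-line application of the telescoping-free monotonicity in $\bm{m}$ followed by a citation of Lemma \ref{lem:2_1}. The only point requiring minor care is verifying that the monotone-decreasing assumption on $(\alpha_k)$ and Condition \eqref{max_1} are exactly what Lemma \ref{lem:2_1} needs, so that its bound may be quoted verbatim; once that is confirmed, the proof is complete.
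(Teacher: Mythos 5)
Your argument is correct and is exactly the paper's own proof: the paper likewise averages inequality \eqref{ineq_1} (with $\beta_{1k}=\beta_1$ so that the correction term is the constant $\hat{\beta}_{1}D(\bm{\theta})(B+\sqrt{\sigma^2/b+G^2})$) over $k=1,\ldots,K$ and then invokes Lemma \ref{lem:2_1} to bound the resulting average of $\mathbb{E}[(\bm{\theta}_k-\bm{\theta})^\top\bm{m}_{k-1}]$. No gaps; the hypotheses you check are precisely those needed to quote Lemma \ref{lem:2_1} verbatim.
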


\begin{proof}
Let $\bm{\theta} \in \mathbb{R}^d$. Inequality \eqref{ineq_1} with $\beta_{1k} = \beta_1$ implies that, for all $K \geq 1$,
\begin{align*}
\frac{1}{K} \sum_{k=1}^K \mathbb{E} \left[
(\bm{\theta}_k - \bm{\theta})^\top \bm{m}_{k}
\right]
\leq
\frac{1}{K} \sum_{k=1}^K \mathbb{E} \left[
(\bm{\theta}_k - \bm{\theta})^\top \bm{m}_{k-1}
\right]
+ 
\hat{\beta}_{1}
D(\bm{\theta}) \left(B + \sqrt{\frac{\sigma^2}{b} + G^2} \right).
\end{align*}
Hence, Lemma \ref{lem:2_1} leads to Lemma \ref{lem:3_1}.
\end{proof}

\begin{lem}\label{lem:4_1}
Suppose that (S1)--(S3) and (A1)--(A2) hold, $\beta_{1k} := \beta_1 \in (0,1)$, and $(\alpha_k)_{k\in\mathbb{N}}$ is monotone decreasing. Then, Adam defined by Algorithm \ref{algo:1} with \eqref{max_1} satisfies the following: for all $K\geq 1$ and all $\bm{\theta} \in \mathbb{R}^d$,
\begin{align*}
\frac{1}{K}\sum_{k=1}^K \mathbb{E}\left[ (\bm{\theta}_k - \bm{\theta})^\top \nabla f (\bm{\theta}_{k}) \right]
&\leq
\frac{d \tilde{D}(\bm{\theta}) \sqrt{M} \tilde{\beta}_{1K}}{2 \beta_1 \alpha_K \sqrt{\tilde{\beta}_{2K}}K}
+
\frac{(\sigma^2 b^{-1} + G^2)}{2 \sqrt{v_*} \beta_{1} \hat{\beta}_1 K}
\sum_{k=1}^K \alpha_k \sqrt{\tilde{\beta}_{2k}} 
+
D(\bm{\theta}) G \frac{\hat{\beta}_{1}}{\beta_{1}}\\
&\quad + 
\left( \frac{1}{\beta_1} + 2\hat{\beta}_{1} \right)
D(\bm{\theta}) \left(B + \sqrt{\frac{\sigma^2}{b} + G^2} \right),
\end{align*}
where the parameters are defined as in Lemma \ref{lem:2_1}.
\end{lem}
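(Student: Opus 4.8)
The plan is to obtain Lemma \ref{lem:4_1} as the Cesàro average of the pointwise inequality \eqref{ineq_2}, exactly mirroring the way Lemma \ref{lem:3_1} is derived from \eqref{ineq_1} and Lemma \ref{lem:2_1}. The essential observation is that \eqref{ineq_2}, already established inside the proof of Lemma \ref{lem:3}, provides a clean one-step bound of the full-gradient inner product $\mathbb{E}[(\bm{\theta}_k - \bm{\theta})^\top \nabla f(\bm{\theta}_k)]$ in terms of the momentum inner product $\mathbb{E}[(\bm{\theta}_k - \bm{\theta})^\top \bm{m}_k]$ plus a $k$-independent additive term.

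First I would fix $\bm{\theta} \in \mathbb{R}^d$ and specialize \eqref{ineq_2} to the constant choice $\beta_{1k} = \beta_1$, so that $\hat{\beta}_{1k} = \hat{\beta}_1 = 1 - \beta_1$ and the inequality becomes
\begin{align*}
\mathbb{E}\left[(\bm{\theta}_k - \bm{\theta})^\top \nabla f(\bm{\theta}_k)\right]
\leq
\mathbb{E}\left[(\bm{\theta}_k - \bm{\theta})^\top \bm{m}_k\right]
+ \left(\frac{1}{\beta_1} + \hat{\beta}_1\right) D(\bm{\theta})\left(B + \sqrt{\frac{\sigma^2}{b} + G^2}\right).
\end{align*}
Because the additive term does not depend on $k$, summing over $k = 1, \ldots, K$ and dividing by $K$ leaves it unchanged, which gives
\begin{align*}
\frac{1}{K}\sum_{k=1}^K \mathbb{E}\left[(\bm{\theta}_k - \bm{\theta})^\top \nabla f(\bm{\theta}_k)\right]
\leq
\frac{1}{K}\sum_{k=1}^K \mathbb{E}\left[(\bm{\theta}_k - \bm{\theta})^\top \bm{m}_k\right]
+ \left(\frac{1}{\beta_1} + \hat{\beta}_1\right) D(\bm{\theta})\left(B + \sqrt{\frac{\sigma^2}{b} + G^2}\right).
\end{align*}
Next I would invoke Lemma \ref{lem:3_1}, whose hypotheses (constant $\beta_1$, monotone decreasing $(\alpha_k)_{k\in\mathbb{N}}$, and Condition \eqref{max_1}) coincide with those assumed here, to bound the averaged momentum inner product. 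Substituting its right-hand side contributes the three terms $d\tilde{D}(\bm{\theta})\sqrt{M}\tilde{\beta}_{1K}/(2\beta_1\alpha_K\sqrt{\tilde{\beta}_{2K}}K)$, $(\sigma^2 b^{-1} + G^2)/(2\sqrt{v_*}\beta_1\hat{\beta}_1 K)\sum_{k=1}^K \alpha_k\sqrt{\tilde{\beta}_{2k}}$, and $D(\bm{\theta})G\hat{\beta}_1/\beta_1$, together with the extra term $\hat{\beta}_1 D(\bm{\theta})(B + \sqrt{\sigma^2/b + G^2})$. The only remaining bookkeeping is to merge this last term with the one carried over from \eqref{ineq_2}, namely $\hat{\beta}_1 + (1/\beta_1 + \hat{\beta}_1) = 1/\beta_1 + 2\hat{\beta}_1$, which reproduces exactly the coefficient of $D(\bm{\theta})(B + \sqrt{\sigma^2/b + G^2})$ claimed in the statement.

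There is no genuine analytic obstacle at this stage: the heavy lifting — the telescoping argument over $\gamma_k\sqrt{\hat{v}_{k,i}}$ that exploits \eqref{max_1} and the monotonicity of $\gamma_k$ — has already been discharged in Lemma \ref{lem:2_1} and inherited through Lemma \ref{lem:3_1}. The proof is thus a short chaining of \eqref{ineq_2} with Lemma \ref{lem:3_1}, and the single point requiring care is the constant-coefficient arithmetic when combining the two $D(\bm{\theta})(B + \sqrt{\sigma^2/b + G^2})$ contributions into the factor $1/\beta_1 + 2\hat{\beta}_1$.
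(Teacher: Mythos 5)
Your proposal is correct and follows exactly the paper's own route: averaging inequality \eqref{ineq_2} with $\beta_{1k}=\beta_1$ over $k=1,\ldots,K$ and then invoking Lemma \ref{lem:3_1}, with the coefficient bookkeeping $\hat{\beta}_1 + (1/\beta_1 + \hat{\beta}_1) = 1/\beta_1 + 2\hat{\beta}_1$ carried out correctly. No gaps.
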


\begin{proof}
Let $\bm{\theta} \in \mathbb{R}^d$. Inequality \eqref{ineq_2} with $\beta_{1k} = \beta_1$ implies that, for all $K \geq 1$,
\begin{align*}
&\frac{1}{K} \sum_{k=1}^K \mathbb{E}\left[
(\bm{\theta}_k - \bm{\theta})^\top \nabla f(\bm{\theta}_k)
\right]\\
&\leq
\frac{1}{K} \sum_{k=1}^K \mathbb{E}\left[
(\bm{\theta}_k - \bm{\theta})^\top \bm{m}_k
\right]
+ 
\left(\frac{1}{\beta_{1}} + \hat{\beta}_{1} \right)
D(\bm{\theta}) \left(B + \sqrt{\frac{\sigma^2}{b} + G^2} \right),
\end{align*}
which, together with Lemma \ref{lem:3_1}, shows that Lemma \ref{lem:4_1} holds.
\end{proof}

\subsection{Proof of Theorem \ref{c1}}
\begin{proof}
Lemmas \ref{lem:2_1_1} and \ref{lem:3} with 
\begin{align*}
\alpha_k = \alpha, \text{ }
\beta_{1k} = \beta_1, \text{ }
\beta_{2k} = \beta_2, \text{ }
\tilde{\beta}_{1k} = 1 - \beta_1^{k+1},\text{ }
\tilde{\beta}_{2k} = 1 - \beta_2^{k+1}, \text{ }
\hat{\beta}_{1} = 1 - \beta_1
\end{align*}
imply that
\begin{align*}
\mathbb{E}\left[ (\bm{\theta}_k - \bm{\theta}^\star)^\top \bm{m}_{k} \right]
&\leq
\frac{D(\bm{\theta}^\star) M^{\frac{1}{4}}}{{{v}_*^{\frac{1}{4}}} \beta_{1}}
\sqrt{ \frac{\sigma^2}{b} + G^2}
+
\frac{\alpha \sqrt{\tilde{\beta}_{2k}}}{2 \sqrt{v_*} \beta_{1} \tilde{\beta}_{1k}} \left( \frac{\sigma^2}{b} + G^2 \right)
+
D(\bm{\theta}^\star) G \frac{\hat{\beta}_{1}}{\beta_{1}}\\
&\quad + \hat{\beta}_{1} D(\bm{\theta}^\star) \left( B + \sqrt{\frac{\sigma^2}{b} + G^2} \right),\\
\mathbb{E}\left[
(\bm{\theta}_k - \bm{\theta})^\top \nabla L(\bm{\theta}_k)
\right]
&\leq
\frac{D(\bm{\theta}) M^{\frac{1}{4}}}{{{v}_*^{\frac{1}{4}}} \beta_{1}}
\sqrt{ \frac{\sigma^2}{b} + G^2}
+
\frac{\alpha \sqrt{\tilde{\beta}_{2k}}}{2 \sqrt{v_*} \beta_{1} \tilde{\beta}_{1k}} \left( \frac{\sigma^2}{b} + G^2 \right)\\
&\quad +
D(\bm{\theta}) G \frac{\hat{\beta}_{1}}{\beta_{1}}
+ 
D(\bm{\theta})\left(\frac{1}{\beta_{1}} + 2 \hat{\beta}_{1} \right)
\left(B + \sqrt{\frac{\sigma^2}{b} + G^2} \right),
\end{align*}
which completes the proof.
\end{proof}

\subsection{Proof of Corollary \ref{cor:1}}
\begin{proof}
The sequences $(\tilde{\beta}_{1k})_{k\in\mathbb{N}}$ and $(\tilde{\beta}_{2k})_{k\in\mathbb{N}}$ converge to $1$. Theorem \ref{c1} thus leads to Corollary \ref{cor:1}.
\end{proof}

\subsection{Proof of Theorem \ref{d1}}
\begin{proof}
Lemmas \ref{lem:2_1_1} and \ref{lem:3} with 
\begin{align*}
&\alpha_k = \frac{1}{k^a}, \text{ }
\beta_{1k} = 1 - \frac{1}{k^{b_1}}, \text{ }
\beta_{2k} = \left( 1 - \frac{1}{k^{b_2}} \right)^{\frac{1}{k+1}}, \text{ }
\tilde{\beta}_{1k} = 1 - \beta_{1k}^{k+1} \geq 1 - \beta_{1k},\text{ }
\tilde{\beta}_{2k} = 1 - \beta_{2k}^{k+1},\\
&\hat{\beta}_{1k} = 1 - \beta_{1k}
\end{align*}
imply that
\begin{align*}
\mathbb{E}\left[ (\bm{\theta}_k - \bm{\theta})^\top \bm{m}_{k} \right]
&\leq
\frac{D(\bm{\theta}) M^{\frac{1}{4}}}{{{v}_*^{\frac{1}{4}}} \beta_{1k}}
\sqrt{ \frac{\sigma^2}{b} + G^2}
+
\frac{\alpha_k \sqrt{\tilde{\beta}_{2k}}}{2 \sqrt{v_*} \beta_{1k} \tilde{\beta}_{1k}} \left( \frac{\sigma^2}{b} + G^2 \right)
+
D(\bm{\theta}) G \frac{\hat{\beta}_{1k}}{\beta_{1k}}\\
&\quad + \hat{\beta}_{1k} D(\bm{\theta}) \left( B + \sqrt{\frac{\sigma^2}{b} + G^2} \right)\\
&\leq
\frac{D(\bm{\theta}^\star) M^{\frac{1}{4}} k^{b_1}}{v_*^{\frac{1}{4}}(k^{b_1} - 1)}\sqrt{ \frac{\sigma^2}{b} + G^2}
+
\frac{1}{2 \sqrt{v_*} (k^{b_1} - 1) k^{a +\frac{b_2}{2} -2 b_1}} \left( \frac{\sigma^2}{b} + G^2 \right)\\
&\quad + \frac{1}{k^{b_1} -1} D(\bm{\theta}^\star) G
+ \frac{1}{k^{b_1}} D(\bm{\theta}^\star) \left( B + \sqrt{ \frac{\sigma^2}{b} + G^2} \right),
\end{align*}
\begin{align*}
\mathbb{E}\left[
(\bm{\theta}_k - \bm{\theta})^\top \nabla f(\bm{\theta}_k)
\right]
&\leq
\frac{D(\bm{\theta}) M^{\frac{1}{4}}}{{{v}_*^{\frac{1}{4}}} \beta_{1k}}
\sqrt{ \frac{\sigma^2}{b} + G^2}
+
\frac{\alpha_k \sqrt{\tilde{\beta}_{2k}}}{2 \sqrt{v_*} \beta_{1k} \tilde{\beta}_{1k}} \left( \frac{\sigma^2}{b} + G^2 \right)
+
D(\bm{\theta}) G \frac{\hat{\beta}_{1k}}{\beta_{1k}}\\
&\quad + 
D(\bm{\theta})\left(\frac{1}{\beta_{1k}} + 2 \hat{\beta}_{1k} \right)
\left(B + \sqrt{\frac{\sigma^2}{b} + G^2} \right)\\
&\leq
\frac{D(\bm{\theta}) M^{\frac{1}{4}} k^{b_1}}{v_*^{\frac{1}{4}}(k^{b_1} - 1)}\sqrt{ \frac{\sigma^2}{b} + G^2}
+
\frac{1}{2 \sqrt{v_*} (k^{b_1} - 1) k^{a +\frac{b_2}{2} -2 b_1}} \left( \frac{\sigma^2}{b} + G^2 \right)\\
&\quad + \frac{1}{k^{b_1} -1} D(\bm{\theta}) G 
+ \frac{1}{k^{b_1}} D(\bm{\theta}) \left( B + \sqrt{ \frac{\sigma^2}{b} + G^2} \right)\\
&\quad +
\frac{k^{{b_1}^2} + 2 k^{b_1} - 2}{k^{b_1} (k^{b_1} - 1)}
D(\bm{\theta}) \left(B + \sqrt{\frac{\sigma^2}{b} + G^2} \right),\end{align*}
which completes the proof.
\end{proof}

\subsection{Proof of Corollary \ref{cor:2}}
\begin{proof}
Since $a - b_1 + b_2/2 > 0$, we have that 
\begin{align*}
\frac{1}{(k^{b_1} - 1) k^{a +\frac{b_2}{2} -2 b_1}}
= 
\frac{1}{k^{a +\frac{b_2}{2} - b_1}- k^{a +\frac{b_2}{2} -2 b_1}}
=
\frac{1}{k^{a +\frac{b_2}{2} - b_1}}\left({1 - \frac{1}{k^{b_1}}}\right)^{-1} \to 0.
\end{align*}
Theorem \ref{d1} thus leads to Corollary \ref{cor:2}.
\end{proof}

\subsection{Proof of Theorem \ref{c2}}
\begin{proof}
Lemmas \ref{lem:3_1} and \ref{lem:4_1} with 
\begin{align*}
\alpha_k = \alpha, \text{ }
\beta_{1k} = \beta_1, \text{ }
\beta_{2k} = \beta_2, \text{ }
\tilde{\beta}_{1k} = 1 - \beta_1^{k+1},\text{ }
\tilde{\beta}_{2k} = 1 - \beta_2^{k+1} \leq 1, \text{ }
\hat{\beta}_{1} = 1 - \beta_1
\end{align*}
ensure that
\begin{align*}
\frac{1}{K}\sum_{k=1}^K \mathbb{E}\left[ (\bm{\theta}_k - \bm{\theta}^\star)^\top \bm{m}_{k} \right]
&\leq
\frac{d \tilde{D}(\bm{\theta}^\star) \sqrt{M} \tilde{\beta}_{1K}}{2 \beta_1 \alpha \sqrt{\tilde{\beta}_{2K}}K}
+
\frac{(\sigma^2 b^{-1} + G^2)}{2 \sqrt{v_*} \beta_{1} \hat{\beta}_1 K}
\sum_{k=1}^K \alpha \sqrt{\tilde{\beta}_{2k}} 
+
D(\bm{\theta}^\star) G \frac{\hat{\beta}_{1}}{\beta_{1}}\\
&\quad + 
\hat{\beta}_{1}
D(\bm{\theta}^\star) \left(B + \sqrt{\frac{\sigma^2}{b} + G^2} \right)\\
&\leq
\frac{d \tilde{D}(\bm{\theta}^\star) \sqrt{M} \tilde{\beta}_{1K}}{2 \beta_1 \alpha \sqrt{\tilde{\beta}_{2K}}K}
+
\frac{(\sigma^2 b^{-1} + G^2)}{2 \sqrt{v_*} \beta_{1} \hat{\beta}_1}
\alpha 
+
D(\bm{\theta}^\star) G \frac{\hat{\beta}_{1}}{\beta_{1}}\\
&\quad + 
\hat{\beta}_{1}
D(\bm{\theta}^\star) \left(B + \sqrt{\frac{\sigma^2}{b} + G^2} \right)
\end{align*}
and that
\begin{align*}
\frac{1}{K}\sum_{k=1}^K \mathbb{E}\left[ (\bm{\theta}_k - \bm{\theta})^\top \nabla f (\bm{\theta}_{k}) \right]
&\leq
\frac{d \tilde{D}(\bm{\theta}) \sqrt{M} \tilde{\beta}_{1K}}{2 \beta_1 \alpha \sqrt{\tilde{\beta}_{2K}}K}
+
\frac{(\sigma^2 b^{-1} + G^2)}{2 \sqrt{v_*} \beta_{1} \hat{\beta}_1 K}
\sum_{k=1}^K \alpha \sqrt{\tilde{\beta}_{2k}} 
+
D(\bm{\theta}) G \frac{\hat{\beta}_{1}}{\beta_{1}}\\
&\quad + 
\left( \frac{1}{\beta_1} + 2\hat{\beta}_{1} \right)
D(\bm{\theta}) \left(B + \sqrt{\frac{\sigma^2}{b} + G^2} \right)\\
&\leq
\frac{d \tilde{D}(\bm{\theta}) \sqrt{M} \tilde{\beta}_{1K}}{2 \beta_1 \alpha \sqrt{\tilde{\beta}_{2K}}K}
+
\frac{(\sigma^2 b^{-1} + G^2)}{2 \sqrt{v_*} \beta_{1} \hat{\beta}_1}
\alpha 
+
D(\bm{\theta}) G \frac{\hat{\beta}_{1}}{\beta_{1}}\\
&\quad + 
\left( \frac{1}{\beta_1} + 2\hat{\beta}_{1} \right)
D(\bm{\theta}) \left(B + \sqrt{\frac{\sigma^2}{b} + G^2} \right), 
\end{align*}
which completes the proof.
\end{proof}

\subsection{Proof of Theorem \ref{c3}}
\begin{proof}
Let
\begin{align*}
\alpha_k = \alpha, \text{ }
\beta_{1k} = \beta_1, \text{ }
\beta_{2k} = \left( 1 - \frac{1}{k^{b_2}} \right)^{\frac{1}{k+1}}, \text{ }
\tilde{\beta}_{1k} = 1 - \beta_1^{k+1} \leq 1,\text{ }
\tilde{\beta}_{2k} = 1 - \beta_{2k}^{k+1}, \text{ }
\hat{\beta}_{1} = 1 - \beta_1,
\end{align*}
where $b_2 \in (0,2)$. We have that 
\begin{align*}
\sqrt{\tilde{\beta}_{2k}} = \sqrt{1 - \beta_{2k}^{k+1}}
= \sqrt{\frac{1}{k^{b_2}}}.
\end{align*}
Lemma \ref{lem:3_1} ensures that
\begin{align*}
\frac{1}{K}\sum_{k=1}^K \mathbb{E}\left[ (\bm{\theta}_k - \bm{\theta}^\star)^\top \bm{m}_{k} \right]
&\leq
\frac{d \tilde{D}(\bm{\theta}^\star) \sqrt{M} \tilde{\beta}_{1K}}{2 \beta_1 \alpha \sqrt{\tilde{\beta}_{2K}}K}
+
\frac{(\sigma^2 b^{-1} + G^2)}{2 \sqrt{v_*} \beta_{1} \hat{\beta}_1 K}
\sum_{k=1}^K \alpha \sqrt{\tilde{\beta}_{2k}} 
+
D(\bm{\theta}^\star) G \frac{\hat{\beta}_{1}}{\beta_{1}}\\
&\quad + 
\hat{\beta}_{1}
D(\bm{\theta}^\star) \left(B + \sqrt{\frac{\sigma^2}{b} + G^2} \right)\\
&\leq
\frac{d \tilde{D}(\bm{\theta}^\star) \sqrt{M}}{2 \beta_1 \alpha K^{1-\frac{b_2}{2}}}
+
\frac{(\sigma^2 b^{-1} + G^2)\alpha}{2 \sqrt{v_*} \beta_{1} \hat{\beta}_1 K}
\sum_{k=1}^K \frac{1}{k^{\frac{b_2}{2}}}
+
D(\bm{\theta}^\star) G \frac{\hat{\beta}_{1}}{\beta_{1}}\\
&\quad + 
\hat{\beta}_{1}
D(\bm{\theta}^\star) \left(B + \sqrt{\frac{\sigma^2}{b} + G^2} \right).
\end{align*}
We also have that 
\begin{align}\label{int}
\begin{split}
\frac{1}{K} \sum_{k=1}^K \frac{1}{k^{\frac{b_2}{2}}}
&\leq
\frac{1}{K} \left( 
1 + \int_1^K \frac{\mathrm{d}t}{t^{\frac{b_2}{2}}}
\right)
= 
\frac{1}{K} \left\{ 
1 + \left[ \left( 1 - \frac{b_2}{2} \right) t^{ 1 - \frac{b_2}{2}} \right]_1^K
\right\}\\
&\leq
\frac{1}{K} \left\{
1 
+ \left( 1 - \frac{b_2}{2} \right)K^{ 1 - \frac{b_2}{2}}
\right\}
\leq
\frac{2}{K}K^{ 1 - \frac{b_2}{2}} 
= 
\frac{2}{K^{\frac{b_2}{2}}}.
\end{split}
\end{align}
Hence, 
\begin{align*}
\frac{1}{K}\sum_{k=1}^K \mathbb{E}\left[ (\bm{\theta}_k - \bm{\theta}^\star)^\top \bm{m}_{k} \right]
&\leq
\frac{d \tilde{D}(\bm{\theta}^\star) \sqrt{M}}{2 \beta_1 \alpha K^{1-\frac{b_2}{2}}}
+
\frac{(\sigma^2 b^{-1} + G^2)\alpha}{\sqrt{v_*} \beta_{1} \hat{\beta}_1
K^{\frac{b_2}{2}}}
+
D(\bm{\theta}^\star) G \frac{\hat{\beta}_{1}}{\beta_{1}}\\
&\quad + 
\hat{\beta}_{1}
D(\bm{\theta}^\star) \left(B + \sqrt{\frac{\sigma^2}{b} + G^2} \right).
\end{align*}
A discussion similar to the one showing the above inequality and Lemma \ref{lem:4_1} imply that 
\begin{align*}
\frac{1}{K} \sum_{k=1}^K
\mathbb{E}\left[
\nabla f(\bm{\theta}_k)^\top (\bm{\theta}_k - \bm{\theta}) 
\right]
&\leq
\frac{d \tilde{D}(\bm{\theta}) \sqrt{M}}{2 \alpha \beta_1 K^{1 - \frac{b_2}{2}}}
+
\frac{\alpha}{\sqrt{v_*} \beta_{1} (1-\beta_1) K^{\frac{b_2}{2}}}\left(\frac{\sigma^2}{b} + G^2 \right)\\ 
&\quad
+
\frac{1 - \beta_{1}}{\beta_{1}} D(\bm{\theta}) G
+
\left( \frac{1}{\beta_{1}} + 2 (1 - \beta_{1}) \right)
D(\bm{\theta})
\left( B + \sqrt{\frac{\sigma^2}{b} + G^2} \right), 
\end{align*}
which completes the proof.
\end{proof}

\subsection{Proof of Theorem \ref{d2}}
\begin{proof}
Let
\begin{align*}
\alpha_k = \frac{1}{k^a}, \text{ }
\beta_{1k} = \beta_1, \text{ }
\beta_{2k} = \beta_2, \text{ }
\tilde{\beta}_{1k} = 1 - \beta_1^{k+1} \leq 1,\text{ }
\tilde{\beta}_{2k} = 1 - \beta_2^{k+1} \leq 1, \text{ }
\hat{\beta}_{1} = 1 - \beta_1.
\end{align*}
We have that $\tilde{\beta}_{2k} = 1 - \beta_{2k}^{k+1} \geq 1 - \beta_2$. Lemma \ref{lem:3_1} ensures that
\begin{align*}
\frac{1}{K}\sum_{k=1}^K \mathbb{E}\left[ (\bm{\theta}_k - \bm{\theta}^\star)^\top \bm{m}_{k} \right]
&\leq
\frac{d \tilde{D}(\bm{\theta}^\star) \sqrt{M} \tilde{\beta}_{1K}}{2 \beta_1 \alpha_K \sqrt{\tilde{\beta}_{2K}}K}
+
\frac{(\sigma^2 b^{-1} + G^2)}{2 \sqrt{v_*} \beta_{1} \hat{\beta}_1 K}
\sum_{k=1}^K \alpha_k \sqrt{\tilde{\beta}_{2k}} 
+
D(\bm{\theta}^\star) G \frac{\hat{\beta}_{1}}{\beta_{1}}\\
&\quad + 
\hat{\beta}_{1}
D(\bm{\theta}^\star) \left(B + \sqrt{\frac{\sigma^2}{b} + G^2} \right)\\
&\leq
\frac{d \tilde{D}(\bm{\theta}^\star) \sqrt{M}}{2 \beta_1 \sqrt{1-\beta_2}K^{1-a}}
+
\frac{(\sigma^2 b^{-1} + G^2)}{2 \sqrt{v_*} \beta_{1} \hat{\beta}_1 K}
\sum_{k=1}^K \frac{1}{k^a} 
+
D(\bm{\theta}^\star) G \frac{\hat{\beta}_{1}}{\beta_{1}}\\
&\quad + 
\hat{\beta}_{1}
D(\bm{\theta}^\star) \left(B + \sqrt{\frac{\sigma^2}{b} + G^2} \right),
\end{align*}
which, together with \eqref{int}, implies that 
\begin{align*}
\frac{1}{K}\sum_{k=1}^K \mathbb{E}\left[ (\bm{\theta}_k - \bm{\theta}^\star)^\top \bm{m}_{k} \right]
&\leq
\frac{d \tilde{D}(\bm{\theta}^\star) \sqrt{M}}{2 \beta_1 \sqrt{1-\beta_2}K^{1-a}}
+
\frac{(\sigma^2 b^{-1} + G^2)}{2 \sqrt{v_*} \beta_{1} \hat{\beta}_1 K^a} 
+
D(\bm{\theta}^\star) G \frac{\hat{\beta}_{1}}{\beta_{1}}\\
&\quad + 
\hat{\beta}_{1}
D(\bm{\theta}^\star) \left(B + \sqrt{\frac{\sigma^2}{b} + G^2} \right).
\end{align*}
A discussion similar to the one showing the above inequality and Lemma \ref{lem:4_1} implies the second assertion in Theorem \ref{d2}.
\end{proof}

\subsection{Proof of Theorem \ref{d3}}
\begin{proof}
The proofs of Theorems \ref{c3} and \ref{d2} lead to Theorem \ref{d3}.
\end{proof}

\vskip 0.2in

\begin{thebibliography}{10}
\providecommand{\url}[1]{{#1}}
\providecommand{\urlprefix}{URL }
\expandafter\ifx\csname urlstyle\endcsname\relax
  \providecommand{\doi}[1]{DOI~\discretionary{}{}{}#1}\else
  \providecommand{\doi}{DOI~\discretionary{}{}{}\begingroup
  \urlstyle{rm}\Url}\fi

\bibitem{ar2017}
Arjovsky, M., Chintala, S., Bottou, L.: Wasserstein {GAN}
  \textbf{\url{https://arxiv.org/pdf/1701.07875.pdf}} (2017)

\bibitem{bottou}
Bottou, L., Curtis, F.E., Nocedal, J.: Optimization methods for large-scale
  machine learning.
\newblock SIAM Review \textbf{60}, 223--311 (2018)

\bibitem{chen2020}
Chen, H., Zheng, L., AL~Kontar, R., Raskutti, G.: Stochastic gradient descent
  in correlated settings: {A} study on {G}aussian processes.
\newblock In: Advances in Neural Information Processing Systems, vol.~33 (2020)

\bibitem{chen2020_1}
Chen, J., Zhou, D., Tang, Y., Yang, Z., Cao, Y., Gu, Q.: Closing the
  generalization gap of adaptive gradient methods in training deep neural
  network.
\newblock In: Proceedings of the Twenty-Ninth International Joint Conference on
  Artificial Intelligence, vol. 452, pp. 3267--3275 (2021)

\bibitem{chen2019}
Chen, X., Liu, S., Sun, R., Hong, M.: On the convergence of a class of
  {A}dam-type algorithms for non-convex optimization.
\newblock In: Proceedings of The International Conference on Learning
  Representations (2019)

\bibitem{adagrad}
Duchi, J., Hazan, E., Singer, Y.: Adaptive subgradient methods for online
  learning and stochastic optimization.
\newblock Journal of Machine Learning Research \textbf{12}, 2121--2159 (2011)

\bibitem{feh2020}
Fehrman, B., Gess, B., Jentzen, A.: Convergence rates for the stochastic
  gradient descent method for non-convex objective functions.
\newblock Journal of Machine Learning Research \textbf{21}, 1--48 (2020)

\bibitem{gha2012}
Ghadimi, S., Lan, G.: Optimal stochastic approximation algorithms for strongly
  convex stochastic composite optimization {I}: {A} generic algorithmic
  framework.
\newblock SIAM Journal on Optimization \textbf{22}, 1469--1492 (2012)

\bibitem{gha2013}
Ghadimi, S., Lan, G.: Optimal stochastic approximation algorithms for strongly
  convex stochastic composite optimization {II}: {S}hrinking procedures and
  optimal algorithms.
\newblock SIAM Journal on Optimization \textbf{23}, 2061--2089 (2013)

\bibitem{gower2021}
Gower, R.M., Sebbouh, O., Loizou, N.: {SGD} for structured nonconvex functions:
  Learning rates, minibatching and interpolation.
\newblock In: Proceedings of the 24th International Conference on Artificial
  Intelligence and Statistics, vol. 130 (2021)

\bibitem{horn}
Horn, R.A., Johnson, C.R.: Matrix Analysis.
\newblock Cambridge University Press, Cambridge (1985)

\bibitem{iiduka2021}
Iiduka, H.: Appropriate learning rates of adaptive learning rate optimization
  algorithms for training deep neural networks.
\newblock IEEE Transactions on Cybernetics \textbf{DOI:
  10.1109/TCYB.2021.3107415} (2021)

\bibitem{adam}
Kingma, D.P., Ba, J.: Adam: A method for stochastic optimization.
\newblock In: Proceedings of The International Conference on Learning
  Representations (2015)

\bibitem{loizou2021}
Loizou, N., Vaswani, S., Laradji, I., Lacoste-Julien, S.: Stochastic polyak
  step-size for {SGD}: {A}n adaptive learning rate for fast convergence.
\newblock In: Proceedings of the 24th International Conference on Artificial
  Intelligence and Statistics, vol. 130 (2021)

\bibitem{loshchilov2018decoupled}
Loshchilov, I., Hutter, F.: Decoupled weight decay regularization.
\newblock In: International Conference on Learning Representations (2019)

\bibitem{luo2019}
Luo, L., Xiong, Y., Liu, Y., Sun, X.: Adaptive gradient methods with dynamic
  bound of learning rate.
\newblock In: Proceedings of The International Conference on Learning
  Representations (2019)

\bibitem{dun2020}
Mendler-D\"{u}nner, C., Perdomo, J.C., Zrnic, T., Hardt, M.: Stochastic
  optimization for performative prediction.
\newblock In: Advances in Neural Information Processing Systems, vol.~33 (2020)

\bibitem{nem2009}
Nemirovski, A., Juditsky, A., Lan, G., Shapiro, A.: Robust stochastic
  approximation approach to stochastic programming.
\newblock SIAM Journal on Optimization \textbf{19}, 1574--1609 (2009)

\bibitem{nest1983}
Nesterov, Y.: A method for unconstrained convex minimization problem with the
  rate of convergence ${O}(1/k^2)$.
\newblock Doklady AN USSR \textbf{269}, 543--547 (1983)

\bibitem{polyak1964}
Polyak, B.T.: Some methods of speeding up the convergence of iteration methods.
\newblock USSR Computational Mathematics and Mathematical Physics \textbf{4},
  1--17 (1964)

\bibitem{reddi2018}
Reddi, S.J., Kale, S., Kumar, S.: On the convergence of {A}dam and beyond.
\newblock In: Proceedings of The International Conference on Learning
  Representations (2018)

\bibitem{robb1951}
Robbins, H., Monro, H.: A stochastic approximation method.
\newblock The Annals of Mathematical Statistics \textbf{22}, 400--407 (1951)

\bibitem{sca2020}
Scaman, K., Malherbe, C.: Robustness analysis of non-convex stochastic gradient
  descent using biased expectations.
\newblock In: Advances in Neural Information Processing Systems, vol.~33 (2020)

\bibitem{shallue2019}
Shallue, C.J., Lee, J., Antognini, J., Sohl-Dickstein, J., Frostig, R., Dahl,
  G.E.: Measuring the effects of data parallelism on neural network training.
\newblock Journal of Machine Learning Research \textbf{20}, 1--49 (2019)

\bibitem{l.2018dont}
Smith, S.L., Kindermans, P.J., Le, Q.V.: Don't decay the learning rate,
  increase the batch size.
\newblock In: Proceedings of The International Conference on Learning
  Representations (2018)

\bibitem{rmsprop}
Tieleman, T., Hinton, G.: {RMSP}rop: {D}ivide the gradient by a running average
  of its recent magnitude.
\newblock {COURSERA}: {N}eural networks for machine learning \textbf{4}, 26--31
  (2012)

\bibitem{vas2017}
Vaswani, A., Shazeer, N., Parmar, N., Uszkoreit, J., Jones, L., Gomez, A.N.,
  Kaiser, L., Polosukhin, I.: Attention is {A}ll you {N}eed.
\newblock In: Advances in Neural Information Processing Systems, vol.~30 (2017)

\bibitem{NEURIPS2018_d54e99a6}
Virmaux, A., Scaman, K.: Lipschitz regularity of deep neural networks: analysis
  and efficient estimation.
\newblock In: Advances in Neural Information Processing Systems, vol.~31 (2018)

\bibitem{kxu2015}
Xu, K., Ba, J., Kiros, R., Cho, K., Courville, A., Salakhudinov, R., Zemel, R.,
  Bengio, Y.: Show, attend and tell: {N}eural image caption generation with
  visual attention.
\newblock In: Proceedings of the 32nd International Conference on Machine
  Learning, vol.~37, pp. 2048--2057 (2015)

\bibitem{NEURIPS2018_90365351}
Zaheer, M., Reddi, S., Sachan, D., Kale, S., Kumar, S.: Adaptive methods for
  nonconvex optimization.
\newblock In: S.~Bengio, H.~Wallach, H.~Larochelle, K.~Grauman,
  N.~Cesa-Bianchi, R.~Garnett (eds.) Advances in Neural Information Processing
  Systems, vol.~31. Curran Associates, Inc. (2018)

\bibitem{zhang2019}
Zhang, G., Li, L., Nado, Z., Martens, J., Sachdeva, S., Dahl, G.E., Shallue,
  C.J., Grosse, R.: Which algorithmic choices matter at which batch sizes?
  {I}nsights from a noisy quadratic model.
\newblock In: Advances in Neural Information Processing Systems, vol.~32 (2019)

\bibitem{opt2020}
Zhou, D., Chen, J., Cao, Y., Tang, Y., Yang, Z., Gu, Q.: On the convergence of
  adaptive gradient methods for nonconvex optimization.
\newblock In: 12th Annual Workshop on Optimization for Machine Learning (2020)

\bibitem{ada}
Zhuang, J., Tang, T., Ding, Y., Tatikonda, S., Dvornek, N., Papademetris, X.,
  Duncan, J.S.: Ada{B}elief optimizer: {A}dapting stepsizes by the belief in
  observed gradients.
\newblock In: Advances in Neural Information Processing Systems, vol.~33 (2020)

\bibitem{zinkevich2003}
Zinkevich, M.: Online convex programming and generalized infinitesimal gradient
  ascent.
\newblock In: Proceedings of the 20th International Conference on Machine
  Learning, pp. 928--936 (2003)

\bibitem{zin2010}
Zinkevich, M., Weimer, M., Li, L., Smola, A.: Parallelized stochastic gradient
  descent.
\newblock In: Advances in Neural Information Processing Systems, vol.~23 (2010)

\bibitem{cvpr2019}
Zou, F., Shen, L., Jie, Z., Zhang Weizhong, W.L.: A sufficient condition for
  convergences of {A}dam and {RMSP}rop.
\newblock In: Computer Vision and Pattern Recognition Conference, pp.
  11,127--11,135 (2019)

\end{thebibliography}

\end{document}